\def\eqref#1{equation~\ref{#1}}
\def\1{\bm{1}}
\DeclareMathAlphabet{\mathsfit}{\encodingdefault}{\sfdefault}{m}{sl}
\SetMathAlphabet{\mathsfit}{bold}{\encodingdefault}{\sfdefault}{bx}{n}
\DeclareMathOperator*{\argmax}{arg\,max}
\DeclareMathOperator*{\argmin}{arg\,min}
\newtheorem*{rep@theorem}{\rep@title}
\newcommand{\newreptheorem}[2]{%
\newenvironment{rep#1}[1]{%
 \def\rep@title{#2 \ref{##1}}%
 \begin{rep@theorem}}%
 {\end{rep@theorem}}}
\theoremstyle{plain}
\newtheorem{theorem}{Theorem}
\newtheorem{definition}{Definition}
\newtheorem{lemma}{Lemma}
\newtheorem{assumption}{Assumption}
\renewcommand{\vec}[1]{\mathbf{#1}}
\newcommand{\x}{\mathbf{x}}
\newcommand{\W}{\mathbf{W}}
\newcommand{\bias}{\mathbf{b}}
\newcommand{\solnet}{u_\theta}
\newcommand{\resnet}{f_\theta}
\newcommand{\preout}{y}
\newcommand{\out}{z}
\newcommand{\xspatial}{\hat{\mathbf{x}}}
\newcommand{\partialcrown}{\mbox{$\partial$-CROWN}\xspace}
\newcommand{\schrodinger}{Schr\"{o}dinger}
\newcommand{\sech}{\text{sech}}
\newcommand{\ie}{\textit{i.e.}}
\newcommand{\lin}[4]{\mathbb{L}_{\mathbf{#1}, \mathbf{#2}}^{(#3)#4}}
\newcolumntype{P}[1]{>{\centering\arraybackslash}p{#1}}
\newcommand*\circled[1]{\protect\tikz[baseline=(char.base)]{
            \node[shape=circle,draw,inner sep=1pt] (char) {#1};}}
\icmltitlerunning{Efficient Error Certification for Physics-Informed Neural Networks}
\begin{document}

\twocolumn[
\icmltitle{Efficient Error Certification for Physics-Informed Neural Networks}

\begin{icmlauthorlist}
\icmlauthor{Francisco Eiras}{oxford}
\icmlauthor{Adel Bibi}{oxford}
\icmlauthor{Rudy Bunel}{dm}
\icmlauthor{Krishnamurthy Dj Dvijotham}{dm}
\icmlauthor{Philip H.S. Torr}{oxford}
\icmlauthor{M. Pawan Kumar}{dm}
\end{icmlauthorlist}

\icmlaffiliation{oxford}{University of Oxford}
\icmlaffiliation{dm}{Google DeepMind}

\icmlcorrespondingauthor{FE}{eiras@robots.ox.ac.uk}

\icmlkeywords{Machine Learning, ICML}

\vskip 0.3in
]

\printAffiliationsAndNotice{}

\begin{abstract}
Recent work provides promising evidence that Physics-Informed Neural Networks (PINN) can efficiently solve partial differential equations (PDE). However, previous works have failed to provide guarantees on the \textit{worst-case} residual error of a PINN across the spatio-temporal domain -- a measure akin to the tolerance of numerical solvers -- focusing instead on point-wise comparisons between their solution and the ones obtained by a solver on a set of inputs. In real-world applications, one cannot consider tests on a finite set of points to be sufficient grounds for deployment, as the performance could be substantially worse on a different set.
To alleviate this issue, we establish guaranteed error-based conditions for PINNs over their \textit{continuous} applicability domain. To verify the extent to which they hold, we introduce \partialcrown: a general, efficient and scalable post-training framework to bound PINN residual errors. We demonstrate its effectiveness in obtaining tight certificates by applying it to two classically studied PINNs -- Burgers' and \schrodinger's equations --, and two more challenging ones with real-world applications -- the Allan-Cahn and Diffusion-Sorption equations.
\end{abstract}

\section{Introduction}
Accurately predicting the evolution of complex systems through simulation is a difficult, yet necessary, process in the physical sciences. Many of these systems are represented by partial differential equations (PDE) the solutions of which, while well understood, pose a major computational challenge to solve at an appropriate spatio-temporal resolution~\citep{raissi2019physics,kochkovlearning}. Inspired by the success of machine learning in other domains, recent work has attempted to overcome the aforementioned challenge through \textit{physics-informed neural networks} (PINN)~\citep{raissi2019physics,sun2020surrogate,pang2019fpinns}. For example, the Diffusion-Sorption equation -- which has real-world applications in the modeling of groundwater contaminant transport -- takes 59.83s to solve per inference point using a classical PDE solver, while inference in its PINN version from~\citet{takamoto2022pdebench} takes only $2.7\times 10^{-3}$s, a speed-up of more than $10^4$ times.

The parameters of a PINN are estimated by minimizing the residual of the given PDE, together with its initial and boundary conditions, over a set of spatio-temporal training inputs. Its accuracy is then empirically estimated by measuring the solution estimate over a set of discrete input points, and (typically) comparing them to numerical PDE solvers. In other words, most current work on PINNs provides no certified error bounds applicable for \textit{every} input within the domain of the PDE. 

While testing on a finite set of points provides a good initial signal on the accuracy of the PINN, such an approach cannot be relied upon in practice, and error certification is needed to understand the quality of the PINN trained~\citep{hillebrecht2022certified}. For example, by estimating the maximum residual error of the Diffusion-Sorption PINN from~\citet{takamoto2022pdebench} using $10^4$ Monte Carlo samples across the domain we obtain an estimate of $1.1\times 10^{-3}$, whereas the estimate using $10^6$ samples is $21.09$ -- indicating the PINN has failed to learn a continuous function that correctly maps to the solution of the underlying PDE. This empirical difference shows the need for computing certified error bounds to avoid deploying poorly trained PINNs.

We introduce formal, error-based \textit{correctness} conditions for PINNs which require that the residual error is \textit{globally} upper bounded by a tolerance parameter, that is, that the continuous function learned approximates the underlying PDE solution across the domain.
To compute this bound and verify the correctness conditions, we build on recent progress in neural network verification. Specifically, we efficiently extend the CROWN framework~\citep{zhang2018efficient} by deriving linear upper and lower bounds for the various nonlinear terms that appear in PINNs, and devise a novel bound propagation strategy for the task at hand. 

Our contributions are threefold. \textbf{(i)} We formally define correctness conditions for general PINNs that approximate continuous solutions of PDEs. \textbf{(ii)} We introduce a general, efficient, and scalable post-training \textit{error certification framework} (\mbox{\partialcrown}) to theoretically verify PINNs over their entire spatio-temporal domains. \textbf{(iii)} We demonstrate our post-training framework on two widely studied PDEs in the context of PINNs, Burgers' and \schrodinger's equations \citep{raissi2019physics}, and two more challenging ones with real-world applications, the Allan-Cahn equation~\citep{monaco2023training} and the Diffusion-Sorption equation~\citep{takamoto2022pdebench}.

\section{Related work}

Since our certification framework for PINNs is based on the verification literature of image classifiers, 
in this section we explore: related work for PINNs, and previous work on neural network robustness verification.

\paragraph{Physics-informed Neural Networks.} 
\citet{raissi2019physics} introduced PINNs, which leverage automatic differentiation to obtain approximate solutions to the underlying PDE. Since then, a variety of different PINNs have emerged in a range of applications, from fluid dynamics \citep{raissi2019deep,raissi2020hidden,sun2020surrogate,jin2021nsfnets}, to meta material design \citep{liu2019multi,fang2019deep,chen2020physics} for different classes of PDEs \citep{pang2019fpinns, fang2019physics, zhang2020learning}. A few works analyze the convergence of the training process of PINNs under specific conditions \citep{shin2020convergence,wang2022and}. \citet{mishra2022estimates} approximated the generalization error of various PINNs under specific stability and training process assumptions, and others introduced approximation bounds under regularity assumptions \citep{ryck2022generic, hillebrecht2022certified}. Our verification framework is applicable to any PINN where the solution is modeled by a fully connected network.

\paragraph{Robustness Verification of Neural Networks.} The presence of adversarial examples, \ie, small local input perturbations that lead to large output changes, was established
by \citet{szegedy2013intriguing} in image classifiers.
As robust classifiers emerged~\citep{madry2017towards}, so did attempts to certify them formally. Those methods can be divided into \textit{exact}, \ie, complete \citep{katz2017reluplex,ehlers2017formal,huang2017safety,lomuscio2017approach,bunel2018unified,de2021improved,ferrari2022complete}, or \textit{conservative}, \ie, sound but incomplete \citep{gowal2018effectiveness,mirman2018differentiable,wang2018mixtrain,wong2018provable,ayers2020parot}.
A promising set of conservative methods poses the problem as a convex relaxation of the original nonlinear network architecture, and solves it using a linear programming solver \citep{salman2019convex,zhang2022general} or by obtaining closed-form bounds \citep{zhang2018efficient,wang2021beta}. The latter are especially appealing due to their efficiency. Examples include CROWN \citep{zhang2018efficient} and $\alpha$-CROWN \citep{xu2020fast}. \citet{xu2020automatic} extended the linear relaxation framework from \citet{zhang2018efficient} to general computation graphs, but the purely backward propagation nature makes it potentially less efficient than custom bounds/hybrid approaches~\citep{shi2020robustness}. Our work adapts techniques from verification to certify the \textit{full} applicability domain of PINNs, in a similar fashion to the \textit{global} specification from \citet{muller2023abstract}.

\section{Preliminaries}
\label{sec:preliminaries}

\subsection{Notation}
\label{sec:notation}

Given vector $\vec{a} \in \mathbb{R}^d$, $\vec{a}_i$ refers to its \mbox{$i$-th} component. We use $\partial_{\x_i^j} f$ and $\frac{\partial^j f}{(\partial \x_i)^j}$ interchangeably to refer to the $j$-th partial derivative of a function $f: \mathbb{R}^{n} \to \mathbb{R}$ with respect to the $i$-component of its input, $\x_i$. Where it is clear, we use $f(\x)$ and $f$ interchangeably. 
We take $\lin{W}{b}{i}{}(\x) = \mathbf{W}^{(i)}\x + \mathbf{b}^{(i)}$ to be a function of $\x$ parameterized by weights $\mathbf{W}^{(i)}$ and bias $\mathbf{b}^{(i)}$. We define an $L$-layer \textit{fully connected neural network} $g: \mathbb{R}^{d_0} \to \mathbb{R}^{d_L}$ for an input $\x$ as $g(\x) = \preout^{(L)}(\x)$ where 
$\preout^{(k)}(\x) = \lin{W}{b}{k}{}(\out^{(k-1)}(\x))$, $\out^{(k-1)}(\x) = \sigma(\preout^{(k-1)}(\x))$, $\out^{(0)}(\x) = \x$, in which $\W^{(k)} \in \mathbb{R}^{d_k\times d_{k-1}}$ and $\bias^{(k)} \in \mathbb{R}^{d_k}$ are the weight and bias of layer $k$, $\sigma$ is the nonlinear activation, and $k\in\{1,\dots,L\}$. 

\subsection{Physics-informed neural networks (PINNs)}
\label{sec:prelim_pinns}

We consider general nonlinear PDEs of the form:
\begin{equation}
\label{eq:pinn-definition}
    f(t, \xspatial) = \partial_t u(t, \xspatial) + \mathcal{N}[u](t, \xspatial) = 0,\; \xspatial\in \mathcal{D}, t\in[0, T],
\end{equation}
where $f$ is the residual of the PDE, $t$ is the temporal and $\xspatial$ is the spatial
components of the input, $u: [0, T] \times \mathcal{D} \to \mathbb{R}$ is the solution, $\mathcal{N}$ is a nonlinear differential operator on $u$, $T\in \mathbb{R}^+$, and $\mathcal{D}\subset \mathbb{R}^D$. Where possible, for conciseness we will use $\x = (t, \xspatial)$, for $\x \in \mathcal{C} = [0, T] \times \mathcal{D}$, with $\x_0 = t$.

We assume $f$ is the residual of an $R^{th}$ order PDE where the differential operators of $\mathcal{N}$ applied to $u$ yield the partial derivatives for order $\{0, ..., R\}$ as: $u \in \mathcal{N}^{(0)}$, $\partial_{\x_i} u \in \mathcal{N}^{(1)}$, $\partial_{\x_i^2} u \in \mathcal{N}^{(2)}$, $\dots$, $\partial_{\x_i^R} u \in \mathcal{N}^{(R)}$ for $i\in \{0,\dots,D\}$\footnote{For simplicity, we assume $\mathcal{N}$ does not contain any cross-derivative operators, yet an extension would be trivial to derive.}. With these, we can re-write $f = \mathcal{P}(u, \partial_{\x_0} u, \dots, \partial_{\x_{D}} u, \dots, \partial_{\x_{D}^R} u)$, where $\mathcal{P}$ is a nonlinear function of those terms. Furthermore, the PDE is defined under (1) initial conditions, \ie, $u(0, \xspatial) = u_0(\xspatial)$, for $\xspatial \in \mathcal{D}$, and (2) general Robin boundary conditions, \ie, $a u(t, \xspatial) + b \partial_{\vec{n}} u(t,\xspatial) = u_b(t, \xspatial)$ for $a, b \in \mathbb{R}$, $\xspatial \in \delta \mathcal{D}$ and $t \in [0, T]$, and $\partial_{\vec{n}} u$ is the normal derivative at the border with respect to some components of $\xspatial$.

Continuous-time PINNs \citep{raissi2019physics} result from approximating the solution, $u(\x)$, using a neural network parameterized by $\theta$, $\solnet(\x)$. We refer to this network as the \textit{approximate solution}. In that context, the \textit{physics-informed neural network} (or residual) is $\resnet(\x) = \partial_t \solnet(\x) + \mathcal{N}[\solnet](\x)$. For example, the one-dimensional Burgers' equation (explored in detail in Section \ref{sec:experiments}) is defined as:
\begin{equation}
\label{eq:burgers_example}
\begin{aligned}
f_\theta(\x) = \partial_t \solnet(\x) &+ \solnet(\x) \partial_x \solnet(\x) - (0.01/\pi)\partial_{x^2}\solnet(\x).
\end{aligned}
\end{equation}
Note $\resnet$ has the same order as $f$, and can be described similarly as a nonlinear function with the partial derivatives applied to $\solnet$ instead of $u$. Burgers' equation (from above) has one $0^{th}$ order term ($\solnet$), two $1^{st}$ order ones ($\partial_t \solnet$ and $\partial_x \solnet$), and a $2^{nd}$ order partial derivative ($\partial_{x^2} \solnet$), while $\solnet(\x) \partial_x \solnet(\x)$ is a nonlinear term of the $\resnet$ polynomial.

\subsection{Bounding neural network outputs using CROWN \texorpdfstring{\citep{zhang2018efficient}}{(Zheng et al., 2018}}
\label{sec:prelim_verification}

The computation of upper/lower bounds on the output of neural networks over a domain has been widely studied within verification of image classifiers~\citep{katz2017reluplex, mirman2018differentiable, zhang2018efficient}. For the sake of computational efficiency, we consider the bounds obtained using CROWN \citep{zhang2018efficient}/$\alpha$-CROWN \citep{xu2020fast} as the base for our framework.

Take $g$ to be the fully connected neural network (as defined in Section~\ref{sec:notation}) we're interested in bounding. The goal is to compute $\max/\min_{x \in \mathcal{C}} g(\x)$, where $\mathcal{C}$ is the applicability domain. Typically within verification of image classifiers, $\mathcal{C} = \mathbb{B}^p_{\x, \epsilon} = \{\x': \|\x' - \x\|_p \leq \epsilon\}$, \ie, it is a \textit{local} $\ell_p$-ball of radius $\epsilon$ around an input $\x$ from the test set.

CROWN solves the optimization problem by \textit{back-propagating} linear bounds of $g(\x)$ through each hidden layer of the network until the input is reached. To do so, assuming constant bounds on $\preout^{(k)}(\x)$ are known for $\x \in \mathcal{C}$, \ie, $\forall \x \in \mathcal{C}: \preout^{(k), L} \leq \preout^{(k)}(\x) \leq \preout^{(k), U}$, CROWN relaxes the nonlinearities of each $\out^{(k)}$ using a linear lower and upper bound approximation that contains the full possible range of $\sigma(\preout^{(k)}(\x))$. By relaxing the activations of each layer and back-propagating it through $z^{(k)}$, CROWN obtains a bound on each $y^{(k)}$ as a function of $y^{(k-1)}$. Back-substituting from the output $y^{(L)} = g(\x)$ until the input $\x$ results in:
\begingroup\makeatletter\def\f@size{10}\check@mathfonts
$$
\min_{\x \in \mathcal{C}}\, g(\x) \geq \min_{\x \in \mathcal{C}} \mathbf{A}^{L} \x + \mathbf{a}^L,\; \max_{\x \in \mathcal{C}}\, g(\x) \leq \max_{\x \in \mathcal{C}} \mathbf{A}^{U} \x + \mathbf{a}^U,
$$
\endgroup
where $\mathbf{A}^L$, $\mathbf{a}^L$, $\mathbf{A}^U$ and $\mathbf{a}^U$ are computed in polynomial time from $\W^{(k)}, \bias^{(k)}$, and the linear relaxation parameters. The solution to the optimization problems above given simple constraints $\mathcal{C}$ can be obtained in closed-form. $\alpha$-CROWN \citep{xu2020fast} improves these bounds by optimizing the linear relaxations of $\sigma(\preout^{(k)}(\x))$ for tightness.
 
\section{\texorpdfstring{$\partial$}{∂}-CROWN: Error Certification for General Physics-Informed Neural Networks}
\label{sec:methods}

Take $\solnet$ to be the learned approximate continuous solution of the PDE $f$ through the PINN $\resnet$. Previous works deriving from \citet{raissi2019physics} have measured the \textit{correctness} of $\solnet$ empirically by computing the solution error at a set of discrete point compared to that obtained via numerical solvers for $f$ \citep{takamoto2022pdebench,monaco2023training} -- a compromise arising from the fact we cannot bound $\|\solnet - u\|$ for general PDEs across their continuous domain.

To mitigate this issue for continuous-time PINNs, we approach the problem of error bounding by imposing correctness conditions on the \textit{residual} instead of the solution error. By definition, $\solnet$ is a correct solution to the PINN $\resnet$ if 3 conditions are met: \normalfont{\circled{1}} the norm of the solution error with respect to the initial condition is upper bounded by an acceptable tolerance, \normalfont{\circled{2}} the norm of the solution error with respect to the boundary conditions is bounded by an acceptable tolerance, and \normalfont{\circled{3}} the norm of the residual is bounded by an acceptable tolerance. We define these as PINN \textit{correctness conditions}, and formalize it in Definition~\ref{def:correctness}. Note these conditions are general and, at this point, no assumptions are made about $u_{\theta}$ or the PDE.

\begin{definition}[Correctness Conditions for PINNs]
\label{def:correctness}
$\solnet: [0, T]\times \mathcal{D} \to \mathbb{R}$ is a $\delta_0,\delta_b,\varepsilon$-globally correct approximation of the exact solution $u: [0, T]\times \mathcal{D} \to \mathbb{R}$ if:
\begin{equation*}
\begin{split}
\normalfont{\circled{1}} & \quad\max_{\xspatial\in \mathcal{D}} |\solnet(0, \xspatial) - u_0(\xspatial)|^2 \leq \delta_0, \\
\normalfont{\circled{2}} & \quad\max_{t\in[0,T], \xspatial\in\delta\mathcal{D}} |a \solnet(\x) + b \partial_{\vec{n}} \solnet(\x) - u_b(\x)|^2 \leq \delta_b, \\
\normalfont{\circled{3}} & \quad \max_{\x \in \mathcal{C}} |\resnet(\x)|^2 \leq \varepsilon.
\end{split}
\end{equation*}
\end{definition}
In practice, $\delta_0$, $\delta_b$, and $\varepsilon$ correspond to tolerances similar to the ones given by numerical solvers for $f$. While the residual error upper bound is similar in nature to the empirical errors used to monitor convergence in iterative solvers (e.g., in Krylov subspace methods for linear systems), the bound proposed here corresponds to the error of the continuous approximate solution $\solnet$ instead of the discretized version provided in those solvers. In Section \ref{sec:experiments}, we empirically analyze the connection between residual and solution errors using a numerical solver.

\begin{figure*}[t]
    \begin{center}
        \includegraphics[width=\textwidth]{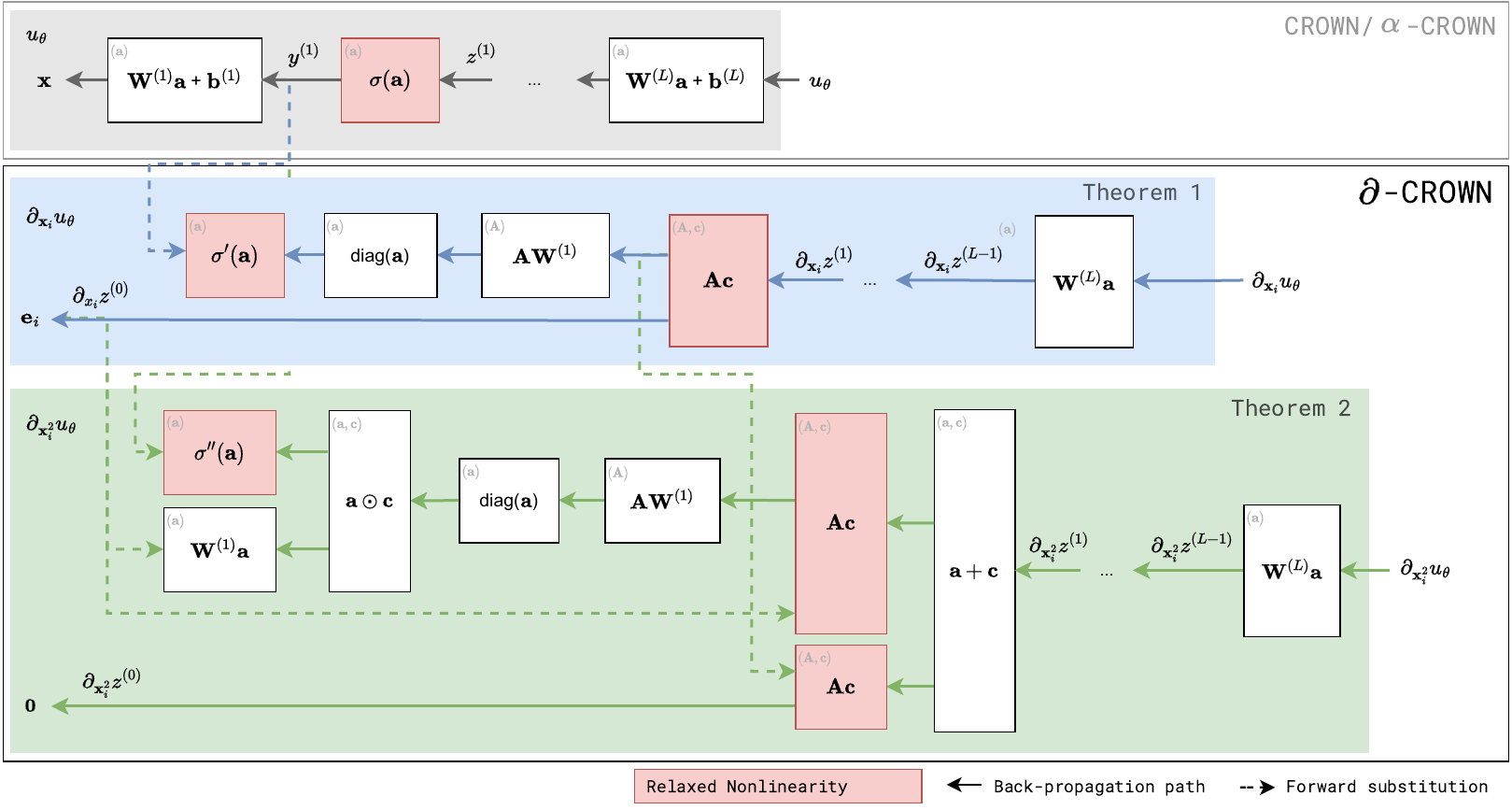}
    \end{center}
    \vspace{-1em}
    \caption{\textbf{Bounding Partial Derivatives with \partialcrown}: our hybrid scheme for bounding $\partial_{\mathbf{x}_i} \solnet$ and $\partial_{\mathbf{x}_i^2} \solnet$ uses back-propagation and forward substitution (inspired by~\citet{shi2020robustness}) to compute bounds in $\mathcal{O}(L)$ instead of the $\mathcal{O}(L^2)$ complexity of full back-propagation as in~\citet{xu2020automatic}.}
    \label{fig:partial-crown-graph}
\end{figure*}

The verification of the conditions from Definition \ref{def:correctness} requires bounding: a linear function of $\solnet$ for \normalfont{\circled{1}}, a linear function of $\solnet$ and $\partial_{\vec{n}} \solnet$ for \normalfont{\circled{2}}, and the PINN output, $\resnet$, in \normalfont{\circled{3}}. To achieve \normalfont{\circled{1}}, assuming $\solnet$ is a standard fully connected neural network as in \citet{raissi2019physics}, we can directly use CROWN/$\alpha$-CROWN \citep{zhang2018efficient,xu2020fast}. However, bounding \normalfont{\circled{2}} and \normalfont{\circled{3}} with a linear function in $\mathbf{x}$ efficiently requires a method to bound linear and nonlinear functions of the partial derivatives of $\solnet$. 

We propose \partialcrown, an efficient framework to: (i) compute closed-form bounds on the partial derivatives of an arbitrary fully-connected network $\solnet$ (Section~\ref{sec:linear_operators}), and (ii) bound a nonlinear function of those partial derivative terms, \ie, $\resnet$ (Section~\ref{sec:multiplicative_terms}). Throughout this section, we assume $\solnet(\x) = g(\x)$ as defined in Section~\ref{sec:notation}, with $d_0 = D+1$. Formal statements and proofs for the lemmas and theorems presented in this section are in Appendix \ref{app:proofs}.

\subsection{Bounding Partial Derivatives of \texorpdfstring{$\solnet$}{u\_θ}}
\label{sec:linear_operators}

The computation of the bounds for the $0^{th}$ order derivative, \ie, $\solnet$, and intermediate pre-activations can be computed using CROWN/$\alpha$-CROWN~\citep{zhang2018efficient,xu2020fast}. As such, for what follows, we assume that for $\x \in \mathcal{C}$, both the bounds on $\solnet$ and $y^{(k)},~\forall k$ are given.

\begin{assumption}
The pre-activation layer outputs of $\solnet$, $\preout^{(k)} = \lin{W}{b}{k}{}(\out^{(k-1)})$, are lower and upper bounded by linear functions $\lin{A}{a}{k}{, L}(\x) \leq \preout^{(k)} \leq \lin{A}{a}{k}{, U}(\x)$. Moreover, for $\x\in \mathcal{C}$, we have $\preout^{(k), L} \leq \preout^{(k)} \leq \preout^{(k), U}$.
\end{assumption}

Note that using CROWN/$\alpha$-CROWN, $\mathbf{A}^{(k), L}$, $\mathbf{a}^{(k), L}$, $\mathbf{A}^{(k), U}$, $\mathbf{a}^{(k), U}$ are functions of all the previous layers' parameters. For $1^{st}$ order derivatives, we start by explicitly obtaining the expression of $\partial_{\x_i} \solnet$.

\begin{lemma}[Expression for $\partial_{\x_i} \solnet$]
\label{lem:first_derivative}
For $i\in\{1,\dots,d_0\}$, the partial derivative of $\solnet$ with respect to $\x_i$ can be computed recursively as $\partial_{\x_i} \solnet = \W^{(L)} \partial_{\x_i} \out^{(L-1)}$ for:
$$
\partial_{\x_i} \out^{(k)} = \partial_{\out^{(k-1)}} \out^{(k)} \partial_{\x_i} \out^{(k-1)},\quad \partial_{\x_i} \out^{(0)} = \vec{e}_i,
$$
for $k \in \{1,\dots,L-1\}$, and where $\partial_{\out^{(k-1)}} \out^{(k)} = \text{diag}\left[\sigma'\left(y^{(k)}\right)\right]\W{^{(k)}}$.
\end{lemma}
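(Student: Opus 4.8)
The plan is to prove the top-level identity and the recursion by a direct, layer-by-layer application of the multivariate chain rule. The key observation is that $\solnet$ is a composition of the affine maps $\lin{W}{b}{k}{}$ and the coordinate-wise nonlinearities $\sigma$, so every derivative factor that appears is simply the Jacobian of one such elementary map, and the recursion is just these Jacobians chained together.

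First I would handle the output layer. Since there is no activation on layer $L$, we have $\solnet = \preout^{(L)} = \W^{(L)}\out^{(L-1)} + \bias^{(L)}$, and because $\W^{(L)}$ and $\bias^{(L)}$ are constant in $\x$, differentiating with respect to $\x_i$ immediately yields $\partial_{\x_i}\solnet = \W^{(L)}\partial_{\x_i}\out^{(L-1)}$, which is the claimed top-level expression.

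Next I would establish the recursive factor. For a hidden layer, $\out^{(k)} = \sigma(\preout^{(k)})$ with $\preout^{(k)} = \W^{(k)}\out^{(k-1)} + \bias^{(k)}$, so $\out^{(k)}$ depends on $\x$ only through $\out^{(k-1)}$. The chain rule then gives $\partial_{\x_i}\out^{(k)} = \big(\partial_{\out^{(k-1)}}\out^{(k)}\big)\,\partial_{\x_i}\out^{(k-1)}$. To identify the Jacobian factor I compute entrywise: since $\out^{(k)}_j = \sigma\big(\sum_l \W^{(k)}_{jl}\out^{(k-1)}_l + \bias^{(k)}_j\big)$, we obtain $\partial \out^{(k)}_j / \partial \out^{(k-1)}_l = \sigma'(\preout^{(k)}_j)\,\W^{(k)}_{jl}$, which assembles into $\text{diag}[\sigma'(\preout^{(k)})]\,\W^{(k)} = \text{diag}[\sigma'(y^{(k)})]\,\W^{(k)}$, exactly the stated $\partial_{\out^{(k-1)}}\out^{(k)}$. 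Unrolling the recursion down to the input, the base case is $\out^{(0)} = \x$, so $\partial_{\x_i}\out^{(0)} = \partial_{\x_i}\x = \vec{e}_i$.

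Because the statement is essentially a restatement of the chain rule, there is no deep obstacle; the care needed is purely in the bookkeeping. The main point to get right is the matrix ordering and dimensions: ensuring that the Jacobian $\text{diag}[\sigma'(y^{(k)})]\W^{(k)} \in \mathbb{R}^{d_k \times d_{k-1}}$ left-multiplies the column vector $\partial_{\x_i}\out^{(k-1)} \in \mathbb{R}^{d_{k-1}}$ in the correct order, so that the resulting telescoping product of Jacobians is well-formed all the way from layer $L$ to the input. I would also flag the standing smoothness assumption that $\sigma$ is differentiable so that $\sigma'(y^{(k)})$ is well-defined pointwise (for piecewise-linear activations this holds almost everywhere, and $\sigma'(y^{(k)})$ is interpreted in that sense).
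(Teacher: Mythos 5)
Your proof is correct and follows essentially the same route as the paper's: both apply the chain rule layer by layer, compute the entrywise Jacobian $\partial \out^{(k)}_j / \partial \out^{(k-1)}_l = \sigma'(\preout^{(k)}_j)\,\W^{(k)}_{jl}$, and stack it into $\text{diag}[\sigma'(y^{(k)})]\,\W^{(k)}$, with the base case $\partial_{\x_i}\out^{(0)} = \vec{e}_i$. The only cosmetic difference is that the paper writes the derivative as one fully unrolled product of Jacobians while you phrase it as a recursion, which is the same argument.
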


Using this result, we can efficiently linearly lower and upper bound $\partial_{\x_i} \solnet$.

\begin{theorem}[Informal, \partialcrown Linear Bounding $\partial_{\x_i} \solnet$]
\label{thm:first_derivative_bounds}
There exist two linear functions $\partial_{\x_i} u^{U}_{\theta}$ and $\partial_{\x_i} u^{L}_{\theta}$ such that, $\forall \x\in \mathcal{C}$ it holds that $\partial_{\x_i} u^{L}_{\theta} \leq \partial_{\x_i} u_{\theta} \leq \partial_{\x_i} u^{U}_{\theta}$, where the linear coefficients can be computed recursively in closed-form in $\mathcal{O}(L)$ time as a function of $\mathbf{W}^{(k)}$, $\mathbf{A}^{(k), L}$, $\mathbf{a}^{(k), L}$, $\mathbf{A}^{(k), U}$, $\mathbf{a}^{(k), U}$, $\mathbf{y}^{(k), L}$, and $\mathbf{y}^{(k), U}$.
\end{theorem}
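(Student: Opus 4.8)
The plan is to argue by induction on the layer index $k$, carrying forward a matched pair of affine-in-$\x$ bounds on the vector $\partial_{\x_i}\out^{(k)}$, and then to read off the claim at the top layer via $\partial_{\x_i}\solnet = \W^{(L)}\partial_{\x_i}\out^{(L-1)}$. The recursion of Lemma~\ref{lem:first_derivative} writes each $\partial_{\x_i}\out^{(k)}$ as the product of the diagonal factor $\text{diag}[\sigma'(\preout^{(k)})]$, the weight $\W^{(k)}$, and the already-bounded $\partial_{\x_i}\out^{(k-1)}$, with base case $\partial_{\x_i}\out^{(0)} = \vec{e}_i$, which is constant and hence trivially affine. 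So each inductive step amounts to pushing a known pair of linear bounds through two operations: the linear map $\W^{(k)}$ and the diagonal nonlinear factor.

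First I would handle the linear map. Given affine lower/upper bounds on $\partial_{\x_i}\out^{(k-1)}$, multiplication by $\W^{(k)}$ preserves affinity once $\W^{(k)}$ is split into its positive and negative parts, routing the upper (resp.\ lower) input bound through the nonnegative (resp.\ nonpositive) entries. This is the standard CROWN bookkeeping and introduces no new nonlinearity; it yields a vector $w^{(k)} = \W^{(k)}\partial_{\x_i}\out^{(k-1)}$ that is sandwiched between two affine functions of $\x$ and, by evaluating those over $\mathcal{C}$, also admits constant bounds.

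Then I would treat the diagonal factor, where the real work lies. Each coordinate of $\partial_{\x_i}\out^{(k)}$ is a \emph{product} $\sigma'(\preout^{(k)}_j)\,w^{(k)}_j$ of two quantities that \emph{both} depend on $\x$: the derivative-activation $\sigma'$ evaluated at the pre-activation, whose range I can control because the pre-activation bounds $\preout^{(k),L}\le\preout^{(k)}\le\preout^{(k),U}$ are assumed given, and the affinely bounded $w^{(k)}_j$. I would first relax $\sigma'$ over the known pre-activation interval to bounds $s^{L}_{j}\le\sigma'(\preout^{(k)}_j)\le s^{U}_{j}$, and then bound the bilinear product $s\,w$ by a McCormick-style envelope, choosing which endpoint of $[s^{L}_{j},s^{U}_{j}]$ multiplies $w^{(k)}_j$ according to the sign of the affine bounds on $w^{(k)}_j$. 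This produces valid affine-in-$\x$ lower and upper bounds on each $\partial_{\x_i}\out^{(k)}_j$, closing the induction; one further linear map through $\W^{(L)}$ (with no $\sigma'$ factor, since the output layer is linear) delivers $\partial_{\x_i} u^{L}_{\theta}$ and $\partial_{\x_i} u^{U}_{\theta}$.

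The main obstacle is exactly this bilinear product of two $\x$-dependent terms: unlike the original CROWN relaxation, which linearizes a single scalar nonlinearity, here the diagonal derivative multiplies an already-propagated affine form, so a naive substitution produces a quadratic in $\x$; taming it demands the sign-aware relaxation above and is the source of whatever looseness the certificate carries. The remaining ingredient is the complexity claim: instead of re-running a full backward pass for each layer (cost $\mathcal{O}(L^2)$, as in a purely backward graph relaxation), I would \emph{forward-substitute} the affine bounds obtained at layer $k-1$ directly into the layer-$k$ relaxation, so each layer is processed once and the coefficients assemble in $\mathcal{O}(L)$ time, matching the hybrid scheme of Figure~\ref{fig:partial-crown-graph}.
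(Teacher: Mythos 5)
Your proposal is sound, but it assembles the bound in a genuinely different way from the paper. The paper's proof (Appendix~\ref{app:proof_bounding_first_derivative}) is backward, CROWN-style: it applies McCormick envelopes \emph{entrywise} to the products $\partial_{\out^{(k-1)}}\out^{(k)}_{j,n}\,\partial_{\x_i}\out^{(k-1)}_{n}$, relaxes $\sigma'$ by a full \emph{linear} relaxation (slope and intercept, not just an interval) and forward-substitutes the precomputed CROWN affine bounds on $\preout^{(k)}$ so that both McCormick factors retain affine-in-$\x$ handles, and then performs a recursive back-substitution from layer $L-1$ down to the input, in which the choice between upper-bound coefficients ($\alpha$) and lower-bound coefficients ($\beta$) at each lower layer is dictated by the sign of the \emph{accumulated} coefficients $\rho^{(k)}_{0,j,n}$. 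You instead run a forward induction: you concretize affine (and, over the box $\mathcal{C}$, constant) bounds at every layer, aggregate $w^{(k)}_{j}=\W^{(k)}_{j,:}\partial_{\x_i}\out^{(k-1)}$ first, relax $\sigma'$ only to an interval, and apply a single McCormick envelope per neuron, with all sign decisions made locally. Both routes are valid and run in $\mathcal{O}(L)$; notably, your per-neuron treatment of $\sigma'\bigl(\preout^{(k)}_{j}\bigr)\bigl(\W^{(k)}_{j,:}\partial_{\x_i}\out^{(k-1)}\bigr)$ is exactly how the paper itself handles the analogous $\sigma''$ term in the second-derivative proof (Appendix~\ref{app:proof_bounding_second_derivative}), which corroborates its viability. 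The trade-off is tightness versus simplicity: your scheme is more elementary and its $\mathcal{O}(L)$ accounting is transparent, but concretizing at every layer and keeping only interval information on $\sigma'$ discards affine dependence that the paper preserves on both factors of the bilinear product, and it replaces the backward, accumulated-sign selection with greedy local choices — so one should expect your certificate to be somewhat looser, which is the usual price of forward over backward bound propagation.
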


The formal statement of Theorem~\ref{thm:first_derivative_bounds} and expressions for $\partial_{\x_i} u^{L}_{\theta}$ and $\partial_{\x_i} u^{U}_{\theta}$ are provided in Appendix \ref{app:proof_bounding_first_derivative}. Note that this bound is not computed using fully backward propagation as in~\citet{xu2020automatic}. Instead we use a \textit{hybrid} scheme in the spirit of~\citet{shi2020robustness} for the sake of efficiency. We perform backward propagation to compute $\partial_{\out^{(k-1)}} \out^{(k)}$ as a function of $y^{(k)}$, and forward-substitute the pre-computed CROWN bounds $\lin{A}{a}{k}{, L}(\x) \leq \preout^{(k)} \leq \lin{A}{a}{k}{, U}(\x)$ at that point instead of fully backward propagating which would have $\mathcal{O}(L^2)$ complexity. This induces a significant speed-up while achieving tight enough bounds.  Figure~\ref{fig:partial-crown-graph} showcases the back-propagation and forward substitution paths for bounding $\partial_{\x_i} \solnet$ in blue. Similarly to CROWN with the activation $\sigma$, this bound requires relaxing $\sigma'(y^{(k)})$.

Similarly, we can linearly bound $\partial_{\x_i^2} \solnet$, a requirement to bound $\resnet$ in $2^{nd}$ order PINNs. 

\begin{lemma}[Expression for $\partial_{\x_i^2} \solnet(\x)$]
\label{lem:second_derivative}

For $i\in\{1,\dots,d_0\}$, the second partial derivative of $\solnet$ with respect to $\x_i$ can be computed recursively as $\partial_{\x_i^2} \solnet = \W^{(L)} \partial_{\x_i^2} \out^{(L-1)}$ where:
$$
\partial_{\x_i^2} \out^{(k)} = \partial_{x_i \out^{(k-1)}} \out^{(k)} \partial_{\x_i} \out^{(k-1)} + \partial_{\out^{(k-1)}} \out^{(k)} \partial_{\x_i^2} \out^{(k-1)},
$$
and $\partial_{\x_i^2} \out^{(0)} = \vec{0}$, for $k \in \{1,\dots,L-1\}$, with $\partial_{\x_i} \out^{(k-1)}$ and $\partial_{\out^{(k-1)}} \out^{(k)}$ as per in Lemma \ref{lem:first_derivative}, and $\partial_{x_i \out^{(k-1)}} \out^{(k)} = \text{diag}\left[\sigma''\left(y^{(k)}\right) \left(\W^{(k)} \partial_{\x_i} \out^{(k-1)}\right)\right] \W^{(k)}$.
\end{lemma}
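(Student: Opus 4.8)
The plan is to obtain the second-order recursion by differentiating the first-order recursion of Lemma~\ref{lem:first_derivative} one more time with respect to $\x_i$. Concretely, Lemma~\ref{lem:first_derivative} gives $\partial_{\x_i}\out^{(k)} = \partial_{\out^{(k-1)}}\out^{(k)}\,\partial_{\x_i}\out^{(k-1)}$ with Jacobian factor $\partial_{\out^{(k-1)}}\out^{(k)} = \text{diag}[\sigma'(y^{(k)})]\W^{(k)}$. Applying $\partial_{\x_i}$ to both sides and using the product rule for the matrix--vector product yields two terms, $\left(\partial_{\x_i}\partial_{\out^{(k-1)}}\out^{(k)}\right)\partial_{\x_i}\out^{(k-1)}$ and $\partial_{\out^{(k-1)}}\out^{(k)}\,\partial_{\x_i^2}\out^{(k-1)}$. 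The second already matches the claimed recursion, so the argument reduces to identifying the first term with $\partial_{x_i\out^{(k-1)}}\out^{(k)}\,\partial_{\x_i}\out^{(k-1)}$.

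The key step is therefore to compute $\partial_{\x_i}\!\left(\text{diag}[\sigma'(y^{(k)})]\W^{(k)}\right)$. Since $\W^{(k)}$ is constant, only the diagonal factor depends on $\x_i$, and by the chain rule $\partial_{\x_i}\sigma'(y^{(k)}) = \sigma''(y^{(k)})\odot\partial_{\x_i}y^{(k)}$, where $\odot$ denotes the elementwise (Hadamard) product. Because $y^{(k)} = \W^{(k)}\out^{(k-1)} + \bias^{(k)}$, I have $\partial_{\x_i}y^{(k)} = \W^{(k)}\partial_{\x_i}\out^{(k-1)}$, so the differentiated factor becomes $\text{diag}\!\left[\sigma''(y^{(k)})\odot\left(\W^{(k)}\partial_{\x_i}\out^{(k-1)}\right)\right]\W^{(k)}$, which is exactly the definition of $\partial_{x_i\out^{(k-1)}}\out^{(k)}$ stated in the lemma. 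This closes the inductive step.

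It then remains to check the endpoints. The base case follows from $\out^{(0)} = \x$: its first derivative $\partial_{\x_i}\out^{(0)} = \vec{e}_i$ is constant, hence $\partial_{\x_i^2}\out^{(0)} = \vec{0}$. At the output layer, $y^{(L)} = \W^{(L)}\out^{(L-1)} + \bias^{(L)}$ is affine in $\out^{(L-1)}$, so differentiating twice gives $\partial_{\x_i^2}\solnet = \W^{(L)}\partial_{\x_i^2}\out^{(L-1)}$ with no contribution from the constant $\W^{(L)}$ or $\bias^{(L)}$, completing the recursion.

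The main obstacle I anticipate is purely notational: keeping the tensor structure straight when differentiating the matrix-valued factor $\text{diag}[\sigma'(y^{(k)})]\W^{(k)}$ with respect to the scalar $\x_i$. Verifying the first term componentwise --- writing $\out^{(k)}_a = \sigma(y^{(k)}_a)$, expanding $\partial_{\x_i^2}\out^{(k)}_a$ by the scalar product rule, and checking that the factor $\sigma''(y^{(k)}_a)\big(\sum_b \W^{(k)}_{ab}\partial_{\x_i}\out^{(k-1)}_b\big)$ reassembles into the claimed diagonal-matrix form --- is the safest guard against a transposed index or a misplaced Hadamard product, and is the one routine check I would carry out in full.
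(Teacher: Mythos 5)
Your proposal is correct and follows essentially the same route as the paper's proof: both differentiate the first-order recursion $\partial_{\x_i}\out^{(k)} = \partial_{\out^{(k-1)}}\out^{(k)}\,\partial_{\x_i}\out^{(k-1)}$ via the product rule, identify the mixed term by differentiating $\text{diag}[\sigma'(y^{(k)})]\W^{(k)}$ with the chain rule (using that $\W^{(k)}$, $\bias^{(k)}$ are constant and $\partial_{\x_i}y^{(k)} = \W^{(k)}\partial_{\x_i}\out^{(k-1)}$), and handle the affine output layer and the constant base case $\partial_{\x_i}\out^{(0)} = \vec{e}_i$ identically; the paper merely carries out the componentwise verification you flag as a routine check.
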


\begin{theorem}[Informal, \partialcrown Linear Bounding $\partial_{\x_i^2} \solnet$]
\label{thm:second_derivative_bounds}
Assume that through a previous bounding of $\partial_{\x_i} \solnet$, we have linear lower and upper bounds on $\partial_{\x_i} \out^{(k-1)}$ and $\partial_{\out^{(k-1)}} \out^{(k)}$.
There exist two linear functions $\partial_{\x_i^2} u^{U}_{\theta}$ and $\partial_{\x_i^2} u^{L}_{\theta}$ such that, $\forall \x\in \mathcal{C}$ it holds that $\partial_{\x_i^2} u^{L}_{\theta} \leq \partial_{\x_i^2} u_{\theta} \leq \partial_{\x_i^2} u^{U}_{\theta}$, where the linear coefficients can be computed recursively in closed-form in $\mathcal{O}(L)$ time as a function of $\mathbf{W}^{(k)}$, $\mathbf{A}^{(k), L}$, $\mathbf{a}^{(k), L}$, $\mathbf{A}^{(k), U}$, $\mathbf{a}^{(k), U}$, $\mathbf{y}^{(k), L}$, $\mathbf{y}^{(k), U}$, and the parameters of the linear lower and upper bounds on $\partial_{\x_i} \out^{(k-1)}$ and $\partial_{\out^{(k-1)}} \out^{(k)}$.
\end{theorem}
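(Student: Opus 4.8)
The plan is to mirror the hybrid bounding scheme used for $\partial_{\x_i} \solnet$ in Theorem~\ref{thm:first_derivative_bounds}, proceeding by induction on the layer index $k$ along the recursion of Lemma~\ref{lem:second_derivative}. The base case is immediate, since $\partial_{\x_i^2} \out^{(0)} = \vec{0}$ is trivially its own linear lower and upper bound. For the inductive step I would assume linear bounds on $\partial_{\x_i^2} \out^{(k-1)}$ in $\x$ are available, together with the bounds on $\partial_{\x_i} \out^{(k-1)}$ and $\partial_{\out^{(k-1)}} \out^{(k)}$ granted by the theorem's hypothesis, and produce linear bounds on $\partial_{\x_i^2} \out^{(k)}$ by treating the two summands of the recursion separately and then adding lower bounds to lower bounds and upper bounds to upper bounds.

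First I would handle the new nonlinearity. The factor $\partial_{x_i \out^{(k-1)}} \out^{(k)} = \text{diag}[\sigma''(y^{(k)})(\W^{(k)}\partial_{\x_i}\out^{(k-1)})]\W^{(k)}$ introduces $\sigma''$, so I would derive a linear relaxation $\sigma''^{,L}(y) \leq \sigma''(y) \leq \sigma''^{,U}(y)$ valid on the pre-activation box $[\preout^{(k),L},\preout^{(k),U}]$, exactly as CROWN relaxes $\sigma$ and the first-derivative bound relaxes $\sigma'$. Forward-substituting the pre-computed envelopes $\lin{A}{a}{k}{,L}(\x) \leq \preout^{(k)} \leq \lin{A}{a}{k}{,U}(\x)$ turns this into a linear envelope for $\sigma''(y^{(k)})$ in $\x$, while linear bounds on $\W^{(k)}\partial_{\x_i}\out^{(k-1)}$ follow directly from the inherited bound on $\partial_{\x_i}\out^{(k-1)}$.

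The crux is the two products $\partial_{x_i\out^{(k-1)}}\out^{(k)}\,\partial_{\x_i}\out^{(k-1)}$ and $\partial_{\out^{(k-1)}}\out^{(k)}\,\partial_{\x_i^2}\out^{(k-1)}$: each is a product of two quantities sandwiched between linear functions of $\x$, and the product of two linear functions is bilinear rather than linear. To re-linearize, I would invoke the multiplicative bounding procedure developed for $\resnet$ in Section~\ref{sec:multiplicative_terms}, i.e. a McCormick-style envelope that, given linear $a(\x),b(\x)$ with $\alpha^L \leq a(\x) \leq \alpha^U$ and $\beta^L \leq b(\x) \leq \beta^U$, returns linear lower and upper bounds on $a(\x)b(\x)$. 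Applying this to each factored product and summing yields linear bounds on $\partial_{\x_i^2}\out^{(k)}$; left-multiplying by $\W^{(L)}$ then preserves linearity and delivers $\partial_{\x_i^2}u^{L}_{\theta} \leq \partial_{\x_i^2}u_{\theta} \leq \partial_{\x_i^2}u^{U}_{\theta}$. The $\mathcal{O}(L)$ bound follows because each inductive step reuses the pre-computed CROWN and first-derivative envelopes by forward substitution rather than back-propagating afresh through all earlier layers, so each layer contributes only a constant number of relaxation-and-multiply operations.

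The main obstacle I anticipate is controlling tightness through the repeated product relaxations: every McCormick step loosens the envelope, and the second summand couples the already-relaxed second-derivative bound from layer $k-1$ with the relaxed $\partial_{\out^{(k-1)}}\out^{(k)}$, so bounding the accumulated slack — and verifying that each relaxation remains valid over the entire box rather than just near a nominal point — is the delicate part. A secondary technical point is ensuring the pre-activation intervals $[\preout^{(k),L},\preout^{(k),U}]$ are tight enough that the $\sigma''$ relaxation does not dominate the error, which is where the quality of the underlying CROMWN/$\alpha$-CROWN bounds feeds back into the argument.
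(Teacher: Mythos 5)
Your proposal is correct and takes essentially the same route as the paper's proof: it works from the Lemma~\ref{lem:second_derivative} recursion, relaxes $\sigma''$ linearly and forward-substitutes the pre-computed CROWN pre-activation envelopes, re-linearizes the bilinear terms with McCormick envelopes using the inherited first-derivative bounds, and obtains the $\mathcal{O}(L)$ closed-form coefficients via the same hybrid (forward-substitution rather than full back-propagation) scheme. The one sub-step you leave implicit --- which the paper spells out with its $\nu$ and $\upsilon$ coefficients --- is that $\partial_{\x_i \out^{(k-1)}} \out^{(k)}$ is itself bilinear, being $\sigma''\left(\preout^{(k)}\right)$ times $\W^{(k)}\partial_{\x_i}\out^{(k-1)}$, so sandwiching it between linear functions of $\x$ requires one additional McCormick application before the two top-level product relaxations you describe.
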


The formal statement of Theorem~\ref{thm:second_derivative_bounds} and expressions for $\partial_{\x_i^2} u^{L}_{\theta}$ and $\partial_{\x_i^2} u^{U}_{\theta}$ are in Appendix \ref{app:proof_bounding_second_derivative}. As with the first derivative, this bound requires a relaxation of $\sigma''(y^{(k)})$. Note that this also follows a hybrid computation scheme, with the back-propagation and forward substitution paths for bounding $\partial_{\x_i^2} u_{\theta}$ computations shown in green in Figure~\ref{fig:partial-crown-graph}.

Assuming $\mathcal{C} = \{\x \in \mathbb{R}^{d_0}: \x^L \leq \x \leq \x^U\}$, we can obtain closed-form expressions for constant global bounds on the linear functions $\partial_{\x_i} \solnet^U$, $\partial_{\x_i} \solnet^L$, $\partial_{\x_i^2} \solnet^U$, $\partial_{\x_i^2} \solnet^L$, which we formulate and prove in Appendix~\ref{app:proof_lemma_closed_form_bounds}. While here we only compute the expression for the second derivative with respect to the same input, it would be trivial to extend it to cross derivatives (\ie, $\partial_{\x_i \x_j} \solnet$ for $i\neq j$).

\subsection{Bounding \texorpdfstring{$\resnet$}{f\_θ}}
\label{sec:multiplicative_terms}

With the partial derivative terms bounded, to bound $\resnet$, we use McCormick envelopes~\citep{mccormick1976computability} to obtain linear lower and upper bound functions $f_{\theta}^L \leq \resnet \leq f_{\theta}^U$: $f_{\theta}^U = \mu^U_{0} + \mu^U_{1} \solnet + \sum_{j=1}^r \sum_{\partial_{\x_i^j}\in \mathcal{N}^{(j)}} \mu^U_{j,i} \partial_{\x_i^j} \solnet$, and $f_{\theta}^L = \mu^L_{0} + \mu^L_{1} \solnet + \sum_{j=1}^r \sum_{\partial_{\x_i^j}\in \mathcal{N}^{(j)}} \mu^L_{j,i} \partial_{\x_i^j} \solnet,$
where $\mu^U_{0}$, $\mu^U_{1}$, and $\mu^U_{i, j}$ are functions of the global lower and upper bounds of $\solnet$ and $\partial_{\x_i^j} \solnet$. In the example of Burgers' equation (Equation~\ref{eq:burgers_example}), $f_\theta^U = \mu^U_{0} + \mu^U_{1} \solnet + \mu^U_{1,0} \partial_{\x_0} \solnet + \mu^U_{1,1} \partial_{\x_1} \solnet + \mu^U_{2,1} \partial_{\x_1^2} \solnet$ (and similarly for $f_\theta^L$ with $\mu^L$).

\begin{algorithm}[t]
\KwInput{function $h$, input domain $\mathcal{C}$, \# splits $N_b$, \# empirical samples $N_s$, \# branches per split $N_d$}
\KwResult{lower bound $h_{lb}$, upper bound $h_{ub}$}
$\mathcal{B}, \mathcal{B}_\Delta$ = $\emptyset, \emptyset$

$\hat{h}_{lb}, \hat{h}_{ub} $ = $\min\text{\textbackslash} \max$ $h$(\textsc{Sample}($\mathcal{C}$, $N_s$))

$h_{lb}, h_{ub}$ = \partialcrown$(h, \mathcal{C})$

$\mathcal{B}[\mathcal{C}]$ = $(h_{lb}, h_{ub})$

$\mathcal{B}_\Delta[\mathcal{C}]$ = $\max(\hat{h}_{lb} - h_{lb}, h_{ub} - \hat{h}_{ub})$

\For{$i\in \{1, \dots, N_b\}$}{
    $\mathcal{C}_i$ = $\mathcal{B}$.\textsc{Pop}$(\argmax_{\mathcal{C}'} \mathcal{B}_{\Delta}[\mathcal{C}'])$
    
    \ForEach{$\mathcal{C}' \in \,$\textsc{DomainSplit}($\mathcal{C}_i$, $N_d$)}{
        $h_{lb}', h_{ub}'$ = \partialcrown$(h, \mathcal{C}')$
        
        $\mathcal{B}[\mathcal{C}']$ = $(h_{lb}', h_{ub}')$
        
        $\mathcal{B}_\Delta[\mathcal{C}']$ = $\max(\hat{h}_{lb} - h_{lb}', h_{ub}' - \hat{h}_{ub})$
    }
}
$h_{lb}, h_{ub} = \min_{\mathcal{C'}} \mathcal{B}_0[\mathcal{C}'], \max_{\mathcal{C'}} \mathcal{B}_1[\mathcal{C}']$

\Return $h_{lb}$, $h_{ub}$
\caption{Greedy Input Branching}
\label{alg:greedy_input_branching}
\end{algorithm}

To get $\resnet^U$ and $\resnet^L$ as linear functions of $\x$, we replace $\solnet$ and $\partial_{\x_i^j} \solnet$ with the lower and upper bound linear expressions from Section~\ref{sec:linear_operators}, depending on the sign of the coefficients $\mu^U$ and $\mu^L$. As in Section~\ref{sec:linear_operators}, since $\mathcal{C} = \{\x \in \mathbb{R}^{d_0}: \x^L \leq \x \leq \x^U\}$ we can then solve $\max_{\x \in \mathcal{C}} \resnet^U$ and $\min_{\x \in \mathcal{C}} \resnet^L$ in closed-form (see Appendix~\ref{app:proof_lemma_closed_form_bounds}), obtaining constant bounds for $\resnet$ in $\mathcal{C}$. We explore the overall complexity of running \partialcrown to bound $\resnet$ in Appendix \ref{app:complexity}, and define it generally as $\mathcal{M}$ for the sake of further complexity analysis.

\begin{table*}[t]
    \caption{\textbf{Certifying with \partialcrown}: empirical lower bounds ($l_b$) computed using Monte Carlo (MC) samples ($10^4$ and $10^6$ points), and certified upper bounds ($u_b$) using \partialcrown with greedy input branching for \protect\circled{1} initial conditions, \protect\circled{2} boundary conditions, and \protect\circled{3} residual condition for (a) Burgers~\citep{raissi2019physics}, (b) \schrodinger~\citep{raissi2019physics}, (c) Allen-Cahn~\citep{monaco2023training}, and (d) Diffusion-Sorption~\citep{takamoto2022pdebench} equations.}
    \label{tab:certifying_crown}
    \center
    {
    \vspace{-1em}
    \footnotesize
    \begin{tabular}{p{0.14\textwidth}p{0.01\textwidth}p{0.23\textwidth}P{0.13\textwidth}P{0.13\textwidth}P{0.22\textwidth}}
        & & & \multicolumn{2}{c}{Empirical $l_b$} & Certified $u_b$\\\cmidrule[1pt]{4-6}
        & & & MC $\max$ ($10^4$) & MC $\max$ ($10^6$) & \partialcrown $u_b$ (time [s])\\ \cmidrule[1pt]{1-6}
        
        \multirow{4}{0.14\textwidth}{\textbf{(a) Burgers}} & \circled{1} & $|\solnet(0, x) - u_0(x)|^2$ & $1.59\times10^{-6}$ & $1.59\times10^{-6}$ & $2.63\times10^{-6}$ ($116.5$) \\\cline{2-6}
        & \multirow{2}{*}{\circled{2}} & $|\solnet(t, -1)|^2$ & $8.08\times10^{-8}$ & $8.08\times10^{-8}$ & $6.63\times10^{-7}$ ($86.7$) \\
        & & $|\solnet(t, 1)|^2$ & $6.54\times10^{-8}$ & $6.54\times10^{-8}$ & $9.39\times10^{-7}$ ($89.8$) \\\cline{2-6}
        &\circled{3} & $|\resnet(\x)|^2$ & $1.23\times10^{-3}$ & $1.80\times10^{-2}$ & $1.03\times10^{-1}$ ($2.8\times 10^5$) \\ \specialrule{0.75pt}{1pt}{1pt}

        \multirow{4}{0.14\textwidth}{\textbf{(b) \schrodinger}} & \circled{1} & $|\solnet(0, x) - u_0(x)|^2$ & $7.06\times10^{-5}$ & $7.06\times10^{-5}$ & $8.35\times10^{-5}$ ($305.2$) \\\cline{2-6}
        & \multirow{2}{*}{\circled{2}} & $|u_\theta(t, 5) - u_\theta(t, -5)|^2$ & $7.38\times10^{-7}$ & $7.38\times10^{-7}$ & $5.73\times10^{-6}$ ($545.4$) \\
        & & $|\partial_x u_\theta(t, 5) - \partial_x u_\theta(t, -5)|^2$ & $1.14\times10^{-5}$ & $1.14\times10^{-5}$ & $5.31\times10^{-5}$ ($2.4\times 10^3$) \\\cline{2-6}
        & \circled{3} & $|\resnet(\x)|^2$ & $7.28\times10^{-4}$ & $7.67\times10^{-4}$ & $5.55\times10^{-3}$ ($1.2\times 10^6$) \\ \specialrule{0.75pt}{1pt}{1pt}

        \multirow{3}{0.14\textwidth}{\textbf{(c) Allen-Cahn}} & \circled{1} & $|\solnet(0, x) - u_0(x)|^2$ & $1.60\times10^{-3}$ & $1.60\times10^{-3}$ & $1.61\times10^{-3}$ ($52.7$) \\ \cline{2-6}
        & \circled{2} & $|\solnet(t, -1) - \solnet(t, 1)|^2$ & $5.66\times10^{-6}$ & $5.66\times10^{-6}$ & $5.66\times10^{-6}$ ($95.4$) \\\cline{2-6}
        & \circled{3} & $|\resnet(\x)|^2$ & $10.74$ & $10.76$ & $10.84$ ($6.7\times 10^{5}$) \\ \specialrule{0.75pt}{1pt}{1pt}

        \multirow{4}{0.14\textwidth}{\textbf{(d) Diffusion-Sorption}} & \circled{1} & $|\solnet(0, x)|^2$ & $0.0$ & $0.0$ & $0.0$ ($0.2$) \\ \cline{2-6}
        & \multirow{2}{*}{\circled{2}} & $|u_\theta(t, 0) - 1|^2$ & $4.22\times 10^{-4}$ & $4.39\times 10^{-4}$ & $1.09\times 10^{-3}$ ($72.5$) \\
        & & $|u_\theta(t, 1) - D \partial_x u_\theta(t, 1)|^2$ & $2.30\times 10^{-5}$ & $2.34\times 10^{-5}$ & $2.37\times 10^{-5}$ ($226.4$) \\ \cline{2-6}
        & \circled{3} & $|\resnet(\x)|^2$ & $1.10\times 10^{-3}$ & $21.09$ & $21.34$ ($2.4\times 10^{6}$) \\ \specialrule{1pt}{1pt}{1pt}
    \end{tabular}
    }
\end{table*}

\subsection{Tighter Bounds via Greedy Input Branching}
\label{sec:input_branching}

Using \partialcrown we can compute a bound on a nonlinear function of the derivatives of $\solnet$, which we will generally refer to as $h$, for $\x \in \mathcal{C}$. However, given the approximations introduced by the relaxations, it is likely these bounds will be too loose compared to the true values of $h$ to be useful.

To improve them, we introduce \textit{greedy input branching} (Algorithm \ref{alg:greedy_input_branching}). 
We start by computing empirical estimates of the min/max value of $h$ across the domain (L2), and the \partialcrown bounds over the full domain (L3), storing the latter in the certified bounds list, $\mathcal{B}$, (L4) and the max difference between empirical and certified in the list $\mathcal{B}_\Delta$ (L5). For $N_b$ iterations (\textit{branchings}), we take $\mathcal{C}_i$ as the interval with the highest difference between empirical and certified values (L7). We then split it into $N_d$ pieces using \textsc{DomainSplit}, compute the new certificates for those smaller sub-domains $\mathcal{C}'$ (L9), and add those certified bounds and their error w.r.t. the empirical estimate of the bounds to $\mathcal{B}$ (L10) and $\mathcal{B}_\Delta$ (L12), respectively. Finally, the tighter lower and upper bounds are then the minimum lower bound and the maximum upper bound in $\mathcal{B}$, respectively (L12). A more detailed step-by-step description of the algorithm is given in Appendix \ref{app:greedy-branching-details}. 

As the number of splits, $N_b$, increases, so does the tightness of our global bounds. For small dimensional spaces, it suffices to split each branch $\mathcal{C}_i$ into $N_d = 2^{d_0}$ equal branches. Note that in higher dimensional spaces, a non-equal splitting function, \textsc{DomainSplit}, can lead to improved convergence to the tighter bounds. The time complexity of greedy input branching is $\mathcal{O}(N_b N_d \mathcal{M})$, where $\mathcal{M}$ is the complexity of bounding each branch. 

\section{Experiments}
\label{sec:experiments}

The aim of this experimental section is to (i) showcase that the Definition~\ref{def:correctness} certificates obtained with \partialcrown are tight compared to empirical errors computed with a large number of samples (Section~\ref{sec:exp_certification}), (ii) highlight the relationship of our residual-based certificates and the commonly reported solution errors (Section~\ref{sec:residual-solution-connection}), (iii) compare the efficiency of our method to an alternative bound propagation one (Section~\ref{sec:efficiency-experiments}), and (iv) qualitatively analyze the importance of greedy input branching in the success of our method (Section~\ref{sec:importance-branching}). On top of these results, in Appendix~\ref{app:adversarial-training-and-analysis} we study how the training method from~\citet{shekarpaz2022piat} can lead to a reduction in empirical and certified errors, and in Appendix \ref{app:failure-identification} we showcase how \partialcrown can be used to identify failures in PINN training. 

\subsection{Certifying with \texorpdfstring{$\partial$}{∂}-CROWN}
\label{sec:exp_certification}

To achieve (i), we apply our post-training certification framework \partialcrown to two widely studied PINNs from \citet{raissi2019physics}, Burgers' and \schrodinger's equations, as well as to the more complex Allen-Cahn's equation from \citet{monaco2023training}, and the Diffusion-Sorption equation from \citet{takamoto2022pdebench}. These PINNs were chosen for the experimental section as they are well established from previous literature in the field, and either code or trained models were available from that previous work. While we considered other suitable higher dimensional PINNs, such as several of the Navier-Stokes equations from \citet{jin2021nsfnets}, or the Gray-Scott system from \citet{giampaolo2022physics}, neither training code nor the pre-trained models were released that allow us to apply \partialcrown.

Since $\solnet$ for these PINNs use $\sigma=\tanh$ activations, we need to be able to linearly relax $\sigma'$ and $\sigma''$ given pre-activation bounds. We propose a practical relaxation in Appendix~\ref{app:tanh_relaxations}, highlighting its effeciency compared to a simple baseline in Appendix \ref{app:tanh_relaxations_ablation}. All timing results were obtained on a MacBook Pro with a 10-core M1 Max CPU. Visualizations of a fine-grained discretization of the solution and residual error landscapes is provided in Figure~\ref{fig:all_residuals} in the Appendix.

\paragraph{Burgers' Equation} This one-dimensional PDE is used in several areas of mathematics, fluid dynamics, nonlinear acoustics, gas dynamics and traffic flow, and is derived from the Navier-Stokes equations for the velocity field by dropping the pressure gradient~\citep{raissi2019physics}. It is defined on a temporal domain $t \in [0,1]$ and spatial domain $x\in [-1, 1]$ as:
\begin{equation}
\label{eq:burgers}
    \partial_t u(t, x) + u(t,x) \partial_x u(t, x) - (0.01/\pi)\partial_{x^2}u(t, x) = 0,
\end{equation}
for $u(0, x) = -\sin(\pi x)$, $u(t, -1) = u(t, 1) = 0$. The solution $\solnet: \mathbb{R}^2 \to \mathbb{R}$ is modeled by an 8-hidden layer, 20 neurons per layer network \citep{raissi2019physics}. The training process took $\sim 13.35$ minutes, and resulted in a mean $\ell_2$ solution error of $6.1\cdot 10^{-4}$.

\paragraph{\schrodinger's Equation} \schrodinger's equation is a classical field equation used to study quantum mechanical systems. In~\citet{raissi2019physics}, \schrodinger's equation is defined with the temporal domain $t \in [0, \pi/2]$ and spatial domain $x \in [-5, 5]$ as:
\begin{equation}
    i\, \partial_t u(t, x) + 0.5\,\partial_{xx}u(t, x) + |u(t,x)|^2 u(t,x) = 0,
\end{equation}
where $u: [0, \pi/2] \times \mathcal{D} \to \mathbb{C}$ is a complex-valued solution, for initial conditions $u(0, x) = 2\,\sech(x)$, and periodic boundary conditions $u(t, -5) = u(t, 5)$ and $\partial_x u(t, -5) = \partial_x u(t, 5)$. As in~\citet{raissi2019physics}, $u_\theta: \mathbb{R}^{2} \to \mathbb{R}^2$ is a 5-hidden layer, 100 neurons per layer network. The training took $\sim 23.67$ minutes, and resulted in a mean $\ell_2$ solution error of $1.74\cdot 10^{-3}$.

\paragraph{Allan-Cahn Equation} The Allan-Cahn equation is a form of reaction-diffusion equation, describing the phase separation in multi-component alloy systems \citep{monaco2023training}. In 1D, it is defined on a temporal domain $t \in [0,1]$ and spatial domain $x\in [-1, 1]$ as:
\begin{equation}
\label{eq:allan_cahn}
    \partial_t u(t, x) + \rho u(t,x) (u^2(t,x) - 1) -\nu \partial_{x^2}u(t, x) = 0,
\end{equation}
for $\rho=5$, $\nu=10^{-4}$, and $u(0, x) = x^2\cos(\pi x)$, $u(t, -1) = u(t, 1)$. The solution $\solnet: \mathbb{R}^2 \to \mathbb{R}$ is modeled by an 6-hidden layer, 40 neurons per layer network, and due to its complexity, it is trained using the Causal training scheme from \citet{monaco2023training}. The training process took $\sim 18.56$ minutes, and resulted in a mean $\ell_2$ solution error of $7.9\cdot 10^{-3}$.

\paragraph{Diffusion-Sorption} The diffusion-sorption equation models a diffusion system which is retarded by a sorption process, with one of the most prominent applications being groundwater contaminant transport \citep{takamoto2022pdebench}. In \citep{takamoto2022pdebench}, the equation is defined on a temporal domain $t \in (0, 500]$ and spatial domain $x\in (0, 1)$ as:
\begin{equation}
\label{eq:diff_sorption}
    \partial_t u(t, x) - D/R(u(t,x)) \partial_{x^2}u(t, x) = 0,
\end{equation}
where $D=5\times 10^{-4}$ is the effective diffusion coefficient, and $R(u(t,x))$ is the retardation factor representing the sorption that hinders the diffusion process \citep{takamoto2022pdebench}. In particular, we consider $R(u(t,x)) = 1 + \nicefrac{(1 - \phi)}{(\phi)} \rho_s k n_f u^{n_f-1}(t,x)$, where $\phi=0.29$ is the porosity of the porus medium, $\rho_s=2880$ is the bulk density, $k=3.5\times10^{-4}$ is the Freundlich's parameter, and $n_f=0.874$ is the Freundlich's exponent. The initial and boundary conditions are defined as $u(0, x) = 0$, $u(t, 0) = 0$ and $u(t, 1) = D \partial_x u(t, 1)$. The solution $\solnet: \mathbb{R}^2 \to \mathbb{R}$ is modeled by a 7-hidden layer, 40 neurons per layer network, and we obtain the trained parameters from~\citet{takamoto2022pdebench}. The mean $\ell_2$ solution error is $9.9\cdot 10^{-2}$.

\paragraph{\partialcrown Error Certification}
We obtain certified bounds on the PINN errors for the conditions of Definition \ref{def:correctness} using \partialcrown. We report in Table~\ref{tab:certifying_crown} our verification of the initial conditions \circled{1} using $N_b=5k$ splits, boundary conditions \circled{2} using $N_b=5k$ splits, and the certified bounds on the residual condition \circled{3} using $N_b=2M$ splits. We observe that \partialcrown upper bounds approach the empirical error lower bounds obtained through high-density sampling -- showcasing tightness -- while providing a guarantee on the continuous solution.

\subsection{Empirical relation of \texorpdfstring{$|\resnet|$}{|f\_θ|} and \texorpdfstring{$|\solnet - u|$}{|u\_θ - u|}}
\label{sec:residual-solution-connection}

\begin{figure}[t]
    \centering
    \includegraphics[width=0.9\linewidth]{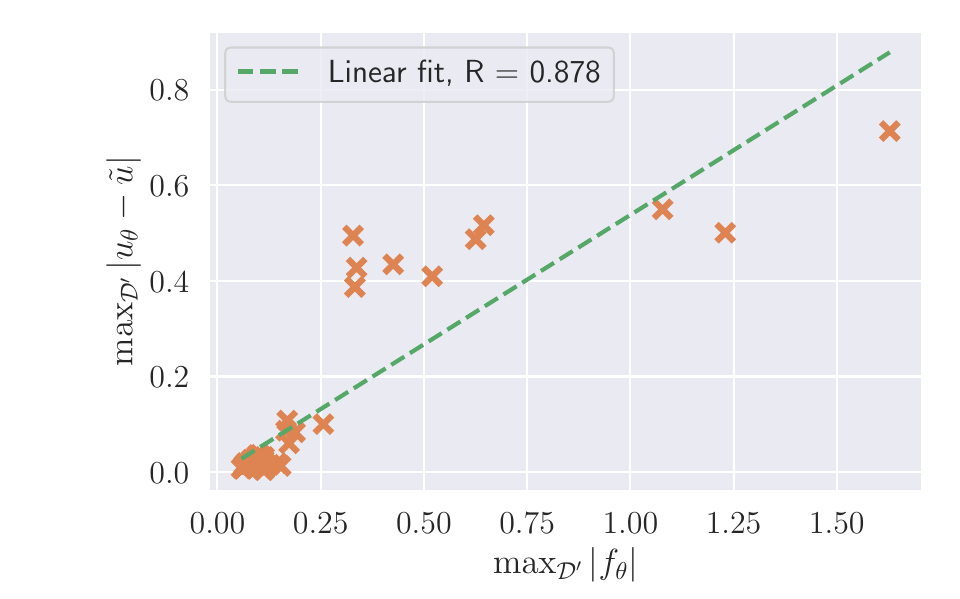}
    \vspace{-0.8em}
    \caption{\textbf{Residual and solution errors}: connection of the maximum residual error ($\max_{\mathcal{S}'} |\resnet|$) and the maximum solution error, $\max_{\mathcal{S}'} |\solnet - \tilde{u}|$, for networks at different epochs of the training process (in orange).}
    \label{fig:solution_residual_error}
\end{figure}

One question that might arise from our certification procedure is the relationship between the PINN residual error, $|\resnet|$, and the solution error with respect to true solution $u$, $|\solnet - u|$, across the domain. By definition, achieving a low $|\resnet|$ implies $\solnet$ is a valid solution for the PDE (assuming boundary and initial conditions also hold), but there is no formal guarantee related to $|\solnet - u|$ within our framework. 

Obtaining a bound on $|\solnet - u|$ is typically a non-trivial task given $u$ might not be unique, and does not necessarily exhibit an analytical solution. And while some recent works perform this analysis for specific PDEs by exploiting their structure and/or smoothness properties ~\citep{mishra2022estimates, ryck2022generic, wang20222}, these methods typically suffer from scalability and bound tightness issues.
As such, we perform an empirical analysis on Burgers' equation using a numerical, finite-difference solver to obtain $\tilde{u}(\x)$ for sampled points $\x$. We randomly sample $10^6$ domain points ($\mathcal{S}'$), and compute the maximum residual error, $\max_{\x \in \mathcal{S}'} |\resnet(\x)|$, and the empirical maximum solution error, $\max_{\x \in \mathcal{S}'} |\solnet(\x) - \tilde{u}(\x)|$, for networks obtained at different epochs of the training process. We report the results in Figure~\ref{fig:solution_residual_error}, with each point corresponding to an instance of a network. As expected, there is a correlation between these errors obtained using a numerical solver, suggesting a similar correlation holds for $|\solnet - u|$.

\begin{table}[t]
    \caption{\textbf{Efficiency of \partialcrown}: comparison of \partialcrown (Ours), Interval Bound Propagation (IBP) and LiRPA upper bounds obtained with greedy input branching (for $N_b$ branches) in Burgers' equation for fixed runtime limits ($150$s, $100$s, or $10^4$s). Lower is better.}
    \label{tab:crown_vs_ibp}
    \center
    \vspace{-1em}
    {
    \footnotesize
    \begin{tabular}{m{0.09\textwidth} >{\centering\arraybackslash} m{0.10\textwidth} >{\centering\arraybackslash} m{0.10\textwidth} >{\centering\arraybackslash} m{0.10\textwidth}}
        & Ours ($N_b$) & IBP ($N_b$) & LiRPA ($N_b$) \\ \specialrule{1pt}{1pt}{1pt}
        
        $|u_\theta(0, x)|^2$ \par{}($150$s) & $2.63\times 10^{-6}$ ($10^4$) & $4.12\times 10^{-3}$ \par{}($10^5$) & $\mathbf{2.23\times 10^{-6}}$ \par{}($10^4$) \\
        $|u_\theta(t, -1)|^2$ \par{}($100$s) & $6.63\times 10^{-7}$ ($10^4$) & $1.23\times 10^{-5}$ \par{}($10^5$) & $\mathbf{6.34\times 10^{-7}}$ \par{}($10^4$) \\
        $|u_\theta(t, 1)|^2$ \par{}($100$s) & $9.39\times 10^{-7}$ ($10^4$) & $5.69\times 10^{-5}$ \par{}($10^5$) & $\mathbf{9.12\times 10^{-7}}$ \par{}($10^4$) \\
        $|f_\theta(x, t)|^2$ \par{}($10^4$s) & $\mathbf{1.30 \times 10^1}$ ($1.3 \times 10^{5}$) & $2.78\times 10^3$ \par{}($5\times 10^6$) & $1.78 \times 10^2$ ($1.9\times 10^4$) \\ \specialrule{0.75pt}{1pt}{1pt}
    \end{tabular}
    }
\end{table}

\subsection{On the efficiency of \texorpdfstring{$\partial$}{∂}-CROWN}
\label{sec:efficiency-experiments}

To the best of our knowledge, \partialcrown is the first framework designed to bound the errors of general PINNs. To highlight its efficiency, we compare is bounding performance to that of Interval Bound Propagation (IBP) \citep{gowal2018effectiveness,mirman2018differentiable} and LiRPA \citep{xu2020automatic} for fixed runtime limits in Burgers' equation. IBP is fast yet yields loose bounds, whereas LiRPA's full back-propagation mechanism makes it slower despite having potentially tighter bounds. The results are presented in Table \ref{tab:crown_vs_ibp}, clearly showcasing how \partialcrown achieves a balance between speed (branching more than LiRPA yet less than IBP) and tightness (outperforming both methods in the tightness of the residual bounds). Note that both \partialcrown and LiRPA are reduced to CROWN in the initial and boundary conditions, and as such the minor differences in bounds in those cases can be attributed to implementation.

\subsection{On the importance of greedy input branching}
\label{sec:importance-branching}

A key factor in the success of \partialcrown in achieving tight bounds of the residual is the greedy input branching procedure from Algorithm~\ref{alg:greedy_input_branching}. To illustrate the fact that a uniform sampling strategy would be significantly more computationally expensive, we plot in Figure~\ref{fig:branching_viz} the relative density of branches (\ie, the percentage of branches per unit of input domain) in the case of Burgers' and \schrodinger's equations. As can be observed, there are clear imbalances at the level of the branching distribution -- with areas away from relative optima of $\solnet$ being relatively under sampled yet achieving tight bounds -- showcasing the efficiency of our strategy.

\section{Discussion and Limitations}

\begin{figure}[t]
    \centering
    \begin{subfigure}[b]{\linewidth}
        \centering
        \includegraphics[width=0.95\textwidth]{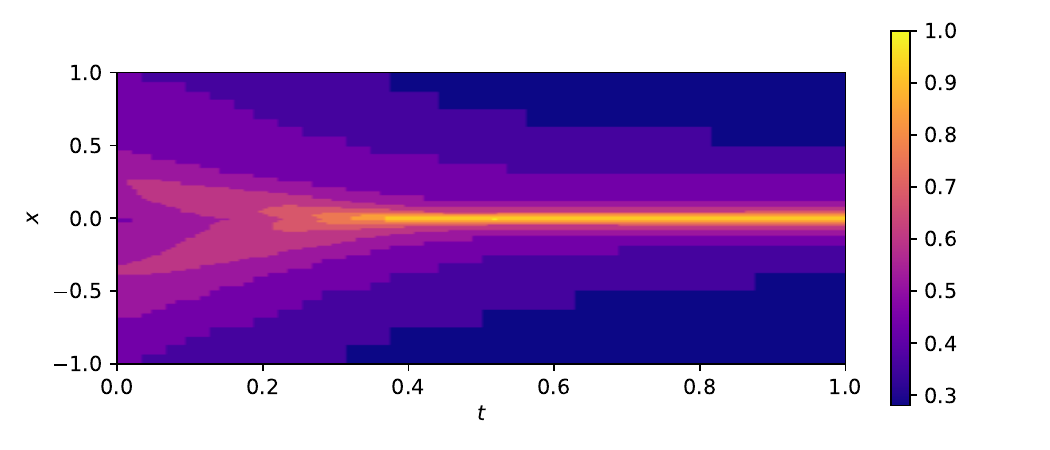}
    \end{subfigure}\\
    \begin{subfigure}[b]{\linewidth}
        \centering
        \vspace{-1em}
        \includegraphics[width=0.95\textwidth]{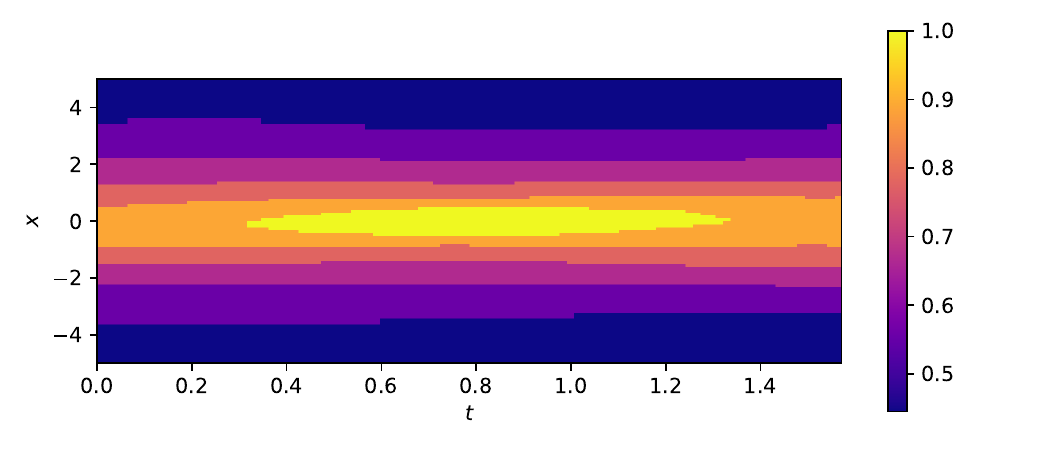}
    \end{subfigure}
    \vspace{-2.3em}
    \caption{\textbf{Branching densities}: relative density of the input branching distribution obtained via Algorithm~\ref{alg:greedy_input_branching} applied to Burgers' (top) and \schrodinger's (bottom) equations.}
    \label{fig:branching_viz}
\end{figure}

We show that \partialcrown is able to obtain tight upper bounds on the correctness conditions established in Definition~\ref{def:correctness}. We highlight in the case of the Diffusion-Sorption equation that relying on empirical lower bound estimates can be misleading -- using $10^4$ MC samples puts the maximum residual error at $1.10\times 10^{-3}$, while $10^6$ samples give an estimate of $21.09$ --, motivating the need for \partialcrown to obtain guarantees across the continuous domain. Note that the absolute values of the residual errors can be seen as a function of the PDE itself, and thus cannot be directly compared across different PINNs. However, in Appendix \ref{app:failure-identification} we effectively show how \partialcrown bounds can be used to detect failure cases in PINN training, highlighting another potential use of our framework on top of certifying well-trained ones.

One of the limitations of our method is unquestionably the running time, particularly for residual verification. This mostly comes down to the high number of branchings required as a result of the relative looseness of the \partialcrown bounds on each individual subdomain. The looseness of the bounds is likely worsened for higher-order PDEs with similar solution networks, since the PINN residual can be viewed in that case as a depth-wise extension of the original network (following Figure \ref{fig:partial-crown-graph}) which, as widely observed in the network verification community, degrades the tightness of the bounds for incomplete verifiers \citep{wang2018mixtrain} (see Appendix \ref{app:higher-order}). A similar argument can be made for higher dimensionality PINNs that \textit{require larger solution networks} (unlike those, e.g., in \citet{jin2021nsfnets, giampaolo2022physics}, which we omit from this work for the reasons in Section \ref{sec:exp_certification}). In these cases it is likely that one will need (i) tighter relaxations of the nonlinearities of the networks, and (ii) more efficient branching methods that allow us to compensate for the tightness loss in deeper networks.

For future work, it would be interesting to further study the connection between PINN \textit{correctness} errors as per Definition \ref{def:correctness} and solution errors, potentially connecting them for specific classes of PDEs by expanding the work of \citet{ryck2022generic}.

\section*{Impact Statement}
This paper presents work whose goal is to advance the field of Machine Learning. There are many potential societal consequences of our work, none which we feel must be specifically highlighted here.

\section*{Acknowledgments}
FE is supported by EPSRC Centre for Doctoral Training in Autonomous Intelligent Machines and Systems [EP/S024050/1] and Five AI Limited. PHST is supported by UKRI grant: Turing AI Fellowship EP/W002981/1, and by the Royal Academy of Engineering under the Research Chair and Senior Research Fellowships scheme.

\bibliography{references}
\bibliographystyle{icml2024}

\appendix
\onecolumn

\begin{figure*}[t]
    \centering
    \begin{subfigure}[b]{0.24\textwidth}
        \centering
        \includegraphics[width=\textwidth]{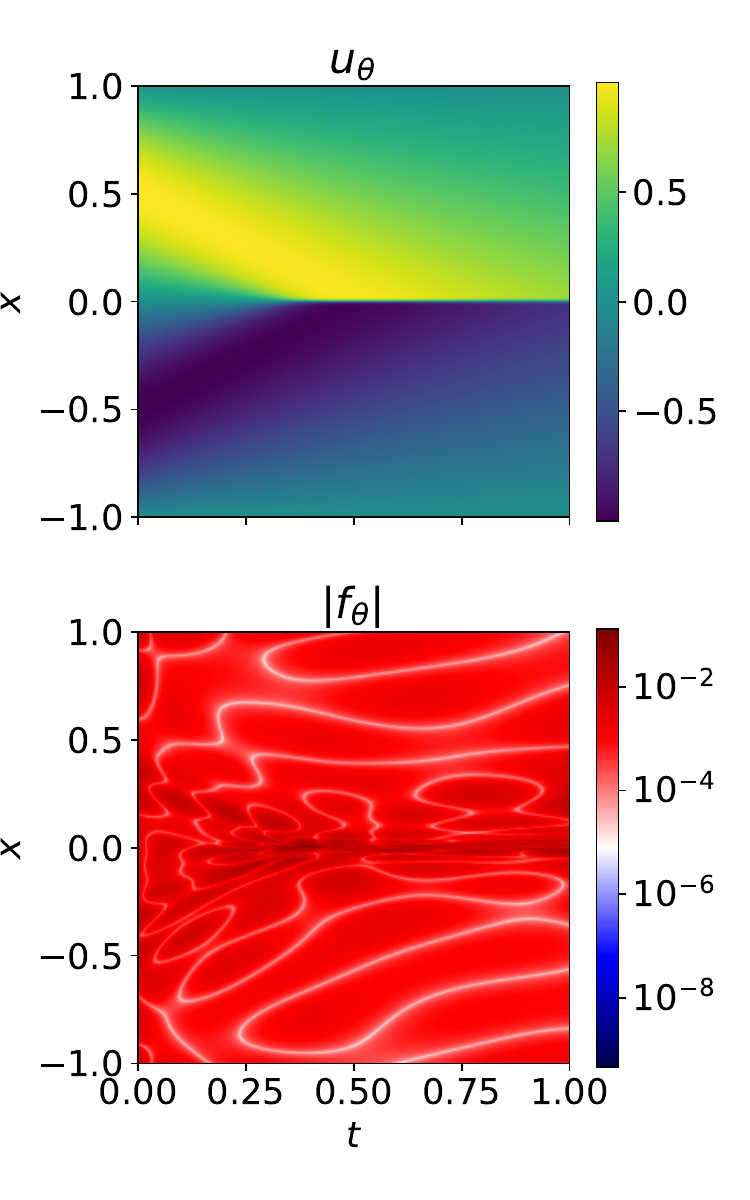}
        \vspace{-2em}
        \caption{}
        \label{fig:surfaces_burgers}
    \end{subfigure}
    \begin{subfigure}[b]{0.24\textwidth}
        \centering
        \includegraphics[width=\textwidth]{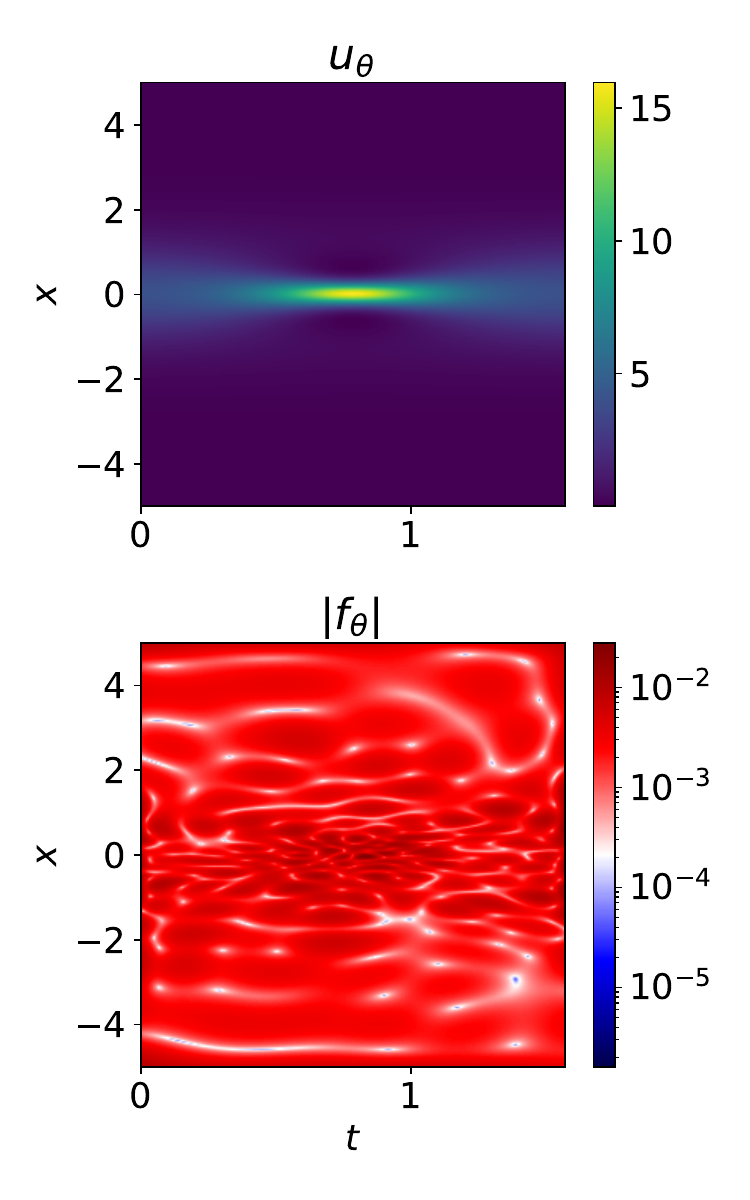}
        \vspace{-2em}
        \caption{}
        \label{fig:surfaces_schrodingers}
    \end{subfigure}
    \begin{subfigure}[b]{0.24\textwidth}
        \centering
        \includegraphics[width=\textwidth]{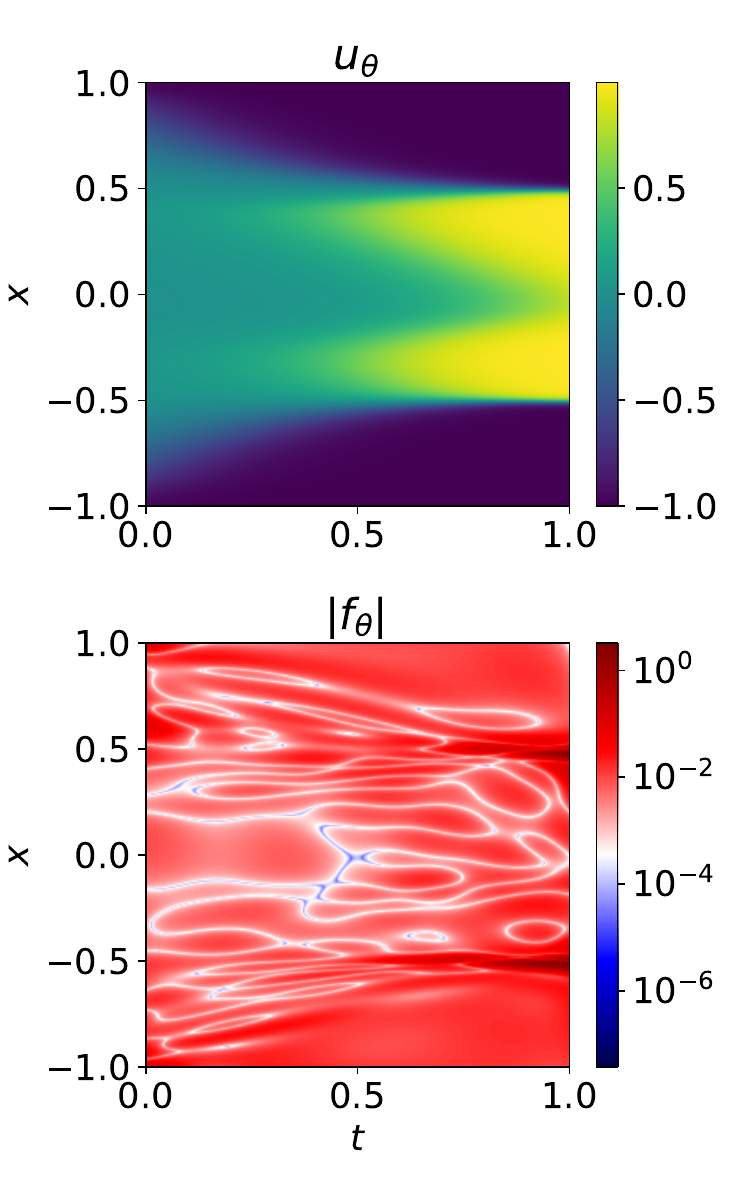}
        \vspace{-2em}
        \caption{}
        \label{fig:surfaces_allan_cahn}
    \end{subfigure}
    \begin{subfigure}[b]{0.24\textwidth}
        \centering
        \includegraphics[width=\textwidth]{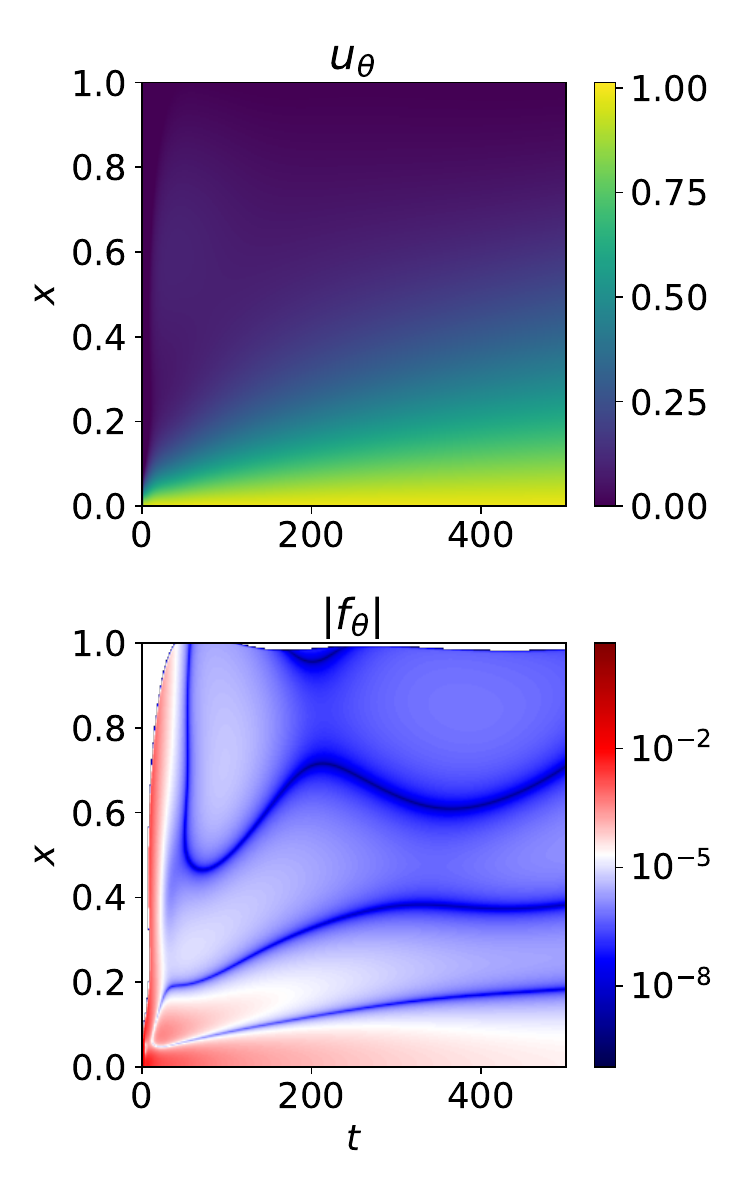}
        \vspace{-2em}
        \caption{}
        \label{fig:surfaces_diff_sorp}
    \end{subfigure}
    \vspace{-0.6em}
    \caption{\textbf{Certifying with \partialcrown}: visualization of the time evolution of $\solnet$, and the residual errors as a function of the spatial temporal domain (log-scale), $|\resnet|$, for \textbf{(a)} Burgers' equation~\citep{raissi2019physics}, \textbf{(b)} \schrodinger's equation~\citep{raissi2019physics}, \textbf{(c)} Allan-Cahn's equation~\citep{monaco2023training}, and \textbf{(d)} the Diffusion-Sorption equation~\citep{takamoto2022pdebench}.}
    \label{fig:all_residuals}
    \vspace{-1em}
\end{figure*}

\section{Reducing empirical and certified errors through Physics-Informed Adversarial Training}
\label{app:adversarial-training-and-analysis}

The goal of reducing the solution errors obtained by PINNs has been the research focus of several previous works \cite{kim2021dpm, krishnapriyan2021characterizing, shekarpaz2022piat}. To observe the effects of one of these different training schemes on the verified correctness certification of PINNs, we consider Physics-informed Adversarial Training (PIAT)~\citep{shekarpaz2022piat}.
The procedure consists in replacing the residual loss term from~\citet{raissi2019physics} with an adversarial version inspired by~\citet{madry2017towards}. While this procedure leads to improvements in the example PINNs from \citet{shekarpaz2022piat} and using our own implementation in Burgers' equation, we were unable to stably train \schrodinger's equation using PIAT. Since \schrodinger's equation is not considered in \citet{shekarpaz2022piat}, we only show PIAT results for Burgers' equation.

We solve the inner optimization problem using 5 PGD steps~\citep{madry2017towards}, and for $\epsilon=0.05$ and a step size of $1.25\epsilon$. To improve convergence, we warm start PIAT training using a standard training solution after 6,000 L-BFGS iterations. The results in Table~\ref{tab:burgers_tanh_piat} show that as expected PIAT improves both empirical and certified residual bounds.

\begin{table}[h]
    \caption{\textbf{PIAT on Burgers' equation}: Monte Carlo sampled maximum values ($10^6$ samples in 0.21s) and upper bounds computed using \partialcrown with $N_b$ branchings for \protect\circled{1} initial conditions ($t = 0$, $x \in \mathcal{D}$, $N_b=5k$), \protect\circled{2} boundary conditions ($t \in [0, T]$, $x = -1 \vee x = 1$, $N_b=5k$), and \protect\circled{3} residual norm ($t\in [0, T]$, $x \in \mathcal{D}$, $N_b=125k$), for a PINN trained using PIAT from~\citet{shekarpaz2022piat}.}
    \label{tab:burgers_tanh_piat}
    \center
    {
    \footnotesize
    \begin{tabular}{p{0.22\textwidth}p{0.01\textwidth}p{0.24\textwidth}P{0.11\textwidth}P{0.26\textwidth}}
        & & & MC - $\max$ & \partialcrown - $u_b$ (time [s])\\ \specialrule{1pt}{1pt}{1pt}

        \multirow{4}{*}{\shortstack[l]{\textbf{PIAT Burgers} \\ \citep{shekarpaz2022piat}}} & \circled{1} & $|\solnet(0, x) - u_0(x)|^2$ & $7.40\cdot10^{-6}$ & $8.18\cdot10^{-6}$ ($90.9$) \\\cline{2-5}
        & \multirow{2}{*}{\circled{2}} & $|\solnet(t, -1)|^2$ & $2.31\cdot10^{-7}$ & $3.32\cdot10^{-7}$ ($49.4$) \\
        & & $|\solnet(t, 1)|^2$ & $8.41\cdot10^{-8}$ & $1.39\cdot10^{-7}$ ($48.5$) \\\cline{2-5}
        & \circled{3} & $|\resnet(\x)|^2$ & $3.60\cdot10^{-3}$ & $2.39\cdot10^{-2}$ ($2.8\times 10^5$) \\ \specialrule{0.75pt}{1pt}{1pt}
    \end{tabular}
    }
\end{table}

\begin{wrapfigure}[15]{R}{0.5\textwidth}
    \vspace{-2em}
    \begin{minipage}{\linewidth}
        \begin{figure}[H]
            \centering
            \includegraphics[width=\linewidth]{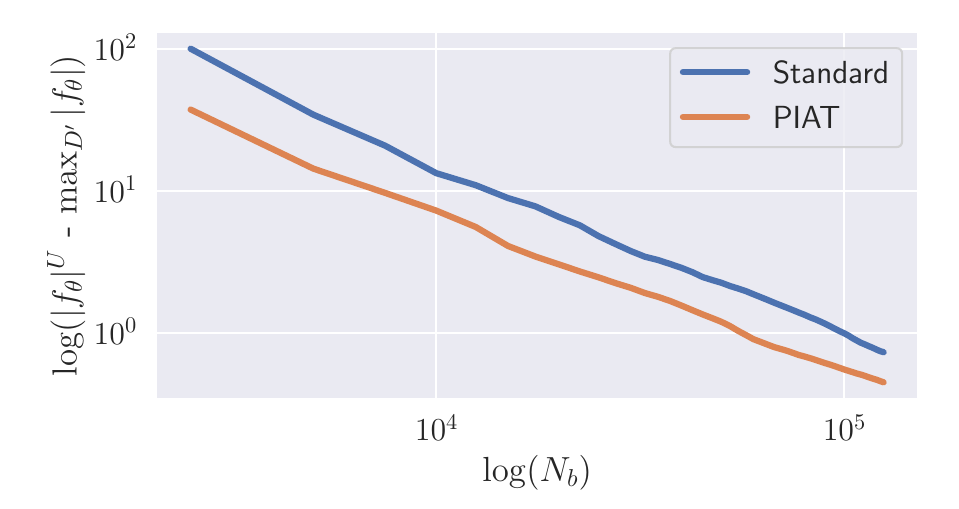}
            \caption{\textbf{Certification Convergence}: log-log plot of the relative convergence of \partialcrown certification for a standard trained PINN (in blue) and PIAT (in orange).}
            \label{fig:relative_convergence}
        \end{figure}
    \end{minipage}
\end{wrapfigure}

\paragraph{Certification convergence in PIAT vs. standard training} The regularization provided by adversarial training often leads to verification algorithms converging faster to tighter lower and upper bounds. We investigate whether this is the case with \partialcrown's greedy branching strategy by comparing the \textit{relative convergence} (\ie, the deviation between the upper bound and the empirical maximum, $|\resnet|^U - \max_{D'} |\resnet|$) for the first $125k$ splits of PINNs trained in the standard and PIAT cases. The results presented in Figure~\ref{fig:relative_convergence} show that adversarial training leads to quicker convergence, requiring a lower number of branches to reach the same error when compared to standard. This suggests that our method, while already efficient, would benefit from smarter training strategies that lead to lower residual errors.

\section{\texorpdfstring{$\partial$}{∂}-CROWN for Failure Identification}
\label{app:failure-identification}

In Section \ref{sec:residual-solution-connection} we establish the empirical correlation between residual and solution errors for PINNs at different training stages (Figure \ref{fig:solution_residual_error}). While comparing PINN errors for different PDEs is not easy due to residual scaling factors, note from Table \ref{tab:certifying_crown} that the errors obtained for Burgers’ and Schrödinger’s equations are orders of magnitude lower than the ones for the Allen-Cahn and Diffusion-Sorption equations. Even with different residual tolerances, this would suggest the maximum solution error of the latter, harder to train PINNs should be higher. 

Table \ref{tab:failure_identification} presents the residual bounds obtained using \partialcrown as well as the maximum solution error with respect to a numerical solver for each of the four PINNs studied, which empirically reinforces that correlation. E.g., Burgers' equation has a maximum solution error of $3.78\times 10^{-3}$, which is significantly lower than the trained Allen-Cahn PINN at $0.86$, as expected from the residual bounds of $1.80\times 10^{-2}$ and $10.76$, respectively. This contextualizes the results of Table \ref{tab:certifying_crown} and showcases our framework can identify weaker models. 

\begin{table}[t]
    \caption{\textbf{Failure identification using residual bounds}: empirical analysis of the connection between the residual bounds obtained by \partialcrown and the maximum solution error computed with respect to a numerical solver, $u$, over a sampled dataset $\mathcal{D}'$. The range of the solution values over the samples in $\mathcal{D}'$ are included for ease of comparison.}
    \label{tab:failure_identification}
    \center
    {
    \footnotesize
    \begin{tabular}{p{0.2\textwidth}P{0.16\textwidth}P{0.25\textwidth}P{0.25\textwidth}}
        & Residual \partialcrown $u_b$ & Max solution error ($\max_{\mathcal{D}'} |u_\theta - u|$) & Solution range ($\min/\max_{\mathcal{D}'} u_\theta$) \\ \specialrule{1pt}{1pt}{1pt}
        
        Burgers & $1.80\times 10^{-2}$ & $3.78\times 10^{-3}$ & $[-1, 1]$\\
        \schrodinger & $7.67\times 10^{-4}$ & $7.05\times 10^{-5}$ & $[1.82 \times 10^{-4}, 15.98]$ \\
        Allen-Cahn & $10.76$ & $0.86$ & $[-1, 1]$ \\
        Diffusion-Sorption & $21.09$ & $0.99$ & $[0, 1]$ \\ \specialrule{0.75pt}{1pt}{1pt}
    \end{tabular}
    }
\end{table}

\section{Ablation on \texorpdfstring{$N_b$}{N\_b}}
\label{app:ablation_Nb}

We use $N_b = 2M$ for all the PINNs evaluated in this paper. A high number of branchings is required to obtain the tight bounds presented in Table \ref{tab:certifying_crown}. To justify that need, we have added plots of the variation of the obtained residual bound for Burgers’ and \schrodinger’ equations in Figure \ref{fig:ablation_n_b}. Generally for both these PINNs we only get closer than one order of magnitude from the empirical estimates (considering the empirical MC sampled errors from Table \ref{tab:certifying_crown}) by using around $2M$ branches.

\begin{figure}[t]
    \centering
    \begin{subfigure}[b]{0.49\textwidth}
        \centering
        \includegraphics[width=\textwidth]{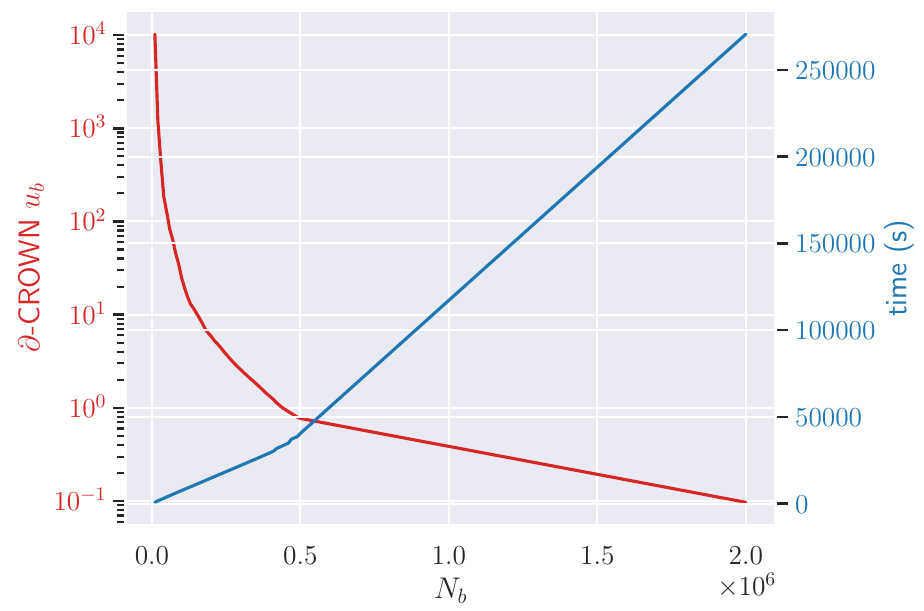}
        \caption{}
        \label{fig:ablation_n_b_burgers}
    \end{subfigure}
    \begin{subfigure}[b]{0.45\textwidth}
        \centering
        \includegraphics[width=\textwidth]{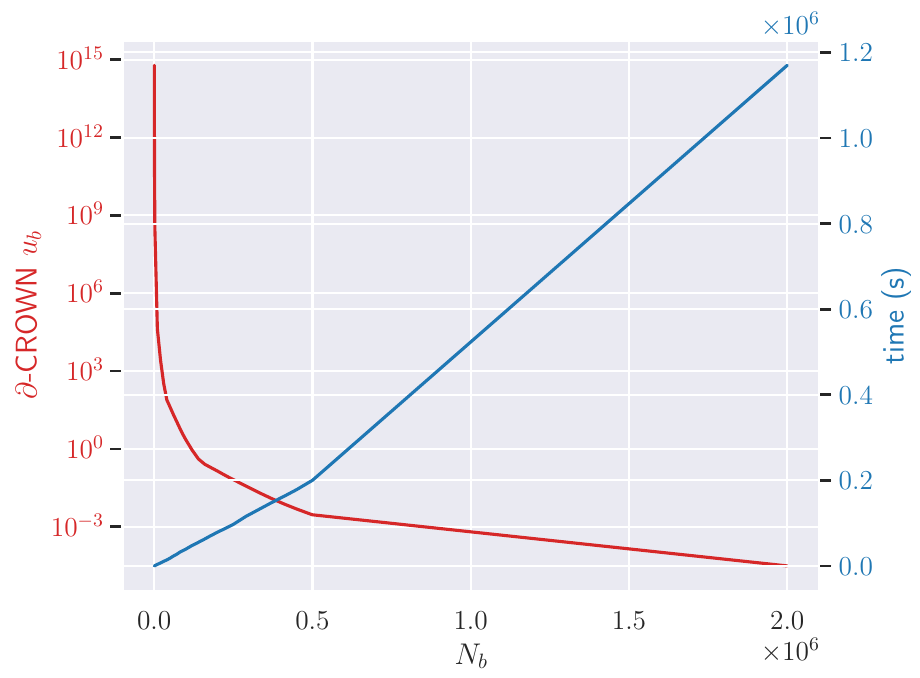}
        \caption{}
        \label{fig:ablation_n_b_schrodingers}
    \end{subfigure}
    \caption{\textbf{Ablation on $N_b$}: comparison of the residual error bounds ($|f_\theta|^2$) and runtime performance of our framework, \partialcrown on (a) Burgers' equation and (b) \schrodinger's equation.}
    \label{fig:ablation_n_b}
\end{figure}

\section{Proofs of partial derivative computations}
\label{app:proofs}

\subsection{Proof of Lemma \texorpdfstring{\ref{lem:first_derivative}}{1}: computing \texorpdfstring{$\partial_{\x_i} \solnet$}{∂\_\{x\_i\} u\_θ}}
\label{app:proof_first_derivative}

Let us now derive $\partial_{\x_i} \solnet(\x)$ for a given $i \in \{1,...,n_0\}$. Starting backwards from the last layer and applying the chain rule we obtain:
$$
    \partial_{\x_i} \solnet(\x) = \frac{\partial \preout^{(L)}}{\partial \out^{(L-1)}} \cdot \frac{\partial \out^{(L-1)}}{\partial \out^{(L-2)}} \cdot ... \cdot \frac{\partial \out^{(1)}}{\partial \x}\cdot \frac{\partial \x}{\partial x_i}
$$

Given that $\partial_{\x_i} x = \mathbf{e}_i$ and $\frac{\partial \preout^{(L)}}{\partial \out^{(L-1)}} = \W^{(L)}$, all that's left to compute to obtain the full expression is $\frac{\partial \out^{(k)}}{\partial \out^{(k-1)}}$, $k\in\{L-1,...,1\}$. Note that, for simplicity of the expressions, $\out^{(0)} = \mathbf{x}$. For every element $j\in\{1,...,d_k\}$ of $\out^{(k)}$ denoted by $\out^{(k)}_{j}$, we have:
$$
    \frac{\partial \out^{(k)}_{j}}{\partial \out^{(k-1)}} = \sigma'\left(\W^{(k)}_{[j, :]} \out^{(k-1)} + \bias^{(k)}_{j}\right) \W^{(k)}_{[j, :]}
$$
where $\W^{(k)}_{j,:}$ denotes the $j$-th row of $\W^{(k)}$, and $\bias^{(k)}_{j}$ the $j$-th element of $\bias$. Thus, the final expression can be obtained by stacking the columns of the previous expression to obtain the full Jacobian:
$$
    \frac{\partial \out^{(k)}}{\partial \out^{(k-1)}} = \text{diag}\left[\sigma'\left(\W^{(k)}\out^{(k-1)} + \bias^{(k)}\right)\right]\cdot \W{^{(k)}}
$$

This concludes the proof.

\subsection{Proof of Lemma \texorpdfstring{\ref{lem:second_derivative}}{2}: computing  \texorpdfstring{$\partial_{\x_i^2} \solnet$}{∂\_\{x\^2\_i\} u\_θ}}
\label{app:proof_derivation_second_derivative}

Given the result obtained in Appendix \ref{app:proof_first_derivative}, let us now derive $\partial_{\x_i^2} \solnet(\x)$ for a given $i \in \{1,...,d_0\}$.
Starting backwards from the last layer of $\partial_{\x_i} \solnet$ and applying the chain rule we obtain:
\begin{align*}
    \partial_{\x_i^2} \solnet & = \frac{\partial}{\partial x_i}\left(\frac{\partial \preout^{(L)}}{\partial \out^{(L-1)}} \cdot \frac{\partial \out^{(L-1)}}{\partial \out^{(L-2)}} \cdot ... \cdot \frac{\partial \out^{(1)}}{\partial \x}\cdot \frac{\partial \x}{\partial x_i}\right) = \W^{(L)} \partial_{\x_i^2} \out^{(L-1)}
\end{align*}

Now the same can be applied to $\partial_{\x_i^2} \out^{(L-1)}$, and in general to $\partial_{\x_i^2} \out^{(k)}$ to obtain:
\begin{align*}
    \partial_{\x_i^2} \out^{(k)} & = \frac{\partial}{\partial x_i}\left(\frac{\partial \out^{(k)}}{\partial \out^{(k-1)}} \partial_{\x_i} \out^{(k-1)}\right) = \frac{\partial^2 \out^{(k)}}{\partial x_i\partial \out^{(k-1)}} \partial_{\x_i} \out^{(k-1)} + \frac{\partial \out^{(k)}}{\partial \out^{(k-1)}} \partial_{\x_i^2} \out^{(k-1)},
\end{align*}
forming a recursion which can be taken until the first layer of $\partial_{\x_i} \solnet$, \ie,:
\begin{align*}
    \partial_{\x_i^2} \out^{(1)} & = \frac{\partial}{\partial x_i}\left(\frac{\partial \out^{(1)}}{\partial \x} \cdot \mathbf{e}_i\right) = \frac{\partial^2 \out^{(1)}}{\partial x_i \partial \x}\cdot \mathbf{e}_i.
\end{align*}
With the computation of $\partial_{\x_i} \solnet$, both $\partial_{\x_i} \out^{(k-1)}$ and $\frac{\partial \out^{(k)}}{\partial \out^{(k-1)}}$ are known. As such, the only missing pieces in the general recursion is the computation of $\frac{\partial^2 \out^{(k)}}{\partial x_i\partial \out^{(k-1)}}$. Recall from the previous section that $\frac{\partial \out^{(k)}}{\partial \out^{(k-1)}} = \text{diag}\left[\sigma'\left(\W^{(k)}\out^{(k-1)} + \bias^{(k)}\right)\right] \W{^{(k)}}$. As such:
\begin{align*}
    \frac{\partial^2 \out^{(k)}}{\partial x_i\partial \out^{(k-1)}} & = \frac{\partial}{\partial x_i}\left(\text{diag}\left[\sigma'\left(\W^{(k)}\out^{(k-1)} + \bias^{(k)}\right)\right] \W{^{(k)}}\right).
\end{align*}
Following the element-wise reasoning from above, we have that:
\begin{align*}
    \frac{\partial^2 \out^{(k)}_{j}}{\partial x_i\partial \out^{(k-1)}} & = \sigma''\left(\W^{(k)}_{j,:}\out^{(k-1)} + \bias^{(k)}_{j}\right) \frac{\partial}{\partial x_i}\left(\W^{(k)}_{j,:} \out^{(k-1)} + \bias^{(k)}_{j}\right) \W^{(k)}_{j,:} \\
    & = \sigma''\left(\W^{(k)}_{j,:}\out^{(k-1)} + \bias^{(k)}_{j}\right) \left(\W^{(k)}_{j,:} \frac{\partial \out^{(k-1)}}{\partial x_i}\right) \W^{(k)}_{j,:} 
\end{align*}

Stacking as in the previous case, we obtain:
\begin{align*}
    \frac{\partial^2 \out^{(k)}}{\partial x_i\partial \out^{(k-1)}} & = \text{diag}\left[\sigma''\left(\W^{(k)}\out^{(k-1)} + \bias^{(k)}\right) \left(\W^{(k)} \partial_{\x_i} \out^{(k-1)}\right)\right] \W^{(k)},
\end{align*}
completing the derivation of $\partial_{\x_i^2} \solnet(\x)$.

\subsection{Theorem \ref{thm:first_derivative_bounds}: Formal Statement and Proof}
\label{app:proof_bounding_first_derivative}

\begin{reptheorem}{thm:first_derivative_bounds}[\partialcrown: linear lower and upper bounding $\partial_{\x_i} \solnet$]
For every $j\in\{1,\dots,d_L\}$ there exist two functions $\partial_{\x_i} u^{U}_{\theta, j}$ and $\partial_{\x_i} u^{L}_{\theta, j}$ such that, $\forall \x\in \mathcal{C}$ it holds that $\partial_{\x_i} u^{L}_{\theta, j} \leq \partial_{\x_i} u_{\theta, j} \leq \partial_{\x_i} u^{U}_{\theta, j}$, with:
\begin{align*}
    \partial_{\x_i} u^{U}_{\theta, j} = \phi^{(1), U}_{0,j,i} + \sum_{r=1}^{d_0} \phi^{(1), U}_{1, j,r} \x + \phi^{(1), U}_{2,j,r}\\
    \partial_{\x_i} u^{L}_{\theta, j} = \phi^{(1), L}_{0,j,i} + \sum_{r=1}^{d_0} \phi^{(1), L}_{1, j,r} \x + \phi^{(1), L}_{2,j,r}
\end{align*}
where for $p\in\{0,1,2\}$, $\phi^{(1), U}_{p, j, r}$ and $\phi^{(1), L}_{p, j, r}$ are functions of $\W^{(k)}$, $\preout^{(k), L}$, $\preout^{(k), U}$, $\vec{A}^{(k), L}$, $\vec{A}^{(k), U}$ $\vec{a}^{(k), L}$, and $\vec{a}^{(k), U}$, and can be computed using a recursive closed-form expression in $\mathcal{O}(L)$ time.
\end{reptheorem}

\textit{Proof:} Assume that through the computation of the previous bounds on $\solnet$, the pre-activation layer outputs of $\solnet$, $\preout^{(k)}$, are lower and upper bounded by linear functions defined as $\vec{A}^{(k), L} \x + \vec{a}^{(k), L} \leq \preout^{(k)} \leq \vec{A}^{(k), U} \x + \vec{a}^{(k), U}$ and $\preout^{(k), L} \leq \preout^{(k)} \leq \preout^{(k), U}$ for $x\in \mathcal{C}$.

Take the upper and lower bound functions for $\partial_{\x_i} \solnet$ as $\partial_{\x_i} \solnet^U$ and $\partial_{\x_i} \solnet^L$, respectively, and the upper and lower bound functions for $\partial_{\x_i} \out^{(k)}$ as $\partial_{\x_i} \out^{(k), U}$ and $\partial_{\x_i} \out^{(k), L}$, respectively. For the sake of simplicity of notation, we define $\vec{B}^{(k), +} = \mathbb{I}\left(\vec{B}^{(k)} \geq 0\right) \odot \vec{B}^{(k)}$ and $\vec{B}^{(k), -} = \mathbb{I}\left(\vec{B}^{(k)} < 0\right) \odot \vec{B}^{(k)}$.

Working backwards from $\partial_{\x_i} \solnet$, we apply the same idea from CROWN \citep{zhang2018efficient}:
\begin{align}
\label{eq:partial_xi_solnet_not_substituted}
\begin{split}
\partial_{\x_i} \solnet^U & = \W^{(L),+} \partial_{\x_i} \out^{(L-1), U} + \W^{(L),-} \partial_{\x_i} \out^{(L-1), L}\\
\partial_{\x_i} \solnet^L & = \W^{(L),+} \partial_{\x_i} \out^{(L-1), L} + \W^{(L),-} \partial_{\x_i} \out^{(L-1), U}
\end{split}
\end{align}
We continue to apply this backwards propagation to $\partial_{\x_i} \out^{(L-1)}$ to obtain $\partial_{\x_i} \out^{(L-1), U}$ and $\partial_{\x_i} \out^{(L-1), L}$. Recall that $\partial_{\x_i} \out^{(k)} = \partial_{\out^{(k-1)}} \out^{(k)} \partial_{\x_i} \out^{(k-1)}$, that is, for $j \in \{1,\dots,d_k\}$ we have $\partial_{\x_i} \out^{(k)}_{j} = \partial_{\out^{(k-1)}} \out^{(k)}_{j,:} \partial_{\x_i} \out^{(k-1)} = \sum_{n=1}^{d_{k-1}} \partial_{\out^{(k-1)}} \out^{(k)}_{j,n} \partial_{\x_i} \out^{(k-1)}_{n}$.

We resolve the bilinear dependencies of each $\partial_{\x_i} \out^{(k)}_{j}$ by relaxing it using a convex combination of the upper and lower bounds obtained by the McCormick envelopes of the product. Assuming that $\partial_{\out^{(k-1)}} \out^{(k), L}_{j,n} \leq  \partial_{\out^{(k-1)}} \out^{(k)}_{j,n} \leq \partial_{\out^{(k-1)}} \out^{(k), U}_{j,n}$ and $\partial_{\x_i} \out^{(k-1), L}_{n} \leq \partial_{\x_i} \out^{(k-1)}_{n} \leq \partial_{\x_i} \out^{(k-1), U}_{n}$, we have that:
\begin{align}
\label{eq:partial_xi_hat_z_non_substituted}
\begin{split}
    \partial_{\x_i} \out^{(k)}_{j} \leq \partial_{\x_i} \out^{(k),U}_{j} =\, \sum_{n=1}^{d_{k-1}} \alpha^{(k)}_{0, j,n} \partial_{\x_i} \out^{(k-1)}_{n} + \alpha^{(k)}_{1, j,n} \partial_{\out^{(k-1)}} \out^{(k)}_{j,n} + \alpha^{(k)}_{2,j,n}\\
    \partial_{\x_i} \out^{(k)}_{j} \geq \partial_{\x_i} \out^{(k),L}_{j} =\, \sum_{n=1}^{d_{k-1}} \beta^{(k)}_{0, j,n} \partial_{\x_i} \out^{(k-1)}_{n} + \beta^{(k)}_{1, j,n} \partial_{\out^{(k-1)}} \out^{(k)}_{j,n} + \beta^{(k)}_{2, j,n},
\end{split}
\end{align}
for:
\begin{align*}
    \alpha^{(k)}_{0, j,n} & = \eta^{(k)}_{j,n} \partial_{\out^{(k-1)}} \out^{(k), U}_{j,n} + \left(1-\eta^{(k)}_{j,n}\right) \partial_{\out^{(k-1)}} \out^{(k), L}_{j,n}\\
    \alpha^{(k)}_{1, j,n} & = \eta^{(k)}_{j,n} \partial_{\x_i} \out^{(k-1), L}_{n} + \left(1-\eta^{(k)}_{j,n}\right) \partial_{\x_i} \out^{(k-1), U}_{n}\\
    \alpha^{(k)}_{2, j,n} & = -\eta^{(k)}_{j,n} \partial_{\out^{(k-1)}} \out^{(k), U}_{j,n} \partial_{\x_i} \out^{(k-1), L}_{n} - \left(1-\eta^{(k)}_{j,n}\right) \partial_{\out^{(k-1)}} \out^{(k), L}_{j,n} \partial_{\x_i} \out^{(k-1), U}_{n}\\
    \beta^{(k)}_{0, j,n} & = \zeta^{(k)}_{j,n} \partial_{\out^{(k-1)}} \out^{(k), L}_{j,n} + \left(1-\zeta^{(k)}_{j,n}\right) \partial_{\out^{(k-1)}} \out^{(k), U}_{j,n}\\
    \beta^{(k)}_{1, j,n} & = \zeta^{(k)}_{j,n} \partial_{\x_i} \out^{(k-1), L}_{n} + \left(1-\zeta^{(k)}_{j,n}\right) \partial_{\x_i} \out^{(k-1), U}_{n}\\
    \beta^{(k)}_{2, j,n} & = -\zeta^{(k)}_{j,n} \partial_{\out^{(k-1)}} \out^{(k), L}_{j,n} \partial_{\x_i} \out^{(k-1), L}_{n}  - \left(1-\zeta^{(k)}_{j,n}\right) \partial_{\out^{(k-1)}} \out^{(k), U}_{j,n} \partial_{\x_i} \out^{(k-1), U}_{n},
\end{align*}
where $\eta^{(k)}_{j,n}$ and $\zeta^{(k)}_{j,n}$ are convex coefficients that can be set as hyperparameters, or optimized for as in $\alpha$-CROWN \citep{xu2020fast}.

To continue the backward propagation, we now need to bound the components of $\partial_{\out^{(k-1)}} \out^{(k)}$. Recall from Lemma \ref{lem:first_derivative} that $\partial_{\out^{(k-1)}} \out^{(k)} = \text{diag}\left[\sigma'\left(\preout^{(k-1)}\right)\right] \W{^{(k)}}$, and $\partial_{\out^{(k-1)}} \out^{(k)}_{j,:} = \sigma'\left(\preout^{(k-1)}_{j}\right) \W^{(k)}_{j,:}$ for $j\in\{1,\dots,d_k\}$.

Since $\preout^{(k), L}_{j} \leq \preout^{(k)}_{j} \leq \preout^{(k), U}_{j}$, we can obtain a linear upper and lower bound relaxation for $\sigma'\left(\preout^{(k)}_{j}\right)$, such that $\gamma^{(k),L}_{j}\left(\preout^{(k)}_{j} + \delta^{(k),L}_{j}\right) \leq \sigma'\left(\preout^{(k)}_{j}\right)\leq \gamma^{(k),U}_{j}\left(\preout^{(k)}_{j} + \delta^{(k),U}_{j}\right)$. With this, we can proceed to bound $\partial_{\out^{(k-1)}} \out^{(k)}_{j,:}$ as:
\begin{align}
\label{eq:partial_hat_z_k_1_hat_z_non_substituted}
\begin{split}
\partial_{\out^{(k-1)}} \out^{(k)}_{j,:} & \leq \underbrace{\left(\gamma^{(k),U}_{j} \W^{(k),+}_{j,:} + \gamma^{(k),L}_{j} \W^{(k),-}_{j,:}\right)}_{\iota^{(k)}_{0, j,:}}  \preout^{(k)}_{j} + \underbrace{\left(\gamma^{(k),U}_{j} \delta^{(k),U}_{j} \W^{(k),+}_{j,:} + \gamma^{(k),L}_{j} \delta^{(k),L}_{j} \W^{(k),-}_{j,:}\right)}_{\iota^{(k)}_{1, j,:}}\\
\partial_{\out^{(k-1)}} \out^{(k)}_{j,:} & \geq \underbrace{\left(\gamma^{(k),L}_{j} \W^{(k),+}_{j,:} + \gamma^{(k),U}_{j} \W^{(k),-}_{j,:}\right)}_{\lambda^{(k)}_{0, j,:}} \preout^{(k)}_{j} + \underbrace{\left(\gamma^{(k),L}_{j} \delta^{(k),L}_{j} \W^{(k),+}_{j,:} + \gamma^{(k),U}_{j} \delta^{(k),U}_{j} \W^{(k),-}_{j,:}\right)}_{\lambda^{(k)}_{1, j,:}}
\end{split}
\end{align}
At this point, one could continue the back-substitution process using the bounds from CROWN \citep{zhang2018efficient}. However, for the sake of efficiency, we use instead the pre-computed inequalities from propagating bounds through $\solnet$: $\vec{A}^{(k), U} \x + \vec{a}^{(k), U} \leq \preout^{(k)} \leq \vec{A}^{(k), L} \x + \vec{a}^{(k), L}$. Substituting this in Equation \ref{eq:partial_hat_z_k_1_hat_z_non_substituted}, we obtain:
\begin{align}
\label{eq:partial_hat_z_k_1_hat_z_substituted}
\begin{split}
\partial_{\out^{(k-1)}} \out^{(k), U}_{j,:} & = \underbrace{\left(\iota^{(k),+}_{0, j,:} \vec{A}^{(k), U}_{j,:} + \iota^{(k),-}_{0, j,:} \vec{A}^{(k), L}_{j,:}\right)}_{\iota^{(k)}_{2, j,:}} \x + \underbrace{\iota^{(k),+}_{0, j,:} \vec{a}^{(k), U}_{j} + \iota^{(k),-}_{0, j,:} \vec{a}^{(k), L}_{j} + \iota^{(k)}_{1, j,:}}_{\iota^{(k)}_{3, j,:}}\\
\partial_{\out^{(k-1)}} \out^{(k), L}_{j,:} & = \underbrace{\left(\lambda^{(k),+}_{0, j,:} \vec{A}^{(k), L}_{j,:} + \lambda^{(k),-}_{0, j,:} \vec{A}^{(k), U}_{j,:}\right)}_{\lambda^{(k)}_{2, j,:}} \x + \underbrace{\lambda^{(k),+}_{0, j,:} \vec{a}^{(k), L}_{j} + \lambda^{(k),-}_{0, j,:} \vec{a}^{(k), U}_{j} + \lambda^{(k)}_{1, j,:}}_{\lambda^{(k)}_{3, j,:}}
\end{split}
\end{align}
In practice, we can use Equation \ref{eq:partial_hat_z_k_1_hat_z_substituted} to compute the required $\partial_{\out^{(k-1)}} \out^{(k), L}_{j,n}$ and $\partial_{\out^{(k-1)}} \out^{(k), U}_{j,n}$ for the McCormick relaxation that leads to Equation \ref{eq:partial_xi_hat_z_non_substituted}. By back-substituting the result of Equation \ref{eq:partial_hat_z_k_1_hat_z_substituted} in Equation \ref{eq:partial_xi_hat_z_non_substituted}, we obtain an expression for the upper and lower bounds on $\partial_{\x_i} \out^{(k)}_{j}$ that only depends on $\partial_{\x_i} \out^{(k-1)}$ and $\x$:
\begin{align}
\label{eq:partial_xi_hat_z_substituted}
\begin{split}
    \partial_{\x_i} \out^{(k),U}_{j} =\, \sum_{n=1}^{d_{k-1}} \alpha^{(k)}_{0, j,n} \partial_{\x_i} \out^{(k-1)}_{n} + \alpha^{(k)}_{3, j,n} \x + \alpha^{(k)}_{4,j,n}\\
    \partial_{\x_i} \out^{(k),L}_{j} =\, \sum_{n=1}^{d_{k-1}} \beta^{(k)}_{0, j,n} \partial_{\x_i} \out^{(k-1)}_{n} + \beta^{(k)}_{3, j,n} \x + \beta^{(k)}_{4, j,n},
\end{split}
\end{align}
where:
\begin{align*}
    \alpha^{(k)}_{3, j,n} = \alpha^{(k),+}_{1, j,n} \iota^{(k)}_{2, j,n} + \alpha^{(k),-}_{1, j,n} \lambda^{(k)}_{2, j,n},\qquad
    \alpha^{(k)}_{4, j,n} = \alpha^{(k),+}_{1, j,n} \iota^{(k)}_{3, j,n} + \alpha^{(k),-}_{1, j,n} \lambda^{(k)}_{3, j,n} + \alpha^{(k)}_{2, j,n}\\
    \beta^{(k)}_{3, j,n} = \beta^{(k),+}_{1, j,n} \lambda^{(k)}_{2, j,n} + \beta^{(k),-}_{1, j,n} \iota^{(k)}_{2, j,n},\qquad
    \beta^{(k)}_{4, j,n} = \beta^{(k),+}_{1, j,n} \lambda^{(k)}_{3, j,n} + \beta^{(k),-}_{1, j,n} \iota^{(k)}_{3, j,n} + \alpha^{(k)}_{2, j,n}
\end{align*}
Given Equation \ref{eq:partial_xi_hat_z_substituted}, we now have a recursive expression for each of the blocks that compose the computation of $\partial_{\x_i} \solnet$, which allows us to obtain a closed form expression for $\partial_{\x_i} \solnet^U$ and $\partial_{\x_i} \solnet^L$ by applying recursive back-substitution starting with Equation \ref{eq:partial_xi_solnet_not_substituted}. Let us begin by performing back-substitution to the result in Equation \ref{eq:partial_xi_hat_z_substituted} for layer $L-1$:
\begin{align}
    \partial_{\x_i} \out^{(L-1),U}_{j} = & \sum_{n=1}^{d_{L-2}} \alpha^{(L-1)}_{0, j,n} \partial_{\x_i} \out^{(L-2)}_{n} + \alpha^{(L-1)}_{3, j,n} \x + \alpha^{(L-1)}_{4,j,n} \label{eq:first_ub} \\
    = & \sum_{n=1}^{d_{L-2}} \alpha^{(L-1)}_{0, j,n} \left(\sum_{r=1}^{d_{L-3}} \mu^{(L-2)}_{0, n,r} \partial_{\x_i} \out^{(L-3)}_{r} + \mu^{(L-2)}_{3, n,r} \x + \mu^{(L-2)}_{4,n,r}\right) + \alpha^{(L-1)}_{3, j,n} \x + \alpha^{(L-1)}_{4,j,n}\\
    = & \sum_{n=1}^{d_{L-2}} \alpha^{(L-1)}_{0, j,n} \left(\sum_{r=1}^{d_{L-3}} \mu^{(L-2)}_{0, n,r} \partial_{\x_i} \out^{(L-3)}_{r}\right) + \alpha^{(L-1)}_{0, j,n} \left(\sum_{r=1}^{d_{L-3}} \mu^{(L-2)}_{3, n,r} \x + \mu^{(L-2)}_{4,n,r}\right) + \alpha^{(L-1)}_{3, j,n} \x + \alpha^{(L-1)}_{4,j,n}\\
    = & \sum_{r=1}^{d_{L-3}} \left(\sum_{n=1}^{d_{L-2}} \alpha^{(L-1)}_{0, j,n} \mu^{(L-2)}_{0, n,r}\right) \partial_{\x_i} \out^{(L-3)}_{r} + \\
    & + \left(\sum_{n=1}^{d_{L-2}} \alpha^{(L-1)}_{0, j,n} \left(\mu^{(L-2)}_{3, n,r} \x + \mu^{(L-2)}_{4,n,r}\right) + \frac{1}{d_{L-3}} \left(\alpha^{(L-1)}_{3, j,n} \x + \alpha^{(L-1)}_{4,j,n}\right)\right)\\
    = & \sum_{r=1}^{d_{L-3}} \rho^{(L-2)}_{0,j,r} \partial_{\x_i} \out^{(L-3)}_{r} + \left(\sum_{n=1}^{d_{L-2}} \alpha^{(L-1)}_{0, j,n} \mu^{(L-2)}_{3, n,r} + \frac{1}{d_{L-3}} \alpha^{(L-1)}_{3, j,n}\right) \x + \\
    & + \left(\sum_{n=1}^{d_{L-1}} \alpha^{(L-1)}_{0, j,n} \mu^{(L-2)}_{4, n,r} + \frac{1}{d_{L-3}} \alpha^{(L-1)}_{4, j,n}\right)\\
    = & \sum_{r=1}^{d_{L-3}} \rho^{(L-2)}_{0,j,r} \partial_{\x_i} \out^{(L-3)}_{r} + \rho^{(L-2)}_{1,j,r} \x + \rho^{(L-2)}_{2,j,r} \label{eq:last_ub},
\end{align}
where:
\begin{align*}
    \rho^{(L-2)}_{0, j,r} & = \sum_{n=1}^{d_{L-2}} \alpha^{(L-1)}_{0, j,n} \mu^{(L-2)}_{0, n,r}\\
    \rho^{(L-2)}_{1, j,r} & = \sum_{n=1}^{d_{L-2}} \alpha^{(L-1)}_{0, j,n} \mu^{(L-2)}_{3, n,r} + \frac{1}{d_{L-2}} \alpha^{(L-1)}_{3, j,n}\\
    \rho^{(L-2)}_{2, j,r} & = \sum_{n=1}^{d_{L-2}} \alpha^{(L-1)}_{0, j,n} \mu^{(L-2)}_{4, n,r} + \frac{1}{d_{L-2}} \alpha^{(L-1)}_{4, j,n},
\end{align*}
and:
\begin{align*}
    \mu^{(L-2)}_{p, n,:} = 
    \begin{cases}
    \alpha^{(L-2)}_{p, n,:}\quad\text{if } \alpha^{(L-1)}_{0, j,n} \geq 0\\
    \beta^{(L-2)}_{p, n,:}\quad\text{if } \alpha^{(L-1)}_{0, j,n} < 0\\
    \end{cases}\text{, } p\in \{0, 3, 4\}
\end{align*}
As in CROWN \citep{zhang2018efficient}, given we have put Equation \ref{eq:last_ub} in the same form as Equation \ref{eq:first_ub}, we can now apply this argument recursively using the $\rho^{(k)}$ and $\mu^{(k)}$ coefficients to obtain:
\begin{equation*}
    \partial_{\x_i} \out^{(L-1),U}_{j} = \rho^{(1)}_{0,j,i} + \sum_{r=1}^{d_0} \rho^{(1)}_{1, j,r} \x + \rho^{(1)}_{2,j,r},
\end{equation*}
where:
\begin{align*}
    \rho^{(k-1)}_{0, j,r} & = 
    \begin{cases}
    \alpha^{(k)}_{0, j,r}\quad\text{if } k = L\\
    \sum_{n=1}^{d_{k-1}} \rho^{(k)}_{0, j,n} \mu^{(k-1)}_{0, n,r}\quad\text{if } k\in \{2,\dots,L-1\}\\
    \end{cases}\\
    \rho^{(k-1)}_{1, j,r} & = \begin{cases}
    \alpha^{(k)}_{3, j,r}\quad\text{if } k = L\\
    \sum_{n=1}^{d_{k-1}} \rho^{(k)}_{0, j,n} \mu^{(k-1)}_{3, n,r} + \frac{1}{d_{k-2}} \rho^{(k)}_{1, j,n}\quad\text{if } k\in \{2,\dots,L-1\}\\
    \end{cases}\\
    \rho^{(k-1)}_{2, j,r} & = \begin{cases}
    \alpha^{(k)}_{4, j,r}\quad\text{if } k = L\\
    \sum_{n=1}^{d_{k-1}} \rho^{(k)}_{0, j,n} \mu^{(k-1)}_{4, n,r} + \frac{1}{d_{k-2}} \rho^{(k)}_{2, j,n}\quad\text{if } k\in \{2,\dots,L-1\}\\
    \end{cases},
\end{align*}
and:
\begin{align*}
    \mu^{(k-1)}_{p, n,:} = 
    \begin{cases}
    \alpha^{(k-1)}_{p, n,:}\quad\text{if } \rho^{(k)}_{0, j,n} \geq 0\\
    \beta^{(k-1)}_{p, n,:}\quad\text{if } \rho^{(k)}_{0, j,n} < 0\\
    \end{cases}\text{, } p\in \{0, 3, 4\}
\end{align*}
And following the same recursive argument:
\begin{equation*}
    \partial_{\x_i} \out^{(L-1),L}_{j} = \tau^{(1)}_{0,j,i} + \sum_{r=1}^{d_0} \tau^{(1)}_{1, j,r} \x + \tau^{(1)}_{2,j,r},
\end{equation*}
where:
\begin{align*}
    \tau^{(k-1)}_{0, j,r} & = 
    \begin{cases}
    \beta^{(k)}_{0, j,r}\quad\text{if } k = L\\
    \sum_{n=1}^{d_{k-1}} \tau^{(k)}_{0, j,n} \omega^{(k-1)}_{0, n,r}\quad\text{if } k\in \{2,\dots,L-1\}\\
    \end{cases}\\
    \tau^{(k-1)}_{1, j,r} & = \begin{cases}
    \beta^{(k)}_{3, j,r}\quad\text{if } k = L\\
    \sum_{n=1}^{d_{k-1}} \tau^{(k)}_{0, j,n} \omega^{(k-1)}_{3, n,r} + \frac{1}{d_{k-2}} \tau^{(k)}_{1, j,n}\quad\text{if } k\in \{2,\dots,L-1\}\\
    \end{cases}\\
    \tau^{(k-1)}_{2, j,r} & = \begin{cases}
    \beta^{(k)}_{4, j,r}\quad\text{if } k = L\\
    \sum_{n=1}^{d_{k-1}} \tau^{(k)}_{0, j,n} \omega^{(k-1)}_{4, n,r} + \frac{1}{d_{k-2}} \tau^{(k)}_{2, j,n}\quad\text{if } k\in \{2,\dots,L-1\}\\
    \end{cases},
\end{align*}
and:
\begin{align*}
    \omega^{(k-1)}_{p, n,:} = 
    \begin{cases}
    \beta^{(k-1)}_{p, n,:}\quad\text{if } \tau^{(k)}_{0, j,n} \geq 0\\
    \alpha^{(k-1)}_{p, n,:}\quad\text{if } \tau^{(k)}_{0, j,n} < 0\\
    \end{cases}\text{, } p\in \{0, 3, 4\}
\end{align*}
With these expressions, we can compute the required $\partial_{\x_i} \out^{(k-1), L}_{n}$ and $\partial_{\x_i} \out^{(k-1), U}_{n}$ which we assumed to be known to derive Equation \ref{eq:partial_xi_hat_z_non_substituted}.

Finally, by back-propagating the bounds starting from Equation \ref{eq:partial_xi_solnet_not_substituted}, we get:
\begin{align*}
    \partial_{\x_i} u^{U}_{\theta, j} = & \sum_{n=1}^{d_{L-1}} \W^{(L),+}_{j,n} \left(\sum_{r=1}^{d_{L-2}} \alpha^{(L-1)}_{0, n,r} \partial_{\x_i} \out^{(L-2)}_{[r]} + \alpha^{(L-1)}_{3, n,r} \x + \alpha^{(L-1)}_{4,n,r}\right) + \\
    & + \W^{(L),-}_{j,n} \left(\sum_{r=1}^{d_{L-2}} \beta^{(L-1)}_{0, n,r} \partial_{\x_i} \out^{(L-2)}_{[r]} + \beta^{(L-1)}_{3, n,r} \x + \beta^{(L-1)}_{4,n,r}\right)\\
    = & \sum_{r=1}^{d_{L-2}} \left(\sum_{n=1}^{d_{L-1}} \W^{(L),+}_{j,n} \alpha^{(L-1)}_{0, n,r} + \W^{(L),-}_{j,n}\beta^{(L-1)}_{0, n,r} \right) \partial_{\x_i} \out^{(L-2)}_{[r]} + \\
    & + \left(\sum_{n=1}^{d_{L-1}} \W^{(L),+}_{j,n} \alpha^{(L-1)}_{3, n,r} + \W^{(L),-}_{j,n} \beta^{(L-1)}_{3,n,r}\right) \x + \left(\sum_{n=1}^{d_{L-1}} \W^{(L),+}_{j,n} \alpha^{(L-1)}_{4, n,r} + \W^{(L),-}_{j,n} \beta^{(L-1)}_{4,n,r}\right) \\
    = & \sum_{r=1}^{d_{L-2}} \phi^{(L-1), U}_{0,j,r} \partial_{\x_i} \out^{(L-2), U}_{n} + \phi^{(L-1), U}_{1,j,r} \x + \phi^{(L-1), U}_{2,j,r},    
\end{align*}
where:
\begin{align*}
    \phi^{(L-1), U}_{0, j,r} & = \sum_{n=1}^{d_{L-1}} \W^{(L),+}_{j,n} \alpha^{(L-1)}_{0, n,r} + \W^{(L),-}_{j,n}\beta^{(L-1)}_{0, n,r}\\
    \phi^{(L-1), U}_{1, j,r} & = \sum_{n=1}^{d_{L-1}} \W^{(L),+}_{j,n} \alpha^{(L-1)}_{3, n,r} + \W^{(L),-}_{j,n} \beta^{(L-1)}_{3,n,r}\\
    \phi^{(L-1), U}_{2, j,r} & = \sum_{n=1}^{d_{L-1}} \W^{(L),+}_{j,n} \alpha^{(L-1)}_{4, n,r} + \W^{(L),-}_{j,n} \beta^{(L-1)}_{4,n,r}.
\end{align*}
From this, using the same back-propagation logic as in the derivations of $\partial_{\x_i} \out^{(k-1), L}_{n}$ and $\partial_{\x_i} \out^{(k-1), U}_{n}$, we can obtain:
\begin{equation}
    \partial_{\x_i} u^{U}_{\theta, j} = \phi^{(1), U}_{0,j,i} + \sum_{r=1}^{d_0} \phi^{(1), U}_{1, j,r} \x + \phi^{(1), U}_{2,j,r},
\end{equation}
where:
\begin{align*}
    \phi^{(k-1), U}_{0, j,r} & = 
    \begin{cases}
        \sum_{n=1}^{d_{k-1}} \W^{(k),+}_{j,n} \alpha^{(k-1)}_{0, n,r} + \W^{(k),-}_{j,n}\beta^{(k-1)}_{0, n,r}\quad\text{if } k = L\\[5pt]
        \sum_{n=1}^{d_{k-1}} \phi^{(k), U}_{0, j,n} \upsilon^{(k-1)}_{0, n,r}\quad\text{if } k\in \{2,\dots,L-1\}\\
    \end{cases}\\
    \phi^{(k-1), U}_{1, j,r} & = \begin{cases}
        \sum_{n=1}^{d_{k-1}} \W^{(k),+}_{j,n} \alpha^{(k-1)}_{3, n,r} + \W^{(k),-}_{j,n}\beta^{(k-1)}_{3, n,r}\quad\text{if } k = L\\[5pt]
        \sum_{n=1}^{d_{k-1}} \phi^{(k), U}_{0, j,n} \upsilon^{(k-1)}_{3, n,r} + \frac{1}{d_{k-2}} \phi^{(k), U}_{1, j,n}\quad\text{if } k\in \{2,\dots,L-1\}\\
    \end{cases}\\
    \phi^{(k-1), U}_{2, j,r} & = \begin{cases}
        \sum_{n=1}^{d_{k-1}} \W^{(k),+}_{j,n} \alpha^{(k-1)}_{4, n,r} + \W^{(k),-}_{j,n}\beta^{(k-1)}_{4, n,r}\quad\text{if } k = L\\[5pt]
        \sum_{n=1}^{d_{k-1}} \phi^{(k), U}_{0, j,n} \upsilon^{(k-1)}_{4, n,r} + \frac{1}{d_{k-2}} \phi^{(k), U}_{2, j,n}\quad\text{if } k\in \{2,\dots,L-1\}\\
    \end{cases},
\end{align*}
and:
\begin{align*}
    \upsilon^{(k-1)}_{p, n,:} = 
    \begin{cases}
    \alpha^{(k-1)}_{p, n,:}\quad\text{if } \phi^{(k), U}_{0, j,n} \geq 0\\
    \beta^{(k-1)}_{p, n,:}\quad\text{if } \phi^{(k), U}_{0, j,n} < 0\\
    \end{cases}\text{, } p\in \{0, 3, 4\}
\end{align*}
And similarly for the lower bound:
\begin{equation}
    \partial_{\x_i} u^{L}_{\theta, j} = \phi^{(1), L}_{0,j,i} + \sum_{r=1}^{d_0} \phi^{(1), L}_{1, j,r} \x + \phi^{(1), L}_{2,j,r},
\end{equation}
where:
\begin{align*}
    \phi^{(k-1), L}_{0, j,r} & = 
    \begin{cases}
    \sum_{n=1}^{d_{k-1}} \W^{(k),+}_{j,n} \beta^{(k-1)}_{0, n,r} + \W^{(k),-}_{j,n}\alpha^{(k-1)}_{0, n,r}\quad\text{if } k = L\\[5pt]
    \sum_{n=1}^{d_{k-1}} \phi^{(k), L}_{0, j,n} \chi^{(k-1)}_{0, n,r}\quad\text{if } k\in \{2,\dots,L-1\}\\
    \end{cases}\\
    \phi^{(k-1), L}_{1, j,r} & = \begin{cases}
    \sum_{n=1}^{d_{k-1}} \W^{(k),+}_{j,n} \beta^{(k-1)}_{3, n,r} + \W^{(k),-}_{j,n}\alpha^{(k-1)}_{3, n,r}\quad\text{if } k = L\\[5pt]
    \sum_{n=1}^{d_{k-1}} \phi^{(k), L}_{0, j,n} \chi^{(k-1)}_{3, n,r} + \frac{1}{d_{k-2}} \phi^{(k), L}_{1, j,n}\quad\text{if } k\in \{2,\dots,L-1\}\\
    \end{cases}\\
    \phi^{(k-1), L}_{2, j,r} & = \begin{cases}
    \sum_{n=1}^{d_{k-1}} \W^{(k),+}_{j,n} \beta^{(k-1)}_{4, n,r} + \W^{(k),-}_{j,n}\alpha^{(k-1)}_{4, n,r}\quad\text{if } k = L\\[5pt]
    \sum_{n=1}^{d_{k-1}} \phi^{(k), L}_{0, j,n} \chi^{(k-1)}_{4, n,r} + \frac{1}{d_{k-2}} \phi^{(k), L}_{2, j,n}\quad\text{if } k\in \{2,\dots,L-1\}\\
    \end{cases},
\end{align*}
and:
\begin{align*}
    \chi^{(k-1)}_{p, n,:} = 
    \begin{cases}
    \beta^{(k-1)}_{p, n,:}\quad\text{if } \phi^{(k), L}_{0, j,n} \geq 0\\
    \alpha^{(k-1)}_{p, n,:}\quad\text{if } \phi^{(k), L}_{0, j,n} < 0\\
    \end{cases}\text{, } p\in \{0, 3, 4\}.
\end{align*}

\subsection{Theorem \ref{thm:second_derivative_bounds} Formal Statement and Proof}
\label{app:proof_bounding_second_derivative}

\begin{reptheorem}{thm:second_derivative_bounds}[\partialcrown: linear lower and upper bounding $\partial_{\x_i^2} \solnet$]
Assume that through a previous computation of bounds on $\partial_{\x_i} \solnet$, the components of that network required for $\partial_{\x_i^2} \solnet$, \ie, $\partial_{\x_i} \out^{(k-1)}$ and $\partial_{\out^{(k-1)}} \out^{(k)}$, are lower and upper bounded by linear functions. In particular, $\vec{C}^{(k), L} \x + \vec{c}^{(k), L} \leq \partial_{\x_i} \out^{(k-1)} \leq \vec{C}^{(k), U} \x + \vec{c}^{(k), U}$ and $\vec{D}^{(k), L} \x + \vec{d}^{(k), L} \leq \partial_{\out^{(k-1)}} \out^{(k)} \leq \vec{D}^{(k), U} \x + \vec{d}^{(k), U}$.

For every $j\in\{1,\dots,d_L\}$ there exist two functions $\partial_{\x_i^2} u^{U}_{\theta, j}$ and $\partial_{\x_i^2} u^{L}_{\theta, j}$ such that, $\forall \x\in \mathcal{C}$ it holds that $\partial_{\x_i^2} u^{L}_{\theta, j} \leq \partial_{\x_i^2} u_{\theta, j} \leq \partial_{\x_i^2} u^{U}_{\theta, j}$. These functions can be written as:
\begin{align*}
    \partial_{\x_i^2} u^{U}_{\theta, j} = \psi^{(1), U}_{0,j,i} + \sum_{r=1}^{d_0} \psi^{(1), U}_{1, j,r} \x + \psi^{(1), U}_{2,j,r}\\
    \partial_{\x_i^2} u^{L}_{\theta, j} = \psi^{(1), L}_{0,j,i} + \sum_{r=1}^{d_0} \psi^{(1), L}_{1, j,r} \x + \psi^{(1), L}_{2,j,r}
\end{align*}
where for $p\in\{0,1,2\}$, $\psi^{(1), U}_{p, j, r}$ and $\psi^{(1), L}_{p, j, r}$ are functions of $\W^{(k)}$, $\preout^{(k), L}$, $\preout^{(k), U}$, $\vec{A}^{(k), L}$, $\vec{A}^{(k), U}$ $\vec{a}^{(k), L}$, $\vec{a}^{(k), U}$, $\vec{C}^{(k), L}$, $\vec{C}^{(k), U}$ $\vec{c}^{(k), L}$, $\vec{c}^{(k), U}$, $\vec{D}^{(k), L}$, $\vec{D}^{(k), U}$ $\vec{d}^{(k), L}$, and $\vec{d}^{(k), U}$, and can be computed using a recursive closed-form expression in $\mathcal{O}(L)$ time.
\end{reptheorem}

\textit{Proof:} Assume that through the computation of the previous bounds on $\solnet$, the pre-activation layer outputs of $\solnet$, $\preout^{(k)}$, are lower and upper bounded by linear functions defined as $\vec{A}^{(k), L} \x + \vec{a}^{(k), L} \leq \preout^{(k)} \leq \vec{A}^{(k), U} \x + \vec{a}^{(k), U}$ and $\preout^{(k), L} \leq \preout^{(k)} \leq \preout^{(k), U}$ for $\x\in \mathcal{C}$. Additionally, we consider also that through a previous computation of bounds on $\partial_{\x_i} \solnet$, the components of that network required for $\partial_{\x_i^2} \solnet$, \ie, $\partial_{\x_i} \out^{(k-1)}$ and $\partial_{\out^{(k-1)}} \out^{(k)}$ are lower and upper bounded by linear functions. In particular, $\vec{C}^{(k), L} \x + \vec{c}^{(k), L} \leq \partial_{\x_i} \out^{(k-1)} \leq \vec{C}^{(k), U} \x + \vec{c}^{(k), U}$ and $\vec{D}^{(k), L} \x + \vec{d}^{(k), L} \leq \partial_{\out^{(k-1)}} \out^{(k)} \leq \vec{D}^{(k), U} \x + \vec{d}^{(k), U}$.

Take the upper and lower bound functions for $\partial_{\x_i^2} \solnet$ as $\partial_{\x_i^2} \solnet^U$ and $\partial_{\x_i^2} \solnet^L$, respectively, and the upper and lower bound functions for $\partial_{\x_i^2} \out^{(k)}$ as $\partial_{\x_i^2} \out^{(k), U}$ and $\partial_{\x_i^2} \out^{(k), L}$, respectively. For the sake of simplicity of notation, we define $\vec{B}^{(k), +} = \mathbb{I}\left(\vec{B}^{(k)} \geq 0\right) \odot \vec{B}^{(k)}$ and $\vec{B}^{(k), -} = \mathbb{I}\left(\vec{B}^{(k)} < 0\right) \odot \vec{B}^{(k)}$.

\textbf{Note that, unless explicitly mentioned otherwise, the non-network variables (denoted by Greek letters, as well as bold, capital and lowercase letters) used here have no relation to the ones from Appendix \ref{app:proof_bounding_first_derivative}}.

Starting backwards from $\partial_{\x_i^2} \out^{(k)}$, we have that:
\begin{align*}
\partial_{\x_i^2} \out^{(k)}_j = \sum_{n=1}^{d_{k-1}} \partial_{\x_i \out^{(k-1)}} \out^{(k)}_{j, n} \partial_{\x_i} \out^{(k-1)}_{n} + \partial_{\out^{(k-1)}} \out^{(k)}_{j, n} \partial_{\x_i^2} \out^{(k-1)}_n.
\end{align*}
Given the transitive property of the sum operator, we can bound $\partial_{\x_i^2} \out^{(k)}_j$ by using a McCormick envelope around each of the multiplications. Assuming that for all $j\in\{1\dots,d_k\}, n\in\{1\dots,d_{k-1}\}$:  $\partial_{\x_i \out^{(k-1)}} \out^{(k), L}_{j, n} \leq \partial_{\x_i \out^{(k-1)}} \out^{(k)}_{j, n} \leq \partial_{\x_i \out^{(k-1)}} \out^{(k), U}_{j, n}$, $\partial_{\x_i} \out^{(k-1), L}_{n} \leq \partial_{\x_i} \out^{(k-1)}_{n} \leq \partial_{\x_i} \out^{(k-1), U}_{n}$, $\partial_{\out^{(k-1)}} \out^{(k)}_{j, n} \leq \partial_{\out^{(k-1)}} \out^{(k)}_{j, n} \leq \partial_{\out^{(k-1)}} \out^{(k)}_{j, n}$, and $\partial_{\x_i^2} \out^{(k-1), L}_n \leq \partial_{\x_i^2} \out^{(k-1)}_n \leq \partial_{\x_i^2} \out^{(k-1), U}_n$, we obtain:
{\small
\begin{align}
\label{eq:partial_xixi_hat_z_non_substituted}
\begin{split}
    \partial_{\x_i^2} \out^{(k)}_j \leq \partial_{\x_i^2} \out^{(k), U}_j = & \sum_{n=1}^{d_{k-1}} \alpha^{(k)}_{0, j,n} \partial_{\x_i} \out^{(k-1)}_{n} + \alpha^{(k)}_{1, j,n} \partial_{\x_i \out^{(k-1)}} \out^{(k)}_{j, n} + \alpha^{(k)}_{2,j,n} \partial_{\x_i^2} \out^{(k-1)}_n + \alpha^{(k)}_{3,j,n} \partial_{\out^{(k-1)}} \out^{(k)}_{j, n} + \alpha^{(k)}_{4,j,n} \\
    \partial_{\x_i^2} \out^{(k)}_j \geq \partial_{\x_i^2} \out^{(k), L}_j = & \sum_{n=1}^{d_{k-1}} \beta^{(k)}_{0, j,n} \partial_{\x_i} \out^{(k-1)}_{n} + \beta^{(k)}_{1, j,n} \partial_{\x_i \out^{(k-1)}} \out^{(k)}_{j, n} + \beta^{(k)}_{2,j,n} \partial_{\x_i^2} \out^{(k-1)}_n + \beta^{(k)}_{3,j,n} \partial_{\out^{(k-1)}} \out^{(k)}_{j, n} + \beta^{(k)}_{4,j,n}
\end{split}
\end{align}
}
for:
{\small
\begin{align*}
    \alpha^{(k)}_{0, j,n} = & \eta^{(k)}_{j,n} \partial_{\x_i\out^{(k-1)}} \out^{(k), U}_{j,n} + \left(1-\eta^{(k)}_{j,n}\right) \partial_{\x_i\out^{(k-1)}} \out^{(k), L}_{j,n}\qquad
    \alpha^{(k)}_{1, j,n} = \eta^{(k)}_{j,n} \partial_{\x_i} \out^{(k-1), L}_{n} + \left(1-\eta^{(k)}_{j,n}\right) \partial_{\x_i} \out^{(k-1), U}_{n}\\
    \alpha^{(k)}_{2, j,n} = & \gamma^{(k)}_{j,n} \partial_{\out^{(k-1)}} \out^{(k), U}_{j,n} + \left(1-\gamma^{(k)}_{j,n}\right) \partial_{\out^{(k-1)}} \out^{(k), L}_{j,n}\qquad
    \alpha^{(k)}_{3, j,n} = \gamma^{(k)}_{j,n} \partial_{\x_i^2} \out^{(k-1), L}_{n} + \left(1-\gamma^{(k)}_{j,n}\right) \partial_{\x_i^2} \out^{(k-1), U}_{n}\\
    \alpha^{(k)}_{4, j,n} = & -\eta^{(k)}_{j,n} \partial_{\x_i \out^{(k-1)}} \out^{(k), U}_{j,n} \partial_{\x_i} \out^{(k-1), L}_{n} - \left(1-\eta^{(k)}_{j,n}\right) \partial_{\x_i \out^{(k-1)}} \out^{(k), L}_{j,n} \partial_{\x_i} \out^{(k-1), U}_{n} +\\
    & -\gamma^{(k)}_{j,n} \partial_{\out^{(k-1)}} \out^{(k), U}_{j,n} \partial_{\x_i x_i} \out^{(k-1), L}_{n} - \left(1-\gamma^{(k)}_{j,n}\right) \partial_{\out^{(k-1)}} \out^{(k), L}_{j,n} \partial_{\x_i x_i} \out^{(k-1), U}_{n}\\[8pt]
    \beta^{(k)}_{0, j,n} = & \zeta^{(k)}_{j,n} \partial_{\x_i \out^{(k-1)}} \out^{(k), L}_{j,n} + \left(1-\zeta^{(k)}_{j,n}\right) \partial_{\x_i \out^{(k-1)}} \out^{(k), U}_{j,n}\qquad
    \beta^{(k)}_{1, j,n} = \zeta^{(k)}_{j,n} \partial_{\x_i} \out^{(k-1), L}_{n} + \left(1-\zeta^{(k)}_{j,n}\right) \partial_{\x_i} \out^{(k-1), U}_{n}\\
    \beta^{(k)}_{2, j,n} = & \delta^{(k)}_{j,n} \partial_{\out^{(k-1)}} \out^{(k), L}_{j,n} + \left(1-\delta^{(k)}_{j,n}\right) \partial_{\out^{(k-1)}} \out^{(k), U}_{j,n}\qquad
    \beta^{(k)}_{3, j,n} = \delta^{(k)}_{j,n} \partial_{\x_i^2} \out^{(k-1), L}_{n} + \left(1-\delta^{(k)}_{j,n}\right) \partial_{\x_i^2} \out^{(k-1), U}_{n}\\
    \beta^{(k)}_{4, j,n} = & -\zeta^{(k)}_{j,n} \partial_{\x_i \out^{(k-1)}} \out^{(k), L}_{j,n}  \partial_{\x_i} \out^{(k-1), L}_{n}  - \left(1-\zeta^{(k)}_{j,n}\right) \partial_{\x_i \out^{(k-1)}} \out^{(k), U}_{j,n} \partial_{\x_i} \out^{(k-1), U}_{n} +\\
    & -\delta^{(k)}_{j,n} \partial_{\out^{(k-1)}} \out^{(k), L}_{j,n} \partial_{\x_i^2} \out^{(k-1), L}_{n} - \left(1-\delta^{(k)}_{j,n}\right) \partial_{\out^{(k-1)}} \out^{(k), U}_{j,n} \partial_{\x_i^2} \out^{(k-1), U}_{n},
\end{align*}
}
where $\eta^{(k)}_{j,n}$, $\gamma^{(k)}_{j,n}$, $\zeta^{(k)}_{j,n}$ and $\delta^{(k)}_{j,n}$ are convex coefficients that can be set as hyperparameters, or optimized for as in $\alpha$-CROWN \citep{xu2020fast}.

For the next step of the back-propagation process, we now need to bound $\partial_{\x_i} \out^{(k-1)}_{n}$, $\partial_{\x_i \out^{(k-1)}} \out^{(k)}_{j, n}$, and $\partial_{\out^{(k-1)}} \out^{(k)}_{j, n}$, so as to eventually be able to write $\partial_{\x_i^2} \out^{(k)}_j$ as a function of simply $\partial_{\x_i^2} \out^{(k-1)}_n$ and $\x$. As per our assumptions at the beginning of this section, for the sake of computational efficiency we take $\partial_{\x_i} \out^{(k-1)}_{n}$ and $\partial_{\out^{(k-1)}} \out^{(k)}_{j, n}$ from the computation of the bounds of $\partial_{\x_i} u_{\theta, j}$, and thus assume we have a linear upper and lower bound function of $\x$. This leaves us with $\partial_{\x_i \out^{(k-1)}} \out^{(k)}_{j, n}$ to bound as a linear function of $\x$.

Note that, as per Lemma \ref{lem:second_derivative}, $\partial_{\x_i \out^{(k-1)}} \out^{(k)}_{j, n} = \sigma''\left(\vec{y}^{(k)}_j\right) \left(\W^{(k)}_{j,:} \partial_{\x_i} \out^{(k-1)}\right) \W^{(k)}_{j,n}$. Since $\left(\W^{(k)}_{j,:} \partial_{\x_i} \out^{(k-1)}\right) = \sum_{n=1}^{d_{k-1}} \W^{(k)}_{j,n} \partial_{\x_i} \out^{(k-1)}_n$, and $\vec{C}^{(k), U}_{n,:} \x + \vec{c}^{(k), U}_n \leq \partial_{\x_i} \out^{(k-1)}_n \leq \vec{C}^{(k), L}_{n,:} \x + \vec{c}^{(k), L}_n$ (from the assumptions above), we can write:
\begin{align*}
\W^{(k)}_{j,:} \partial_{\x_i} \out^{(k-1)} \leq & \underbrace{\left(\sum_{n=1}^{d_{k-1}} \W^{(k),+}_{j,n} \vec{C}^{(k), U}_{n,:} + \W^{(k),-}_{j,n} \vec{C}^{(k), L}_{n,:}\right)}_{\vec{E}_j^{(k), U}}\x + \underbrace{\left(\sum_{n=1}^{d_{k-1}} \W^{(k),+}_{j,n}\vec{c}^{(k), U} + \W^{(k),-}_{j,n} \vec{c}^{(k), L}\right)}_{\vec{e}_j^{(k), U}}\\
\W^{(k)}_{j,n} \partial_{\x_i} \out^{(k-1)} \geq & \underbrace{\left(\sum_{n=1}^{d_{k-1}} \W^{(k),+}_{j,n} \vec{C}^{(k), L}_{n,:} + \W^{(k),-}_{j,n} \vec{C}^{(k), U}_{n,:}\right)}_{\vec{E}_j^{(k), L}}\x + \underbrace{\left(\sum_{n=1}^{d_{k-1}} \W^{(k),+}_{j,n}\vec{c}^{(k), L}_n + \W^{(k),-}_{j,n} \vec{c}^{(k), U}_n\right)}_{\vec{e}_j^{(k), L}}.
\end{align*}
We define $\theta^{(k), U}_j =  \max_{\x \in \mathcal{C}} \vec{E}_j^{(k), U}\x + \vec{e}_j^{(k), U}$ and $\theta^{(k), L}_j =  \min_{\x \in \mathcal{C}} \vec{E}_j^{(k), L}\x + \vec{e}_j^{(k), L}$.
As with the first derivative case, since $\preout^{(k), L}_{j} \leq \preout^{(k)}_{j} \leq \preout^{(k), U}_{j}$, we can obtain a linear upper and lower bound relaxation for $\sigma''\left(\preout^{(k)}_{j}\right)$, such that $\lambda^{(k),L}_{j}\left(\preout^{(k)}_{j} + \mu^{(k),L}_{j}\right) \leq \sigma''\left(\preout^{(k)}_{j}\right)\leq \lambda^{(k),U}_{j}\left(\preout^{(k)}_{j} + \mu^{(k),U}_{j}\right)$, as well as the values $\iota^{(k), L}_j \leq \sigma''\left(\preout^{(k)}_{j}\right) \leq \iota^{(k), U}_j$. By considering the assumption that $\vec{A}^{(k), U}_{j,:} \x + \vec{a}^{(k), U}_{j} \leq \preout^{(k)} \leq \vec{A}^{(k), L}_{j,:} \x + \vec{a}^{(k), L}_{j}$, we can obtain:
\begin{align*}
    \sigma''\left(\preout^{(k)}_{j}\right) \leq \underbrace{\left(\lambda^{(k),U,+}_{j} \vec{A}^{(k), U}_{j,:} + \lambda^{(k),U,-}_{j} \vec{A}^{(k), L}_{j,:}\right)}_{\vec{H}^{(k), U}_{j}} \x + \underbrace{\left(\lambda^{(k),U,+}_{j} \vec{a}^{(k), U}_{j} + \lambda^{(k),U,-}_{j} \vec{a}^{(k), L}_{j} + \lambda^{(k),U}_{j}\mu^{(k),U}_{j}\right)}_{\vec{h}^{(k), U}_{j}}\\
    \sigma''\left(\preout^{(k)}_{j}\right) \geq \underbrace{\left(\lambda^{(k),L,+}_{j} \vec{A}^{(k), L}_{j,:} + \lambda^{(k),L,-}_{j} \vec{A}^{(k), U}_{j,:}\right)}_{\vec{H}^{(k), L}_{j}} \x + \underbrace{\left(\lambda^{(k),L,+}_{j} \vec{a}^{(k), L}_{j} + \lambda^{(k),L,-}_{j} \vec{a}^{(k), U}_{j} + \lambda^{(k),L}_{j}\mu^{(k),L}_{j}\right)}_{\vec{h}^{(k), L}_{j}}.
\end{align*}

This allows us to relax $\sigma''\left(\preout^{(k)}_{j}\right) \left(\W^{(k)}_{j,:} \partial_{\x_i} \out^{(k-1)}\right)$ using McCormick envelopes:
\begin{align*}
\sigma''\left(\preout^{(k)}_{j}\right) \left(\W^{(k)}_{j,:} \partial_{\x_i} \out^{(k-1)}\right) \leq \nu_{0, j}^{(k), U} \left(\W^{(k)}_{j,:} \partial_{\x_i} \out^{(k-1)}\right) + \nu_{1, j}^{(k), U} \sigma''\left(\preout^{(k)}_{j}\right) + \nu_{2, j}^{(k), U}\\
\sigma''\left(\preout^{(k)}_{j}\right) \left(\W^{(k)}_{j,:} \partial_{\x_i} \out^{(k-1)}\right) \geq \nu_{0, j}^{(k), L} \left(\W^{(k)}_{j,:} \partial_{\x_i} \out^{(k-1)}\right) + \nu_{1, j}^{(k), L} \sigma''\left(\preout^{(k)}_{j}\right) + \nu_{2, j}^{(k), L},
\end{align*}
for:
\begin{align*}
    \nu^{(k), U}_{0, j} & = \rho^{(k)}_j \iota^{(k), U}_j + \left(1-\rho^{(k)}_j\right) \iota^{(k), L}_j\qquad
    \nu^{(k), U}_{1, j,n} = \rho^{(k)}_j \theta^{(k), L}_j + \left(1-\rho^{(k)}_j\right) \iota^{(k), U}_j\\
    \nu^{(k), U}_{2, j} & = -\rho^{(k)}_j \iota^{(k), U}_j \theta^{(k), L}_j - \left(1-\rho^{(k)}_j\right) \iota^{(k), L}_j \theta^{(k), U}_j\\[10pt]
    \nu^{(k), L}_{0, j} & = \tau^{(k)}_j \iota^{(k), L}_j + \left(1-\tau^{(k)}_j\right) \iota^{(k), U}_j\qquad
    \nu^{(k), L}_{1, j} = \tau^{(k)}_j \theta^{(k), L}_j + \left(1-\tau^{(k)}_j\right) \theta^{(k), U}_j\\
    \nu^{(k), L}_{2, j} & = -\tau^{(k)}_j \iota^{(k), L}_j \theta^{(k), L}_j  - \left(1-\tau^{(k)}_j\right)\iota^{(k), U}_j \theta^{(k), U}_j,
\end{align*}
where $\rho^{(k)}_{j}$ and $\tau^{(k)}_{j}$ are convex coefficients that can be set as hyperparameters, or optimized for as in $\alpha$-CROWN \citep{xu2020fast}. By replacing this multiplication in the expression from Lemma \ref{lem:second_derivative}, we bound $\partial_{\x_i \out^{(k-1)}} \out^{(k)}_{j, n}$ as:
\begin{align*}
    \partial_{\x_i \out^{(k-1)}} \out^{(k)}_{j, n} \leq \upsilon^{(k), U}_{0, j, n} \left(\W^{(k)}_{j,:} \partial_{\x_i} \out^{(k-1)}\right) + \upsilon^{(k), U}_{1, j, n} \sigma''\left(\preout^{(k)}_{j}\right) + \upsilon^{(k), U}_{2, j}\\
    \partial_{\x_i \out^{(k-1)}} \out^{(k)}_{j, n} \geq \upsilon^{(k), L}_{0, j, n} \left(\W^{(k)}_{j,:} \partial_{\x_i} \out^{(k-1)}\right) + \upsilon^{(k), L}_{1, j, n} \sigma''\left(\preout^{(k)}_{j}\right) + \upsilon^{(k), L}_{2, j},
\end{align*}
for:
\begin{align*}
    \upsilon^{(k), U}_{i, j, n} = \nu_{i, j}^{(k), U}\W^{(k), +}_{j,n} + \nu_{i, j}^{(k), L}\W^{(k), -}_{j,n}, \quad
    \upsilon^{(k), L}_{i, j, n} = \nu_{i, j}^{(k), L}\W^{(k), +}_{j,n} + \nu_{i, j}^{(k), U}\W^{(k), -}_{j,n}\qquad
    i\in\{0, 1, 2\}.
\end{align*}
By replacing the lower and upper bounds for $\sigma''(y_j^{(k)})$ and $\left(\W^{(k)}_{j,:} \partial_{\x_i} \out^{(k-1)}\right)$ in the previous inequality, we obtain the expression:
\begin{align*}
    \partial_{\x_i \out^{(k-1)}} \out^{(k)}_{j, n} \leq \vec{M}^{(k), U}_{j,n} \x + \vec{m}^{(k), U}_{j,n}\\
    \partial_{\x_i \out^{(k-1)}} \out^{(k)}_{j, n} \geq \vec{M}^{(k), L}_{j,n} \x + \vec{m}^{(k), L}_{j,n},
\end{align*}
for:
\begin{align*}
    \vec{M}^{(k), U}_{j,n} = & \upsilon^{(k), U, +}_{0, j, n} \vec{E}_j^{(k), U} + \upsilon^{(k), U, -}_{0, j, n} \vec{E}_j^{(k), L} + \upsilon^{(k), U, +}_{1, j, n} \vec{H}_j^{(k), U} + \upsilon^{(k), U, -}_{1, j, n} \vec{H}_j^{(k), L}\\
    \vec{m}^{(k), U}_{j,n} = & \upsilon^{(k), U, +}_{0, j, n} \vec{e}_j^{(k), U} + \upsilon^{(k), U, -}_{0, j, n} \vec{e}_j^{(k), L} + \upsilon^{(k), U, +}_{1, j, n} \vec{h}_j^{(k), U} + \upsilon^{(k), U, -}_{1, j, n} \vec{h}_j^{(k), L} + \upsilon^{(k), U}_{2, j, n}\\
    \vec{M}^{(k), L}_{j,n} = & \upsilon^{(k), L, +}_{0, j, n} \vec{E}_j^{(k), L} + \upsilon^{(k), L, -}_{0, j, n} \vec{E}_j^{(k), U} + \upsilon^{(k), L, +}_{1, j, n} \vec{H}_j^{(k), L} + \upsilon^{(k), L, -}_{1, j, n} \vec{H}_j^{(k), U}\\
    \vec{m}^{(k), L}_{j,n} = & \upsilon^{(k), L, +}_{0, j, n} \vec{e}_j^{(k), L} + \upsilon^{(k), L, -}_{0, j, n} \vec{e}_j^{(k), U} + \upsilon^{(k), L, +}_{1, j, n} \vec{h}_j^{(k), L} + \upsilon^{(k), L, -}_{1, j, n} \vec{h}_j^{(k), U} + \upsilon^{(k), L}_{2, j, n}.\\
\end{align*}
Finally in the derivation of $\partial_{\x_i^2} \out^{(k)}_j$ as a function of $\x$ and $\partial_{\x_i^2} \out^{(k-1)}$, we just have to replace all the quantities in Equation \ref{eq:partial_xixi_hat_z_non_substituted} (recalling from the assumptions that $\vec{C}^{(k), U} \x + \vec{c}^{(k), U} \leq \partial_{\x_i} \out^{(k-1)} \leq \vec{C}^{(k), L} \x + \vec{c}^{(k), L}$ and $\vec{D}^{(k), U} \x + \vec{d}^{(k), U} \leq \partial_{\out^{(k-1)}} \out^{(k)} \leq \vec{D}^{(k), L} \x + \vec{d}^{(k), L}$) to obtain:
\begin{align}
\label{eq:partial_xixi_hat_z_substituted}
\begin{split}
    \partial_{\x_i^2} \out^{(k)}_j \leq \partial_{\x_i^2} \out^{(k), U}_j = & \sum_{n=1}^{d_{k-1}} \alpha^{(k)}_{2,j,n} \partial_{\x_i^2} \out^{(k-1)}_n + \alpha^{(k)}_{5, j,n} \x + \alpha^{(k)}_{6,j,n} \\
    \partial_{\x_i^2} \out^{(k)}_j \geq \partial_{\x_i^2} \out^{(k), L}_j = & \sum_{n=1}^{d_{k-1}} \beta^{(k)}_{2,j,n} \partial_{\x_i^2} \out^{(k-1)}_n + \beta^{(k)}_{5, j,n} \x + \beta^{(k)}_{6,j,n},
\end{split}
\end{align}
where:
\begin{align*}
    \alpha^{(k)}_{5, j,n} & = \alpha^{(k),+}_{0, j,n} \vec{C}^{(k), U}_n + \alpha^{(k),-}_{0, j,n} \vec{C}^{(k), L}_n + \alpha^{(k),+}_{1, j,n} \vec{M}^{(k), U}_{j,n} + \alpha^{(k),-}_{1, j,n} \vec{M}^{(k), L}_{j,n} + \alpha^{(k), +}_{3,j,n} \vec{D}^{(k), U}_{j,n} + \alpha^{(k), -}_{3,j,n} \vec{D}^{(k), L}_{j,n}\\
    \alpha^{(k)}_{6, j,n} & = \alpha^{(k),+}_{0, j,n} \vec{c}^{(k), U}_n + \alpha^{(k),-}_{0, j,n} \vec{c}^{(k), L}_n + \alpha^{(k),+}_{1, j,n} \vec{m}^{(k), U}_{j,n} + \alpha^{(k),-}_{1, j,n} \vec{m}^{(k), L}_{j,n} + \alpha^{(k), +}_{3,j,n} \vec{d}^{(k), U}_{j,n} + \alpha^{(k), -}_{3,j,n} \vec{d}^{(k), L}_{j,n} + \alpha^{(k)}_{4,j,n}\\
    \beta^{(k)}_{5, j,n} & = \beta^{(k),+}_{0, j,n} \vec{C}^{(k), L}_n + \beta^{(k),-}_{0, j,n} \vec{C}^{(k), U}_n + \beta^{(k),+}_{1, j,n} \vec{M}^{(k), L}_{j,n} + \beta^{(k),-}_{1, j,n} \vec{M}^{(k), U}_{j,n} + \beta^{(k), +}_{3,j,n} \vec{D}^{(k), L}_{j,n} + \beta^{(k), -}_{3,j,n} \vec{D}^{(k), U}_{j,n}\\
    \beta^{(k)}_{6, j,n} & = \beta^{(k),+}_{0, j,n} \vec{c}^{(k), L}_n + \beta^{(k),-}_{0, j,n} \vec{c}^{(k), U}_n + \beta^{(k),+}_{1, j,n} \vec{m}^{(k), L}_{j,n} + \beta^{(k),-}_{1, j,n} \vec{m}^{(k), U}_{j,n} + \beta^{(k), +}_{3,j,n} \vec{d}^{(k), L}_{j,n} + \beta^{(k), -}_{3,j,n} \vec{d}^{(k), U}_{j,n} + \beta^{(k)}_{4,j,n}
\end{align*}
This forms a recursion of exactly the same form as Equation \ref{eq:partial_xi_hat_z_substituted} from Appendix \ref{app:proof_bounding_first_derivative}, where only the coefficients of $\partial_{\x_i^2} \out^{(k-1)}_n$ and $\x$ are different ($\alpha^{(k)}_{0,j,n}$ in this case is referred by $\alpha^{(k)}_{2,j,n}$, $\alpha^{(k)}_{3,j,n}$ by $\alpha^{(k)}_{5,j,n}$, and $\alpha^{(k)}_{4,j,n}$ by $\alpha^{(k)}_{6,j,n}$, and similarly for the $\beta$ values). This yields:
\begin{equation*}
    \partial_{\x_i x_i} \out^{(L-1),U}_{j} = \rho^{(1), U}_{0,j,i} + \sum_{r=1}^{d_0} \rho^{(1), U}_{1, j,r} \x + \rho^{(1), U}_{2,j,r},
\end{equation*}
where:
\begin{align*}
    \rho^{(k-1), U}_{0, j,r} & = 
    \begin{cases}
    \alpha^{(k)}_{2, n,r}\quad\text{if } k = L\\
    \sum_{n=1}^{d_{k-1}} \rho^{(k), U}_{0, j,n} \mu^{(k-1), U}_{2, n,r}\quad\text{if } k\in \{2,\dots,L-1\}\\
    \end{cases}\\
    \rho^{(k-1), U}_{1, j,r} & = \begin{cases}
    \alpha^{(k)}_{5, n,r}\quad\text{if } k = L\\
    \sum_{n=1}^{d_{k-1}} \rho^{(k), U}_{0, j,n} \mu^{(k-1), U}_{5, n,r} + \frac{1}{d_{k-2}} \rho^{(k), U}_{1, j,n}\quad\text{if } k\in \{2,\dots,L-1\}\\
    \end{cases}\\
    \rho^{(k-1), U}_{2, j,r} & = \begin{cases}
    \alpha^{(k)}_{6, n,r}\quad\text{if } k = L\\
    \sum_{n=1}^{d_{k-1}} \rho^{(k), U}_{0, j,n} \mu^{(k-1), U}_{6, n,r} + \frac{1}{d_{k-2}} \rho^{(k), U}_{2, j,n}\quad\text{if } k\in \{2,\dots,L-1\}\\
    \end{cases},
\end{align*}
and:
\begin{align*}
    \mu^{(k-1), U}_{p, n,:} = 
    \begin{cases}
    \alpha^{(k-1)}_{p, n,:}\quad\text{if } \rho^{(k), U}_{0, j,n} \geq 0\\
    \beta^{(k-1)}_{p, n,:}\quad\text{if } \rho^{(k), U}_{0, j,n} < 0\\
    \end{cases}\text{, } p\in \{2, 5, 6\}.
\end{align*}
And following the same argument:
\begin{equation*}
    \partial_{\x_i} \out^{(L-1),L}_{j} = \rho^{(1),L}_{0,j,i} + \sum_{r=1}^{d_0} \rho^{(1),L}_{1, j,r} \x + \rho^{(1),L}_{2,j,r},
\end{equation*}
where:
\begin{align*}
    \rho^{(k-1), L}_{0, j,r} & = 
    \begin{cases}
    \beta^{(k)}_{2, n,r}\quad\text{if } k = L\\
    \sum_{n=1}^{d_{k-1}} \rho^{(k),L}_{0, j,n} \mu^{(k-1), L}_{2, n,r}\quad\text{if } k\in \{2,\dots,L-1\}\\
    \end{cases}\\
    \rho^{(k-1), L}_{1, j,r} & = \begin{cases}
    \beta^{(k)}_{5, n,r}\quad\text{if } k = L\\
    \sum_{n=1}^{d_{k-1}} \rho^{(k),L}_{0, j,n} \mu^{(k-1), L}_{5, n,r} + \frac{1}{d_{k-2}} \rho^{(k),L}_{1, j,n}\quad\text{if } k\in \{2,\dots,L-1\}\\
    \end{cases}\\
    \rho^{(k-1), L}_{2, j,r} & = \begin{cases}
    \beta^{(k)}_{6, n,r}\quad\text{if } k = L\\
    \sum_{n=1}^{d_{k-1}} \rho^{(k),L}_{0, j,n} \mu^{(k-1), L}_{6, n,r} + \frac{1}{d_{k-2}} \rho^{(k),L}_{2, j,n}\quad\text{if } k\in \{2,\dots,L-1\}\\
    \end{cases},
\end{align*}
and:
\begin{align*}
    \mu^{(k-1),L}_{p, n,:} = 
    \begin{cases}
    \beta^{(k-1)}_{p, n,:}\quad\text{if } \rho^{(k), L}_{0, j,n} \geq 0\\
    \alpha^{(k-1)}_{p, n,:}\quad\text{if } \rho^{(k), L}_{0, j,n} < 0\\
    \end{cases}\text{, } p\in \{2, 5, 6\}
\end{align*}
With these expressions, we can compute the required $\partial_{\x_i^2} \out^{(k-1), L}_{n}$ and $\partial_{\x_i^2} \out^{(k-1), U}_{n}$ which we assumed to be known to derive Equation \ref{eq:partial_xixi_hat_z_non_substituted}.

Finally, with the exact same argument as in Appendix~\ref{app:proof_bounding_first_derivative}, we obtain:
\begin{equation*}
    \partial_{\x_i} u^{U}_{\theta, j} = \psi^{(1), U}_{0,j,i} + \sum_{r=1}^{d_0} \psi^{(1), U}_{1, j,r} \x + \psi^{(1), U}_{2,j,r},
\end{equation*}
where:
\begin{align*}
    \psi^{(k-1), U}_{0, j,r} & = 
    \begin{cases}
        \sum_{n=1}^{d_{k-1}} \W^{(k),+}_{j,n} \alpha^{(k-1)}_{2, n,r} + \W^{(k),-}_{j,n}\beta^{(k-1)}_{2, n,r}\quad\text{if } k = L\\[5pt]
        \sum_{n=1}^{d_{k-1}} \psi^{(k), U}_{0, j,n} \psi^{(k-1), U}_{2, n,r}\quad\text{if } k\in \{2,\dots,L-1\}\\
    \end{cases}\\
    \psi^{(k-1), U}_{1, j,r} & = \begin{cases}
        \sum_{n=1}^{d_{k-1}} \W^{(k),+}_{j,n} \alpha^{(k-1)}_{5, n,r} + \W^{(k),-}_{j,n}\beta^{(k-1)}_{5, n,r}\quad\text{if } k = L\\[5pt]
        \sum_{n=1}^{d_{k-1}} \psi^{(k), U}_{0, j,n} \psi^{(k-1), U}_{5, n,r} + \frac{1}{d_{k-2}} \psi^{(k), U}_{1, j,n}\quad\text{if } k\in \{2,\dots,L-1\}\\
    \end{cases}\\
    \psi^{(k-1), U}_{2, j,r} & = \begin{cases}
        \sum_{n=1}^{d_{k-1}} \W^{(k),+}_{j,n} \alpha^{(k-1)}_{6, n,r} + \W^{(k),-}_{j,n}\beta^{(k-1)}_{6, n,r}\quad\text{if } k = L\\[5pt]
        \sum_{n=1}^{d_{k-1}} \psi^{(k), U}_{0, j,n} \psi^{(k-1), U}_{6, n,r} + \frac{1}{d_{k-2}} \psi^{(k), U}_{2, j,n}\quad\text{if } k\in \{2,\dots,L-1\}\\
    \end{cases},
\end{align*}
and:
\begin{align*}
    \psi^{(k-1)}_{p, n,:} = 
    \begin{cases}
    \alpha^{(k-1)}_{p, n,:}\quad\text{if } \psi^{(k), U}_{0, j,n} \geq 0\\
    \beta^{(k-1)}_{p, n,:}\quad\text{if } \psi^{(k), U}_{0, j,n} < 0\\
    \end{cases}\text{, } p\in \{2, 5, 6\}.
\end{align*}
And similarly for the lower bound:
\begin{equation*}
    \partial_{\x_i} u^{L}_{\theta, j} = \psi^{(1), L}_{0,j,i} + \sum_{r=1}^{d_0} \psi^{(1), L}_{1, j,r} \x + \psi^{(1), L}_{2,j,r},
\end{equation*}
where:
\begin{align*}
    \psi^{(k-1), L}_{0, j,r} & = 
    \begin{cases}
    \sum_{n=1}^{d_{k-1}} \W^{(k),+}_{j,n} \beta^{(k-1)}_{2, n,r} + \W^{(k),-}_{j,n}\alpha^{(k-1)}_{2, n,r}\quad\text{if } k = L\\[5pt]
    \sum_{n=1}^{d_{k-1}} \psi^{(k), L}_{0, j,n} \psi^{(k-1), L}_{2, n,r}\quad\text{if } k\in \{2,\dots,L-1\}\\
    \end{cases}\\
    \psi^{(k-1), L}_{1, j,r} & = \begin{cases}
    \sum_{n=1}^{d_{k-1}} \W^{(k),+}_{j,n} \beta^{(k-1)}_{5, n,r} + \W^{(k),-}_{j,n}\alpha^{(k-1)}_{5, n,r}\quad\text{if } k = L\\[5pt]
    \sum_{n=1}^{d_{k-1}} \psi^{(k), L}_{0, j,n} \psi^{(k-1), L}_{5, n,r} + \frac{1}{d_{k-2}} \psi^{(k), L}_{1, j,n}\quad\text{if } k\in \{2,\dots,L-1\}\\
    \end{cases}\\
    \psi^{(k-1), L}_{2, j,r} & = \begin{cases}
    \sum_{n=1}^{d_{k-1}} \W^{(k),+}_{j,n} \beta^{(k-1)}_{6, n,r} + \W^{(k),-}_{j,n}\alpha^{(k-1)}_{6, n,r}\quad\text{if } k = L\\[5pt]
    \sum_{n=1}^{d_{k-1}} \psi^{(k), L}_{0, j,n} \psi^{(k-1), L}_{6, n,r} + \frac{1}{d_{k-2}} \psi^{(k), L}_{2, j,n}\quad\text{if } k\in \{2,\dots,L-1\}\\
    \end{cases},
\end{align*}
and:
\begin{align*}
    \psi^{(k-1), L}_{p, n,:} = 
    \begin{cases}
    \beta^{(k-1)}_{p, n,:}\quad\text{if } \psi^{(k), L}_{0, j,n} \geq 0\\
    \alpha^{(k-1)}_{p, n,:}\quad\text{if } \psi^{(k), L}_{0, j,n} < 0\\
    \end{cases}\text{, } p\in \{2, 5, 6\}.
\end{align*}

\subsection{Formulation and proof of closed-form global bounds on \texorpdfstring{$\partial_{\x_i} \solnet$}{∂\_\{x\_i\} u\_θ}}
\label{app:proof_lemma_closed_form_bounds}

\begin{lemma}[Closed-form global bounds on $\partial_{\x_i} \solnet$]
\label{lem:closed_form_bounds}
For every $j\in \{1,\dots,d_L\}$ there exist two values $\kappa^U_j \in \mathbb{R}$ and $\kappa^L_j \in \mathbb{R}$, such that $\forall \x \in \mathcal{C} = \{\x \in \mathbb{R}^{d_0}: \x^L \leq \x \leq \x^U\}$ it holds that $\kappa^L_j \leq \partial_{\x_i} u_{\theta,j} \leq \kappa^U_j$, with:
\begin{align*}
    \kappa^U_j = \vec{B}^{U, +} \x^U + \vec{B}^{U, -} \x^L + \phi^{(1)}_{0,j,i} + \sum_{r=1}^{d_0} \phi^{(1)}_{2,j,r}\\
    \kappa^L_j = \vec{B}^{L, +} \x^L + \vec{B}^{L, -} \x^U + \psi^{(1)}_{0,j,i} + \sum_{r=1}^{d_0} \psi^{(1)}_{2,j,r},
\end{align*}
where $\vec{B}^{U} = \sum_{r=1}^{d_0} \phi^{(1)}_{1, j,r}$, $\vec{B}^{L} = \sum_{r=1}^{d_0} \psi^{(1)}_{1, j,r}$, and $\vec{B}^{\cdot, +} = \mathbb{I}\left(\vec{B}^{\cdot} \geq 0\right) \odot \vec{B}^{\cdot}$ and $\vec{B}^{\cdot, -} = \mathbb{I}\left(\vec{B}^{\cdot} < 0\right) \odot \vec{B}^{\cdot}$.
\end{lemma}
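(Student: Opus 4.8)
The plan is to treat this as a direct consequence of Theorem~\ref{thm:first_derivative_bounds}, reducing the statement to maximizing and minimizing an affine function over an axis-aligned box. First I would invoke Theorem~\ref{thm:first_derivative_bounds} to obtain, for each output coordinate $j$, the two affine bounding functions satisfying $\partial_{\x_i} u^{L}_{\theta,j}(\x) \leq \partial_{\x_i} u_{\theta,j}(\x) \leq \partial_{\x_i} u^{U}_{\theta,j}(\x)$ for all $\x \in \mathcal{C}$, where the upper bound is $\phi^{(1)}_{0,j,i} + \sum_{r=1}^{d_0}(\phi^{(1)}_{1,j,r}\x + \phi^{(1)}_{2,j,r})$ and the lower bound has the analogous form with the $\psi^{(1)}$ coefficients. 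Since $\partial_{\x_i} u_{\theta,j}$ is sandwiched between these affine functions pointwise on $\mathcal{C}$, it suffices to set $\kappa^U_j = \max_{\x\in\mathcal{C}} \partial_{\x_i} u^{U}_{\theta,j}(\x)$ and $\kappa^L_j = \min_{\x\in\mathcal{C}} \partial_{\x_i} u^{L}_{\theta,j}(\x)$, which then bound $\partial_{\x_i} u_{\theta,j}$ globally over $\mathcal{C}$.

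Next I would collect the linear part of each bound into a single coefficient vector. Because $\x$ does not depend on the summation index $r$, I can factor it out of the sum and define $\vec{B}^{U} = \sum_{r=1}^{d_0}\phi^{(1)}_{1,j,r}$ and $\vec{B}^{L} = \sum_{r=1}^{d_0}\psi^{(1)}_{1,j,r}$, so that each bound becomes genuinely affine in $\x$: the upper bound equals $\vec{B}^{U}\x + (\phi^{(1)}_{0,j,i} + \sum_{r}\phi^{(1)}_{2,j,r})$ and similarly for the lower. The constant terms are now fixed and play no role in the optimization.

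The core step is the elementary fact that a linear functional $\vec{B}\x$ attains its maximum over the box $\mathcal{C} = \{\x : \x^L \leq \x \leq \x^U\}$ at the vertex that, coordinate by coordinate, selects $\x^U$ where the coefficient is nonnegative and $\x^L$ where it is negative; the minimum is attained at the opposite vertex. Encoding this with the positive and negative parts $\vec{B}^{\cdot,+}$ and $\vec{B}^{\cdot,-}$ yields $\max_{\x\in\mathcal{C}}\vec{B}^{U}\x = \vec{B}^{U,+}\x^U + \vec{B}^{U,-}\x^L$ and $\min_{\x\in\mathcal{C}}\vec{B}^{L}\x = \vec{B}^{L,+}\x^L + \vec{B}^{L,-}\x^U$. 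Adding back the constant terms recovers exactly the claimed closed forms for $\kappa^U_j$ and $\kappa^L_j$.

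There is no genuine obstacle here: the result is a box-constrained linear program whose separable structure makes the optimum closed-form and vertex-attained. The only points requiring care are bookkeeping — ensuring the index-$r$ summation is correctly folded into $\vec{B}^{U}$ and $\vec{B}^{L}$ and that the sign-splitting convention for $\vec{B}^{\cdot,\pm}$ is applied consistently (with the roles of $\x^L$ and $\x^U$ swapped between the maximization and the minimization) — and observing that the identical argument transfers verbatim, via Theorem~\ref{thm:second_derivative_bounds}, to $\partial_{\x_i^2}\solnet$, which is why the analogous second-derivative bounds need not be restated.
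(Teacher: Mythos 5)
Your proposal is correct and matches the paper's own argument: the paper likewise reduces the lemma to maximizing/minimizing an affine function $\vec{v}^\top\x + c$ over the axis-aligned box $\mathcal{C}$, splitting $\vec{v}$ into its nonnegative and negative parts to select $\x^U$ or $\x^L$ coordinatewise. In fact, your write-up is slightly more explicit than the paper's, which states only the box-LP fact and leaves the invocation of Theorem~\ref{thm:first_derivative_bounds} and the folding of the $r$-summation into $\vec{B}^{U}$, $\vec{B}^{L}$ implicit.
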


\begin{proof}
Take a function $f: \mathbb{R}^{d_0} \to \mathbb{R}$ defined as $f(\x) = \vec{v}^\top \x + c$ for $\vec{v} \in \mathbb{R}^{d_0}$ and $c \in \mathbb{R}$, as well as a domain $\mathcal{C} = \{\x \in \mathbb{R}^{d_0}: \x^L \leq \x \leq \x^U\}$. Given the perpendicularity of the constraints in $\mathcal{C}$, by separating each component of $f$ we obtain:
\begin{align*}
    \max_{\x \in \mathcal{C}} f(\x) = (\vec{v}^{+})^\top \x^{U} + (\vec{v}^{-})^\top \x^{L} + c,\qquad \min_{\x \in \mathcal{C}} f(\x) = (\vec{v}^{+})^\top \x^{L} + (\vec{v}^{-})^\top \x^{U} + c,
\end{align*}
where $\vec{v}^{+} = \mathbb{I}\left(\vec{v} \geq 0\right) \odot \vec{v}$ and $\vec{v}^{-} = \mathbb{I}\left(\vec{v} < 0\right) \odot \vec{v}$. 
\end{proof}

\section{On the Complexity of Bounding using \partialcrown}
\label{app:complexity}

The complexity $\mathcal{M}$ of bounding $f_\theta$ (or any function of the partial derivatives of $u_\theta$) is contingent on the type of PDE we are bounding. 

For simplicity, assume the solution network, $u_\theta$, has $L$ fully connected hidden layers each with $d$ output neurons, and that the relaxation of the activation functions and their derivatives, i.e., $\sigma$, $\sigma’$, …, can be computed in $\mathcal{O}(1)$. Using CROWN we can bound the output of layer $l \in \{1,\dots,L\}$ in $\mathcal{O}(l d^2)$, yielding the complexity of bounding the output of $u_\theta$ as $\mathcal{O}(L^2 d^2)$. With our hybrid scheme of backward propagation within the bounding component ($\partial_{\mathbf{x}\_i} u_\theta$ or $\partial_{\mathbf{x}^2\_i} u_\theta$) and forward substitution for elements from other components (e.g., $y^{(k)}$ in the bounding of $\partial_{\mathbf{x}\_i} u_\theta$, see Equation \ref{eq:partial_hat_z_k_1_hat_z_substituted} in Appendix \ref{app:proof_bounding_first_derivative}), the complexity of bounding the output of each of these components remains $\mathcal{O}(L^2 d^2)$. Following the McCormick envelope bounding described in Section \ref{sec:multiplicative_terms}, to estimate the final complexity we must now assume that the particular structure of $f_\theta$ will be linear lower and upper bounded as a function of $R$ partial derivative components (of first or second order). For example, in the case of Burgers’ equation, $R=3$. The final complexity of bounding $f_\theta$ can then be written as $\mathcal{O}(R L^2 d^2)$.

\section{Correctness Certification for PINNs with \texorpdfstring{$\tanh$}{tanh} activations}
\label{app:tanh_relaxations}

\partialcrown allows one to compute lower and upper bounds on the outputs of $\partial_{\x_i} \solnet$, $\partial_{\x_i^2} \solnet$ and $\resnet$ as long as we can obtain linear bounds for $\solnet$'s activations, $\sigma$, $\partial_{\x_i} \solnet$'s activations, $\sigma'$, and $\partial_{\x_i^2} \solnet$'s activations, $\sigma''$, assuming previously computed bounds on the input of those activations. In this section we explore how to compute those bounds when $\solnet$ has $\tanh$ activations.

Throughout, we assume the activation's input ($y$) is lower bounded by $l_b$ and upper bounded by $u_b$ (\ie,  $l_b \leq y \leq u_b$), and define the upper bound line as $h^U(y) = \alpha^U(y + \beta^U)$, and the lower bound line as $h^L(y) = \alpha^L(y + \beta^L)$. For the sake of brevity, we define for a function $h: \mathbb{R} \to \mathbb{R}$, and points $p, d \in \mathbb{R}$ the function $\tau(h, p, d) = \nicefrac{(h(p) - h(d))}{(p - d)} - h'(d)$. This is useful as for a given $h$ and $p$, if there exists a $d\in[d_l, d_u]$, such that $\tau_{d_l, d_u}(h, p, d) = 0$, then $h'(d)$ is the slope of a tangent line to $h$ that passes through $p$ and $d$.

\paragraph{Bounding $\sigma(y) = \tanh(y)$} We follow the bounds provided in CROWN \citep{zhang2018efficient}, by observing that $\tanh$ is a convex function for $y < 0$ and concave for $y > 0$. For $l_b \leq u_b \leq 0$ we let $h^U$ be the line that connects $l_b$ and $u_b$, and for an arbitrary $d\in[l_b, u_b]$ we let $h^L$ be the tangent line at that point. Similarly, for $0 \leq l_b \leq u_b$ we let $h^L$ be the line that connects $l_b$ and $u_b$, and for an arbitrary $d\in[l_b, u_b]$ we let $h^U$ be the tangent line at that point. For the last case where $l_b \leq 0 \leq u_b$, we let $h^U$ be the tangent line at $d_1 \geq 0$ that passes through $(l_b, \sigma(l_b))$, and $h^L$ be the tangent line at $d_2 \leq 0$ that passes through $(u_b, \sigma(u_b))$. Given these bounds were given in \citet{zhang2018efficient}, we omit visual representations of them.

\begin{table*}[t]
    \caption{\textbf{Relaxing $\sigma'(y) = 1-\tanh^2(y)$}: linear upper and lower bounds for a given $l_b$ and $u_b$.}
    \label{tab:sigma_prime_relaxations}
    \center
    {
    \footnotesize
    \begin{tabular}{>{\centering\arraybackslash}m{0.025\textwidth}>{\centering\arraybackslash}m{0.025\textwidth}>{\centering\arraybackslash}>{\centering\arraybackslash}m{0.20\textwidth}>{\centering\arraybackslash}m{0.20\textwidth}>{\centering\arraybackslash}m{0.20\textwidth}>{\centering\arraybackslash}m{0.20\textwidth}}
        \hline
        $l_b$ & $u_b$ & $\alpha^U$ & $\beta^U$ & $\alpha^L$ & $\beta^L$\\\hline\hline
        $\mathcal{R}_1$ & $\mathcal{R}_1$ & \multirow{2}{*}{$\nicefrac{(\sigma(u_b) - \sigma(l_b))}{(u_b - l_b)}$} & \multirow{2}{*}{$\nicefrac{\sigma(l_b)}{\alpha^U} - l_b$} & \multirow{2}{*}{$\sigma'(d)$, $d\in[l_b, u_b]$} & \multirow{2}{*}{$\nicefrac{\sigma(d)}{\alpha^L} - d$} \\
        $\mathcal{R}_3$ & $\mathcal{R}_3$ & & & & \\\hline
        $\mathcal{R}_2$ & $\mathcal{R}_2$ & $\sigma'(d)$, $d\in[l_b, u_b]$ & $\nicefrac{\sigma(d)}{\alpha^U} - d$ & $\nicefrac{(\sigma(u_b) - \sigma(l_b))}{(u_b - l_b)}$ & $\nicefrac{\sigma(l_b)}{\alpha^L} - l_b$\\\hline
        $\mathcal{R}_1$ & $\mathcal{R}_2$ & $\sigma'(d_1)$, $\tau_{y_1, u_b}(\sigma', l_b, d_1) = 0$ & $\nicefrac{\sigma(l_b)}{\alpha^U} - l_b$ & $\sigma'(d_2)$, $\tau_{l_b, y_1}(\sigma', u_b, d_2) = 0$ & $\nicefrac{\sigma(u_b)}{\alpha^L} - u_b$\\\hline
        $\mathcal{R}_2$ & $\mathcal{R}_3$ & $\sigma'(d_1)$, $\tau_{l_b, y_2}(\sigma', u_b, d_1) = 0$ & $\nicefrac{\sigma(u_b)}{\alpha^U} - u_b$ & $\sigma'(d_2)$, $\tau_{y_2, u_b}(\sigma', l_b, d_2) = 0$ & $\nicefrac{\sigma(l_b)}{\alpha^L} - l_b$\\\hline
        $\mathcal{R}_1$ & $\mathcal{R}_3$ & $\alpha\sigma'(d_1) + (1-\alpha)\sigma'(d_2)$, $\tau_{l_b, 0}(\sigma', l_b, d_1) = 0$, $\tau_{0, u_b}(\sigma', u_b, d_2) = 0$ & $\alpha \beta^U_1 + (1-\alpha) \beta^U_2$,\newline $\beta^U_1 = \nicefrac{\sigma(l_b)}{\sigma'(d_1)} - l_b$, \newline $\beta^U_2 = \nicefrac{\sigma(u_b)}{\sigma'(d_2)} - u_b$ & $\begin{cases}
        \sigma'(d_3)\text{, } -l_b \geq u_b\\
        \sigma'(d_4)\text{, } -l_b < u_b
        \end{cases}$,\newline $\tau_{l_b, y_1}(\sigma', u_b, d_3) = 0$, $\tau_{y_2, u_b}(\sigma', l_b, d_4) = 0$ & $\begin{cases}
        \frac{\sigma'(u_b)}{\sigma'(d_3)} - u_b\text{, } -l_b \geq u_b\\
        \frac{\sigma'(l_b)}{\sigma'(d_4)} - l_b\text{, } -l_b < u_b
        \end{cases}$\\
        \hline
    \end{tabular}
    }
\end{table*}

\begin{figure*}
    \begin{subfigure}[b]{0.32\textwidth}
        \centering
        \includegraphics[width=\textwidth]{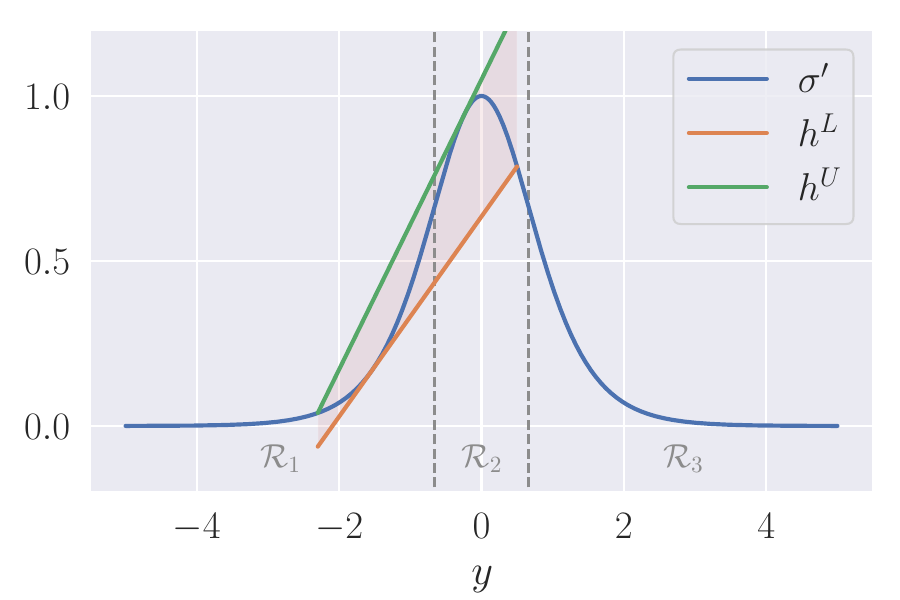}
        \caption{$l_b \in \mathcal{R}_1$ and $u_b \in \mathcal{R}_2$}
        \label{fig:tanh_first_derivative_1}
    \end{subfigure}
    \hfill
    \begin{subfigure}[b]{0.32\textwidth}
        \centering
        \includegraphics[width=\textwidth]{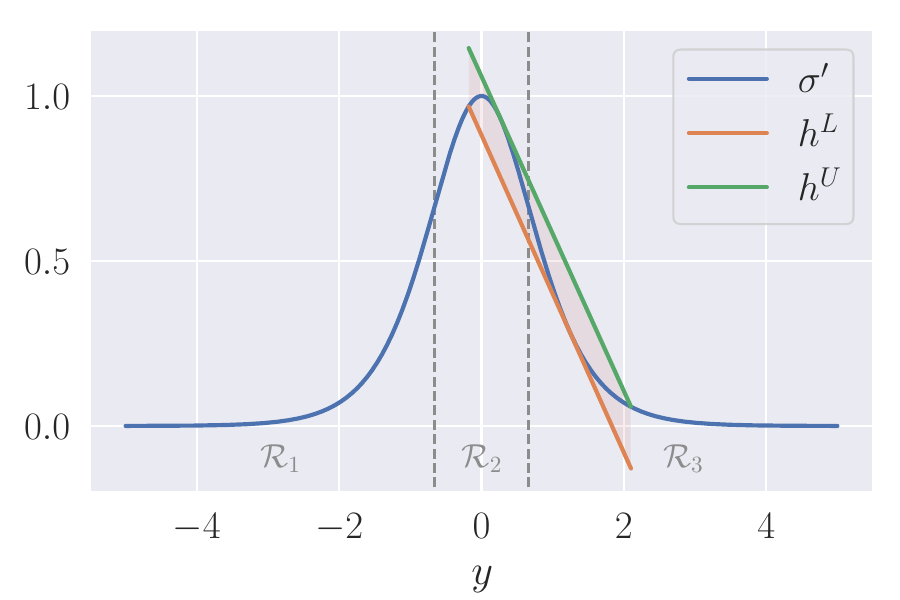}
        \caption{$l_b \in \mathcal{R}_2$ and $u_b \in \mathcal{R}_3$}
        \label{fig:tanh_first_derivative_2}
    \end{subfigure}
    \hfill
    \begin{subfigure}[b]{0.32\textwidth}
        \centering
        \includegraphics[width=\textwidth]{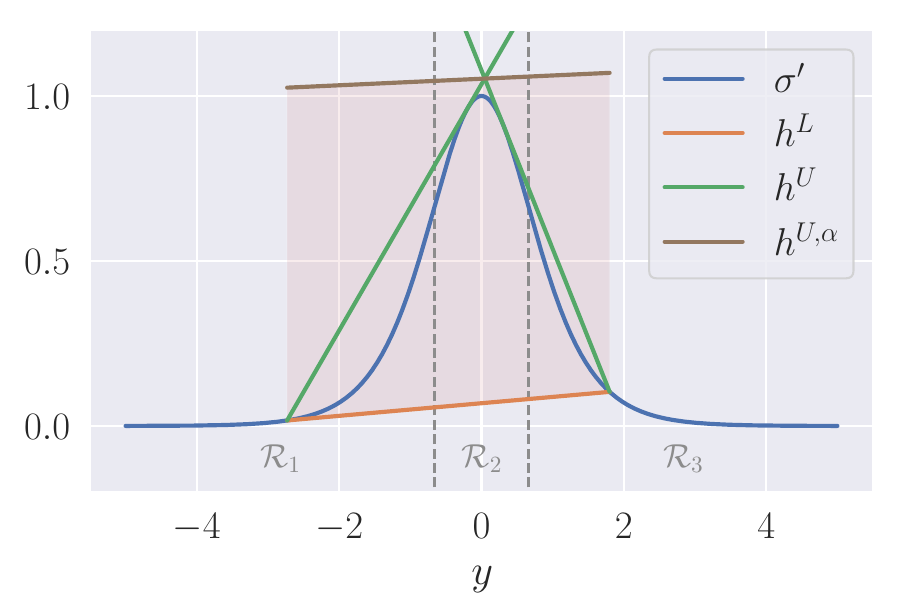}
        \caption{$l_b \in \mathcal{R}_1$ and $u_b \in \mathcal{R}_3$}
        \label{fig:tanh_first_derivative_3}
    \end{subfigure}
    \caption{\textbf{Relaxing $\sigma'(y) = 1-\tanh^2(y)$}: examples of the linear relaxations of $\sigma'$ for different sets of $l_b$ and $u_b$.}
    \label{fig:tanh_first_derivative_relaxations}
\end{figure*}

\paragraph{Bounding $\sigma'(y) = 1 - \tanh^2(y)$} The derivative of $\tanh(y)$, $1 - \tanh^2(y)$, is a more complicated function. By inspecting it's derivative, $\sigma''(y) = -2\tanh(y) (1 - \tanh^2(y))$, we conclude that there are two inflection points at $y_1 = \max \sigma''(y)$ and $y_2 = \min \sigma''(y)$, leading to three different regions: $y \in ]-\infty, y_1]$ ($\mathcal{R}_1$, the first convex region), $y\in ]y_1, y_2]$ ($\mathcal{R}_2$, the concave region), and $y\in ]y_2, +\infty[$ ($\mathcal{R}_3$, the second convex region). As a result, there are 6 combinations for the location of $l_b$ and $u_b$ which must be resolved.

The first two cases are the straightforward: if $l_b \in \mathcal{R}_1$ and $u_b \in \mathcal{R}_1$ or $l_b \in \mathcal{R}_3$ and $u_b \in \mathcal{R}_3$, \ie, if both ends are in the same convex region, then we use the same relaxation as in the bounding of $\tanh$ in the convex region - $h^U$ is the line that connects $l_b$ and $u_b$, while $h^L$ is a tangent line at a point $d \in [l_b, u_b]$. Similarly for the case where $l_b \in \mathcal{R}_2$ and $u_b \in \mathcal{R}_2$, we take the solution from the $\tanh$ concave side and use $h^L$ to be the line that connects $l_b$ and $u_b$, and $h^U$ to be the tangent line at a point $d \in [l_b, u_b]$.
The next case is $l_b \in \mathcal{R}_1$ and $u_b \in \mathcal{R}_2$, \ie, $l_b$ in the first convex region and $u_b$ in the concave one. In this case we use the same bounding as in the $\tanh$ case when $l_b \leq 0 \leq u_b$: $h^U$ is the tangent line at $d_1 \geq y_1$ that passes through $(l_b, \sigma'(l_b))$, and $h^L$ is the tangent line at $d_2 \leq y_1$ that passes through $(u_b, \sigma'(u_b))$. In a similar fashion, for the case in which $l_b \in \mathcal{R}_2$ and $u_b \in \mathcal{R}_3$, \ie, $l_b$ in the concave region and $u_b$ in the second convex region, we take the opposite approach: $h^U$ is the tangent line at $d_1 \leq y_2$ that passes through $(u_b, \sigma'(u_b))$, and $h^L$ is the tangent line at $d_2 \geq y_2$ that passes through $(l_b, \sigma'(l_b))$. These two cases are plotted in Figures \ref{fig:tanh_first_derivative_1} and \ref{fig:tanh_first_derivative_2}.

Finally, we tackle the case where $l_b \in \mathcal{R}_1$ and $u_b \in \mathcal{R}_3$, \ie, where $l_b$ is in the first convex region and $u_b$ is in the second convex region. Given there is a concave region in between them, two valid upper bounds would be the ones considered previously for $l_b \in \mathcal{R}_1$ and $u_b \in \mathcal{R}_2$, and $l_b \in \mathcal{R}_2$ and $u_b \in \mathcal{R}_3$. To obtain these bounds, we shift the upper bound in the first case to $0$, and the lower bound in the second case to $0$ (see $h^U$ in Figure \ref{fig:tanh_first_derivative_3}). As our bounding requires a single $h^U$, we take a convex combination of the two bounds obtained, $h^{U,\alpha}$. For the lower bound, we use a line that passes by either $(u_b, \sigma'(u_b))$, if $-l_b \geq u_b$, or by $(l_b, \sigma'(l_b))$, otherwise, as well as by a tangent point $d_3 \in \mathcal{R}_1$, if $-l_b \geq u_b$, or by $d_4 \in \mathcal{R}_3$, otherwise. See the line $h^{U, \alpha}$ in Figure \ref{fig:tanh_first_derivative_3} for a visual representation. 

\begin{table*}[t]
    \caption{\textbf{Relaxing $\sigma''(y) = -2\tanh(y)\left(1-\tanh^2(y)\right)$}: linear upper and lower bounds for a given $l_b$ and $u_b$.}
    \label{tab:sigma_prime_prime_relaxations}
    \center
    {
    \footnotesize
    \begin{tabular}{>{\centering\arraybackslash}m{0.025\textwidth}>{\centering\arraybackslash}m{0.025\textwidth}>{\centering\arraybackslash}>{\centering\arraybackslash}m{0.20\textwidth}>{\centering\arraybackslash}m{0.20\textwidth}>{\centering\arraybackslash}m{0.20\textwidth}>{\centering\arraybackslash}m{0.20\textwidth}}
        \hline
        $l_b$ & $u_b$ & $\alpha^U$ & $\beta^U$ & $\alpha^L$ & $\beta^L$\\\hline\hline
        $\mathcal{R}_1$ & $\mathcal{R}_1$ & \multirow{2}{*}{$\nicefrac{(\sigma''(u_b) - \sigma''(l_b))}{(u_b - l_b)}$} & \multirow{2}{*}{$\nicefrac{\sigma''(l_b)}{\alpha^U} - l_b$} & \multirow{2}{*}{$\sigma'''(d)$, $d\in[l_b, u_b]$} & \multirow{2}{*}{$\nicefrac{\sigma''(d)}{\alpha^L} - d$} \\
        $\mathcal{R}_3$ & $\mathcal{R}_3$ & & & & \\\hline
        $\mathcal{R}_2$ & $\mathcal{R}_2$ & \multirow{2}{*}{$\sigma'''(d)$, $d\in[l_b, u_b]$} & \multirow{2}{*}{$\nicefrac{\sigma''(d)}{\alpha^U} - d$} & \multirow{2}{*}{$\nicefrac{(\sigma''(u_b) - \sigma''(l_b))}{(u_b - l_b)}$} & \multirow{2}{*}{$\nicefrac{\sigma''(l_b)}{\alpha^L} - l_b$}\\
        $\mathcal{R}_4$ & $\mathcal{R}_4$ & & & & \\\hline
        %
        %
        $\mathcal{R}_1$ & $\mathcal{R}_2$ & $\sigma'''(d_1)$, $\tau_{y_1, u_b}(\sigma'', l_b, d_1) = 0$ & $\nicefrac{\sigma(l_b)}{\alpha^U} - l_b$ & $\sigma'''(d_2)$, $\tau_{l_b, y_1}(\sigma'', u_b, d_2) = 0$ & $\nicefrac{\sigma''(u_b)}{\alpha^L} - u_b$\\\hline
        $\mathcal{R}_3$ & $\mathcal{R}_4$ & $\sigma'''(d_1)$, $\tau_{y_3, u_b}(\sigma'', l_b, d_1) = 0$ & $\nicefrac{\sigma''(l_b)}{\alpha^U} - l_b$ & $\sigma'''(d_2)$, $\tau_{l_b, y_3}(\sigma'', u_b, d_2) = 0$ & $\nicefrac{\sigma''(u_b)}{\alpha^L} - u_b$\\\hline
        %
        %
        $\mathcal{R}_2$ & $\mathcal{R}_3$ & $\sigma'''(d_1)$, $\tau_{l_b, y_2}(\sigma'', u_b, d_1) = 0$ & $\nicefrac{\sigma''(u_b)}{\alpha^U} - u_b$ & $\sigma'''(d_2)$, $\tau_{y_2, u_b}(\sigma'', l_b, d_2) = 0$ & $\nicefrac{\sigma''(l_b)}{\alpha^L} - l_b$\\\hline
        %
        %
        $\mathcal{R}_1$ & $\mathcal{R}_3$ & $\alpha\sigma'''(d_1) + (1-\alpha)\sigma'''(d_2)$, $\tau_{l_b, y_{\max}}(\sigma'', l_b, d_1) = 0$, $\tau_{y_{\max}, u_b}(\sigma'', u_b, d_2) = 0$ & $\alpha \beta^U_1 + (1-\alpha) \beta^U_2$,\newline $\beta^U_1 = \nicefrac{\sigma''(l_b)}{\sigma'''(d_1)} - l_b$, \newline $\beta^U_2 = \nicefrac{\sigma''(u_b)}{\sigma'''(d_2)} - u_b$ & $\sigma'''(d_3)$,\newline $\tau_{y_1, u_b}(\sigma', l_b, d_3) = 0$ & $\nicefrac{\sigma''(l_b)}{\alpha^L} - l_b$\\\hline
        %
        %
        $\mathcal{R}_2$ & $\mathcal{R}_4$ & $\sigma'''(d_1)$,\newline $\tau_{l_b, y_2}(\sigma', u_b, d_1) = 0$ & $\nicefrac{\sigma''(u_b)}{\alpha^U} - u_b$ & $\alpha\sigma'''(d_2) + (1-\alpha)\sigma'''(d_3)$, $\tau_{l_b, y_{\min}}(\sigma'', l_b, d_2) = 0$, $\tau_{y_{\min}, u_b}(\sigma'', u_b, d_3) = 0$ & $\alpha \beta^L_1 + (1-\alpha) \beta^L_2$,\newline $\beta^L_1 = \nicefrac{\sigma''(l_b)}{\sigma'''(d_2)} - l_b$, \newline $\beta^L_2 = \nicefrac{\sigma''(u_b)}{\sigma'''(d_3)} - u_b$\\\hline
        %
        %
        $\mathcal{R}_1$ & $\mathcal{R}_4$ & $\alpha\sigma'''(d_1) + (1-\alpha)\sigma'''(d_2)$, $\tau_{l_b, y_{\max}}(\sigma'', l_b, d_1) = 0$, $\tau_{y_{\max}, u_b}(\sigma'', u_b, d_2) = 0$ & $\alpha \beta^U_1 + (1-\alpha) \beta^U_2$,\newline $\beta^U_1 = \nicefrac{\sigma''(l_b)}{\sigma'''(d_1)} - l_b$, \newline $\beta^U_2 = \nicefrac{\sigma''(u_b)}{\sigma'''(d_2)} - u_b$ & $\alpha\sigma'''(d_3) + (1-\alpha)\sigma'''(d_4)$, $\tau_{l_b, y_{\min}}(\sigma'', l_b, d_3) = 0$, $\tau_{y_{\min}, u_b}(\sigma'', u_b, d_4) = 0$ & $\alpha \beta^L_1 + (1-\alpha) \beta^L_2$,\newline $\beta^L_1 = \nicefrac{\sigma''(l_b)}{\sigma'''(d_3)} - l_b$, \newline $\beta^L_2 = \nicefrac{\sigma''(u_b)}{\sigma'''(d_4)} - u_b$\\
        \hline
    \end{tabular}
    }
\end{table*}

\begin{figure*}
    \begin{subfigure}[b]{0.32\textwidth}
        \centering
        \includegraphics[width=\textwidth]{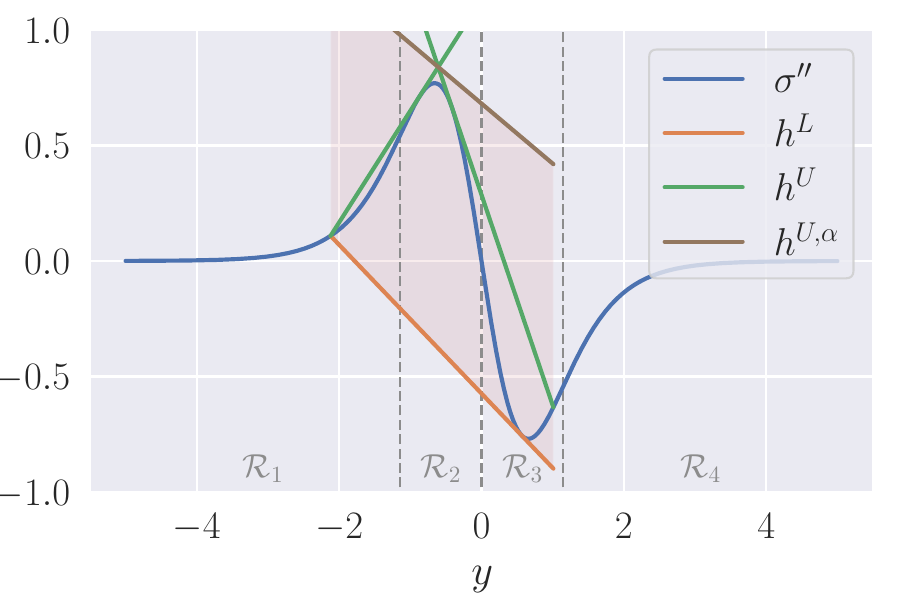}
        \caption{$l_b \in \mathcal{R}_1$ and $u_b \in \mathcal{R}_3$}
        \label{fig:tanh_second_derivative_1}
    \end{subfigure}
    \hfill
    \begin{subfigure}[b]{0.32\textwidth}
        \centering
        \includegraphics[width=\textwidth]{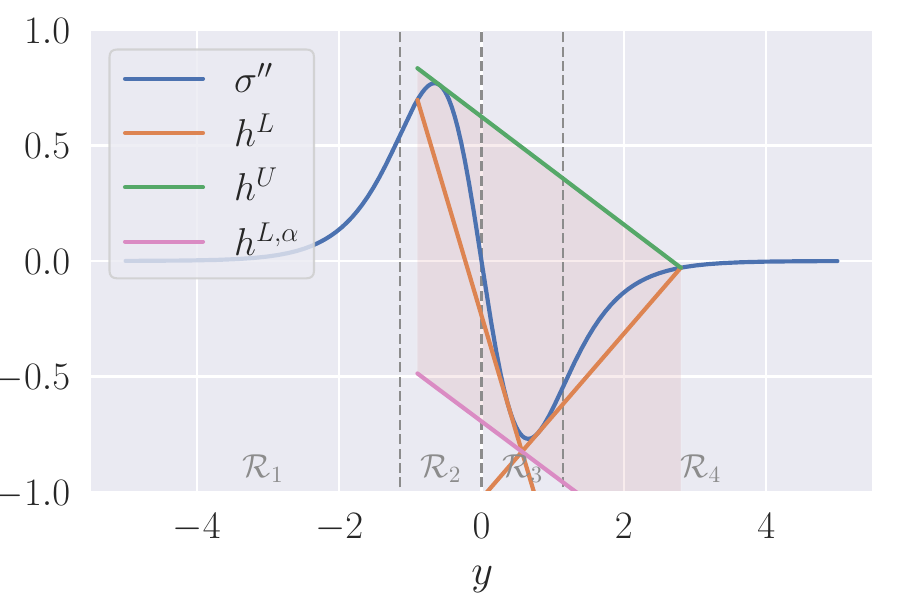}
        \caption{$l_b \in \mathcal{R}_2$ and $u_b \in \mathcal{R}_4$}
        \label{fig:tanh_second_derivative_2}
    \end{subfigure}
    \hfill
    \begin{subfigure}[b]{0.32\textwidth}
        \centering
        \includegraphics[width=\textwidth]{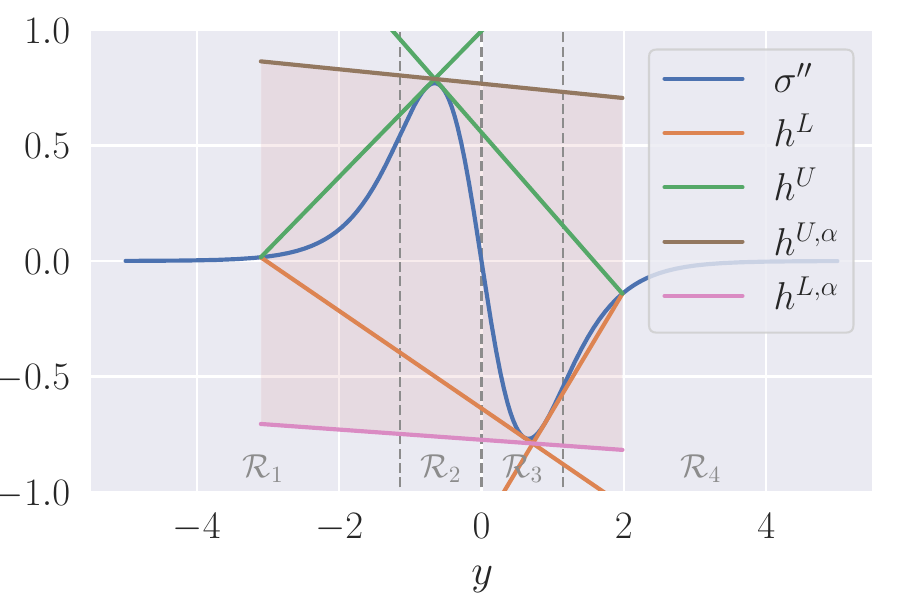}
        \caption{$l_b \in \mathcal{R}_1$ and $u_b \in \mathcal{R}_4$}
        \label{fig:tanh_second_derivative_3}
    \end{subfigure}
    \caption{\textbf{Relaxing $\sigma''(y) = -2\tanh(y)\left(1-\tanh^2(y)\right)$}: examples of the linear relaxations of $\sigma''$ for different sets of $l_b$ and $u_b$.}
    \label{fig:tanh_second_derivative_relaxations}
\end{figure*}

\paragraph{Bounding $\sigma''(y) = -2 \tanh(y) \left(1-\tanh(y)^2\right)$} By inspecting the derivative of $\sigma''$, $\sigma''(y) = -2 + 8 \tanh^2(y) - 6 \tanh^4(y)$, we conclude there are three inflection points for this function, one at $y_1 = \argmax_{y \leq 0} \sigma'''(y)$, another at $y_2 = 0$, and finally at $y_3 = -y_1$. Take also, for the sake of bounding, $y_{\max} = \argmax_{y \leq 0} \sigma''(y)$ and $y_{\min} = \argmin_{y \leq 0} \sigma''(y)$. This leads to four different regions of $\sigma''$: $y \in ]-\infty, y_1]$ ($\mathcal{R}_1$, the first convex region), $y\in ]y_1, y_2]$ ($\mathcal{R}_2$, the first concave region), $y\in ]y_2, y_3]$ ($\mathcal{R}_3$, the second convex region), and $y\in ]y_3, +\infty[$ ($\mathcal{R}_4$, the second concave region). This leads to 10 combinations for the location of $l_b$ and $u_b$.

The first four are straightforward: if $l_b \in \mathcal{R}_i$ and $u_b \in \mathcal{R}_i$ for $i\in\{1,\dots,4\}$, then we use exactly the same approximations as for $\sigma$ and $\sigma''$, varying only based on the convexity of $\mathcal{R}_i$. Similarly, if $l_b \in \mathcal{R}_i$ and $u_b \in \mathcal{R}_{i+1}$ for $i\in\{1,2,3\}$, then we are also in the same situation as the adjacent regions of different convexity from $\sigma'$, so we use exactly the same relaxation.

We are left with three cases where $l_b$ and $u_b$ are in non-adjacent regions. For $l_b \in \mathcal{R}_1$ and $u_b \in \mathcal{R}_3$, we are in the same scenario as in the bounding of $\sigma'$, since $\mathcal{R}_1$ and $\mathcal{R}_3$ are convex regions separated by a concave one. In that case we follow the bounding procedure outlined before for $\sigma'$ - see Figure \ref{fig:tanh_second_derivative_1} for an example of it applied in this setting. For the case where $l_b \in \mathcal{R}_2$ and $u_b \in \mathcal{R}_4$, we are in an analogous case where $\mathcal{R}_2$ and $\mathcal{R}_4$ are concave regions separated by a convex one. As such, we consider the two valid lower bounds computed previously for $l_b \in \mathcal{R}_2$ and $u_b \in \mathcal{R}_3$, and $l_b \in \mathcal{R}_3$ and $u_b \in \mathcal{R}_4$. 
\begin{wraptable}[12]{r}{0.5\textwidth}
    \caption{\textbf{Ablation on our relaxations for the derivatives of $\tanh$}: comparison of the residual upper bounds on Burgers' equation obtained using our relaxations in \partialcrown vs using a simple baseline which takes the minimum area in the convex/concave regions and a constant value elsewhere with a time limit of $10^4$s.}
    \label{tab:tanh_relaxation_efficiency}
    \vspace{-1.5em}
    \center
    {
        \footnotesize
        \begin{tabular}{>{\centering\arraybackslash}m{0.14\linewidth}>{\centering\arraybackslash}m{0.35\linewidth}>{\centering\arraybackslash}>{\centering\arraybackslash}m{0.35\linewidth}}
        & \textbf{Our relaxations $u_b$}    & \textbf{Simple baseline $u_b$}\\\midrule
        $|\resnet(\x)|^2$ & $1.30 \times 10^1$ & $4.34 \times 10^2$\\
        \bottomrule
        \end{tabular}
    }
\end{wraptable}
To obtain these bounds, we shift the upper bound in the first case to $\argmin \sigma''(y)$, and the lower bound in the same case to the same value (see $h^L$ in Figure \ref{fig:tanh_second_derivative_2}). As our bounding requires a single $h^L$, we take a convex combination of the two bounds, $h^{L,\alpha}$. For the upper bound, we simply assume $l_b$ is in a concave region while $u_b$ is in a convex region, and take the tangent at $d$ for $\argmax \sigma''(y) \geq d\leq 0$ (see $h^U$ in Figure \ref{fig:tanh_second_derivative_2}). Finally, we are left with the case where $l_b \in \mathcal{R}_1$ and $u_b \in \mathcal{R}_4$. In that case, we take the upper bound lines from the case where $l_b \in \mathcal{R}_1$ and $u_b \in \mathcal{R}_3$, and the lower bound ones from where $l_b \in \mathcal{R}_2$ and $u_b \in \mathcal{R}_4$. As before, given the requirement of one lower and upper bound functions, we take a convex combination of both in $h^{L,\alpha}$ and $h^{U,\alpha}$, respectively. See Figure \ref{fig:tanh_second_derivative_3} for a visual representation.

\subsection{Ablation on $\sigma'$ and $\sigma''$ relaxations for $\tanh$}
\label{app:tanh_relaxations_ablation}

To understand the effectiveness of proposed the proposed relaxations for $\sigma''$ and $\sigma''$ for the case of $\tanh$, we compare the performance of \partialcrown in bounding the residual of Burgers’ equation (with the fixed time limit of $10^4$s from Section \ref{sec:efficiency-experiments}), using \textbf{Our relaxations} for $\sigma'$ and $\sigma''$, as well as a \textbf{Simple baseline} which takes the minimum area in the convex/concave sections and a constant elsewhere. For $\tanh$, both use the same relaxation from \citet{zhang2018efficient}. The comparison results are presented in the Table \ref{tab:tanh_relaxation_efficiency}. The tightness difference showcases the efficacy of our proposed nonlinearity relaxations for $\sigma'$ and $\sigma''$ for $\tanh$ activations.

\section{Linear lower and upper bounding nonlinear functions}

Throughout, we assume the function's input ($x$) is lower bounded by $l_b$ and upper bounded by $u_b$ (\ie,  $l_b \leq x \leq u_b$), and define the upper bound line as $h^U(x) = \alpha^U(x + \beta^U)$, and the lower bound line as $h^L(x) = \alpha^L(x + \beta^L)$. For the sake of brevity, we define for a function $h: \mathbb{R} \to \mathbb{R}$, and points $p, d \in \mathbb{R}$ the function $\tau(h, p, d) = \nicefrac{(h(p) - h(d))}{(p - d)} - h'(d)$. This is useful as for a given $h$ and $p$, if there exists a $d\in[d_l, d_u]$, such that $\tau_{d_l, d_u}(h, p, d) = 0$, then $h'(d)$ is the slope of a tangent line to $h$ that passes through $p$ and $d$.

\subsection{Case study: \texorpdfstring{$-\text{sin}(\pi x)$}{-sin(pi x)} for \texorpdfstring{$x\in [-1, 1]$}{x in [-1, 1]}}
\label{app:sin_relaxation}

\begin{figure*}
    \begin{subfigure}[b]{0.32\textwidth}
        \centering
        \includegraphics[width=\textwidth]{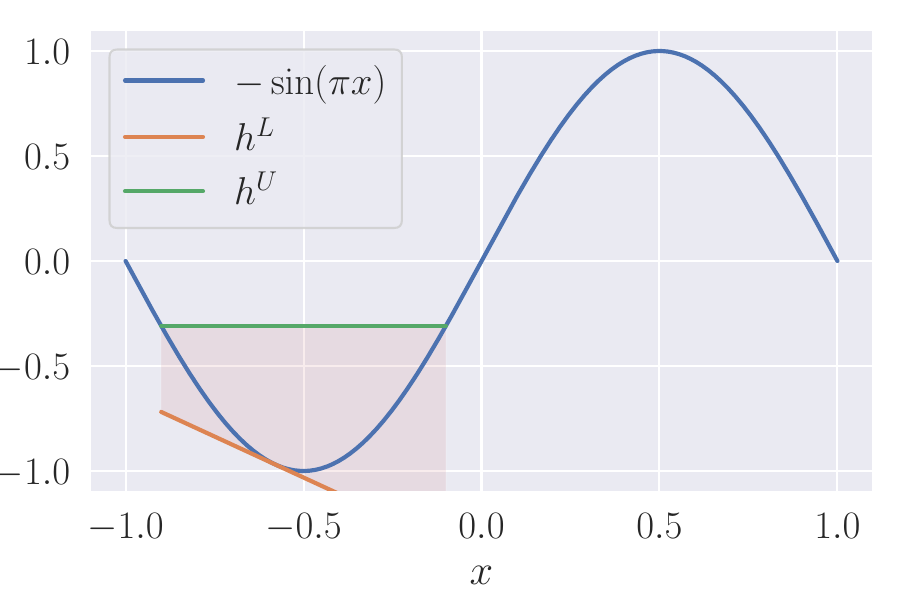}
        \caption{$l_b \leq 0$ and $u_b \leq 0$}
        \label{fig:sin_1}
    \end{subfigure}
    \hfill
    \begin{subfigure}[b]{0.32\textwidth}
        \centering
        \includegraphics[width=\textwidth]{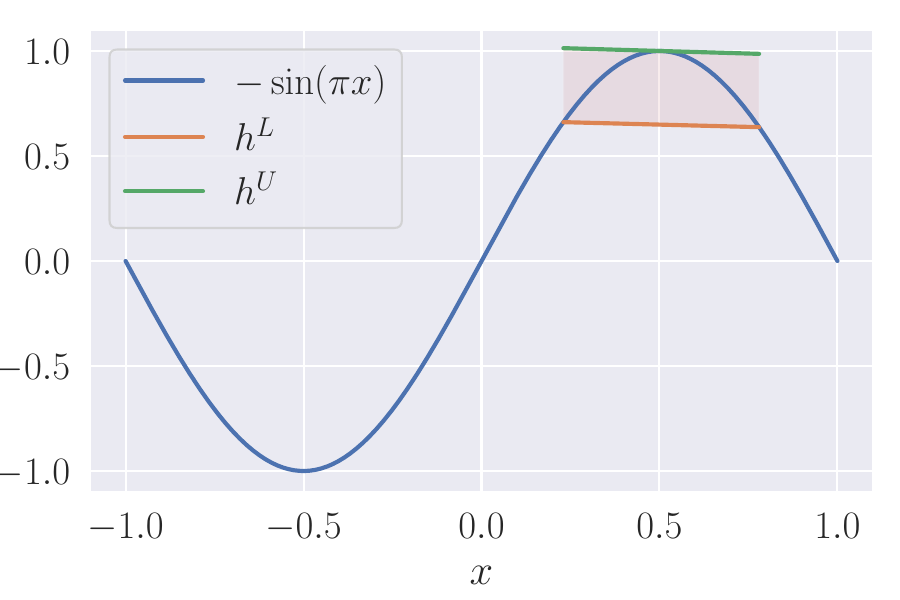}
        \caption{$l_b \geq 0$ and $u_b \geq 0$}
        \label{fig:sin_2}
    \end{subfigure}
    \hfill
    \begin{subfigure}[b]{0.32\textwidth}
        \centering
        \includegraphics[width=\textwidth]{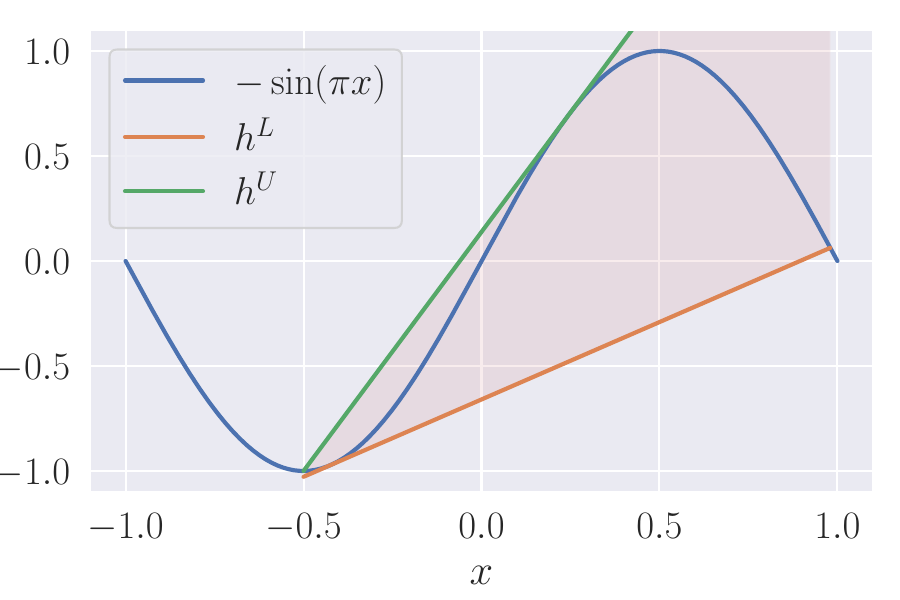}
        \caption{$l_b \leq 0$ and $u_b \geq 0$}
        \label{fig:sin_3}
    \end{subfigure}
    \caption{\textbf{Relaxing $-\sin(\pi x)$}: examples of the linear relaxations for different sets of $l_b$ and $u_b$.}
    \label{fig:sin_relaxation}
\end{figure*}

\begin{figure*}
    \begin{subfigure}[b]{0.32\textwidth}
        \centering
        \includegraphics[width=\textwidth]{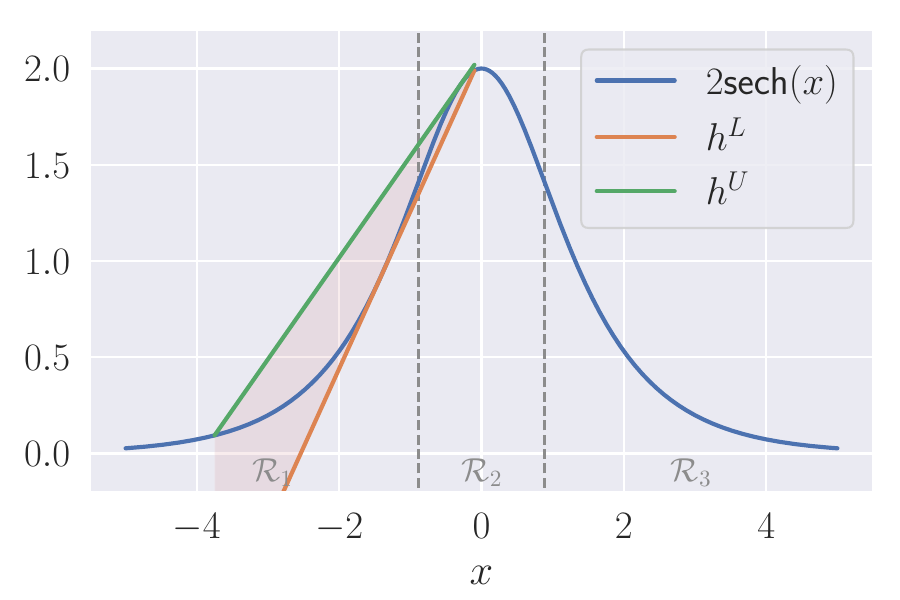}
        \caption{$l_b \leq 0$ and $u_b \leq 0$}
        \label{fig:sech_1}
    \end{subfigure}
    \hfill
    \begin{subfigure}[b]{0.32\textwidth}
        \centering
        \includegraphics[width=\textwidth]{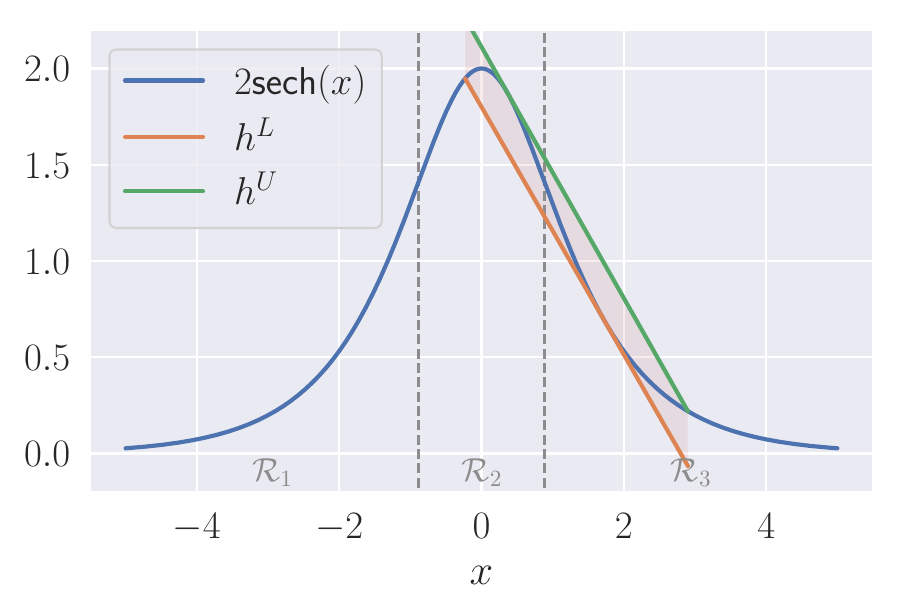}
        \caption{$l_b \geq 0$ and $u_b \geq 0$}
        \label{fig:sech_2}
    \end{subfigure}
    \hfill
    \begin{subfigure}[b]{0.32\textwidth}
        \centering
        \includegraphics[width=\textwidth]{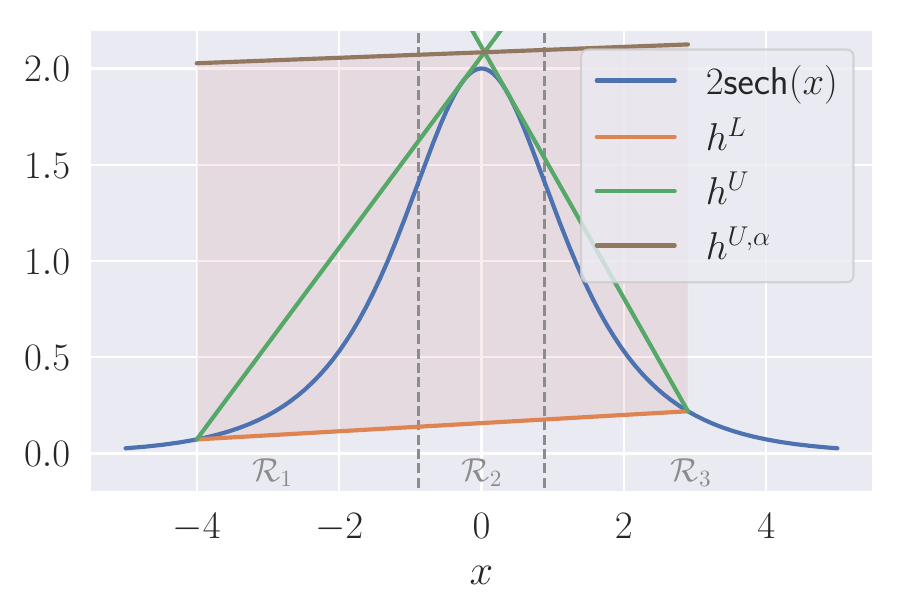}
        \caption{$l_b \leq 0$ and $u_b \geq 0$}
        \label{fig:sech_3}
    \end{subfigure}
    \caption{\textbf{Relaxing $2\sech(x)$}: examples of the linear relaxations for different sets of $l_b$ and $u_b$.}
    \label{fig:sech_relaxation}
\end{figure*}

As in Appendix~\ref{app:tanh_relaxations}, we observe the convexity of the function $-\sin(\pi x)$ for $x\in[-1, 1]$, noticing that the function is convex for $x \leq 0$ and concave for $x \geq 0$. For $l_b \leq u_b \leq 0$ we let $h^U$ be the line that connects $l_b$ and $u_b$, and for an arbitrary $d\in[l_b, u_b]$ we let $h^L$ be the tangent line at that point. Similarly, for $0 \leq l_b \leq u_b$ we let $h^L$ be the line that connects $l_b$ and $u_b$, and for an arbitrary $d\in[l_b, u_b]$ we let $h^U$ be the tangent line at that point. For the last case where $l_b \leq 0 \leq u_b$, we let $h^U$ be the tangent line at $d_1 \geq 0$ that passes through $(l_b, \sigma(l_b))$, and $h^L$ be the tangent line at $d_2 \leq 0$ that passes through $(u_b, \sigma(u_b))$. Given the similarity of to the $\tanh$ bounds from~\citet{zhang2018efficient}, we omit a summary table, but present 3 examples of the possible cases in Figure~\ref{fig:sin_relaxation}.

\subsection{Case study: \texorpdfstring{$2\text{sech}(x)$}{2 sech(x)} for \texorpdfstring{$x\in [-5, 5]$}{x in [-5, 5]}}
\label{app:sech_relaxation}

We start by observing that the function $2\sech(x)$ is similar to the derivative of $\tanh$, whose relaxation we presented in Appendix~\ref{app:tanh_relaxations}. By inspecting it's derivative, $f'(x) = 2\sech(x)\tanh(x)$, we conclude that there are two inflection points at $x_1 = \max f'(x)$ and $x_2 = \min f'(x)$, leading to three different regions: $x \in ]-\infty, x_1]$ ($\mathcal{R}_1$, the first convex region), $x\in ]x_1, x_2]$ ($\mathcal{R}_2$, the concave region), and $x\in ]x_2, +\infty[$ ($\mathcal{R}_3$, the second convex region). As a result, there are 6 combinations for the location of $l_b$ and $u_b$ which must be resolved. This is exactly the same case as the first derivative of $\tanh$, simply with $x_1$ and $x_2$ instead of $y_1$ and $y_2$. Due to the similarities, we can use exactly the same relaxations as presented in Table~\ref{tab:sigma_prime_relaxations}. We present visual examples of 3 cases of this relaxation in Figure~\ref{fig:sech_relaxation}.

\section{Further details on Greedy Input Branching}
\label{app:greedy-branching-details}

In Section \ref{sec:input_branching} we motivated and described at a high-level greedy input branching. In the following we provide a step-by-step analysis of Algorithm \ref{alg:greedy_input_branching}. 

We start by initializing a lower and upper bound list of pairs $\mathcal{B}$ (line 3) as well as a list for storing the maximum error between the empirical and certified bounds $\mathcal{B}_\Delta$ (line 4). To initialize them (line 7 and 8), we first compute the empirical lower and upper bounds across the domain by sampling $N_s$ points within the full domain $\mathcal{C}$ using \textsc{Sample}$(\mathcal{C}, N_s)$ and evaluating the function $h$ there (line 5) yielding $\hat{h}_{lb}$ and $\hat{h}_{ub}$, as well as the first version of the certified lower and upper bounds using \partialcrown on $h$ (line 6) yielding $h_{lb}$, $h_{ub}$. Next, we pop from $\mathcal{B}$ and $\mathcal{B}_\Delta$ as $C_i$ the interval which has the maximum error between the empirical and certified bounds (line 10), which we then proceed to split into $N_d$ parts following a policy defined by \textsc{DomainSplit} (line 11). Importantly, \textsc{DomainSplit} must be complete, i.e., it must be that $C_i = \cup \,C’$. For each of those split subdomains $\mathcal{C}’$ we compute new bounds using \partialcrown (line 12) and add this subdomain along with its bounds and error to the empirical estimates to $\mathcal{B}$ and $\mathcal{B}_\Delta$, respectively (line 13 and 14). This process is repeated using the updated lists until the branching budget is spent, at which point the global lower bound is the minimum of all of lower bounds in $\mathcal{B}$ (defined as the list $\mathcal{B}_0$), and the global upper bound is the maximum of all upper bounds in $\mathcal{B}$ (defined as the list $\mathcal{B}_1$). These are computed in line 17. This algorithm is greedy as increasing the branching budget is expected to improve the bounds, since \partialcrown’s bounds are guaranteed to monotonically decrease with smaller input domains.

\section{On Extending \partialcrown to higher-order PDEs}
\label{app:higher-order}

In this section we explore the potential of applying \partialcrown to higher-order PDEs. We divide the analysis into the theory and experimental challenges, and how these could be mitigated.

\paragraph{Theory.} For the purposes of this paper, we only derive first and second partial derivative bounds, yet there is nothing that theoretically limits our method to second-order PDEs. The extension of the theory to third-order PDEs is relatively straightforward, consisting of applying the chain rule to Lemma \ref{lem:second_derivative}, and following the same backward-forward mechanism in the proof of Theorem \ref{thm:second_derivative_bounds} (Appendix \ref{app:proof_bounding_second_derivative}). We acknowledge that extending it to higher order PDEs leads to a growing computational graph, which can be more difficult to derive.

\paragraph{Experiments.} It is likely that the obtained bounds with extensions of \partialcrown to higher-order PDEs will be looser due to the growth of the computational graph. However, it is possible to mitigate these issues by designing (i) tighter nonlinearity relaxations and (ii) more efficient branching methods than our greedy branching one. 

We perform a qualitative analysis of the greedy branching strategy in Section \ref{sec:importance-branching}, and present in Table \ref{tab:branching_comparison} the \partialcrown $u_b$ difference between using a simple uniform strategy and our greedy input branching in Burgers’ equation given a fixed number number of branchings that leads to approximately the same runtime for both methods ($10^4$s). This highlights the importance of input branching in achieving tight bounds in second-order PINNs. With more efficient branching strategies, such as asymmetrical branching using sampling or through learning following a similar idea to \citet{balcan2018learning}, these could be significantly improved and applied to higher-order PINNs.

\begin{table}[t]
\centering
\caption{\textbf{Efficiency of Greedy Input Branching}: comparing greedy input branching to uniform branching in Burgers' equation given an approximate runtime limit of $10^4$s in both cases.}
\label{tab:branching_comparison}
\begin{tabular}{lcc}
   & \multicolumn{1}{c}{Uniform branching ($N_b=1.6\times 10^5$)} & \multicolumn{1}{c}{Greedy input branching ($N_b=1.3 \times 10^5$)} \\ \midrule
    $|f_\theta(x,t)|^2$ & $1.51 \times 10^2$  & $1.30 \times 10^1$ \\ \bottomrule
\end{tabular}
\end{table}

\end{document}